\documentclass[twoside,11pt]{article}

\usepackage{jmlr2e}
\usepackage{amsmath}
\usepackage{subcaption}
\usepackage{multirow}
\usepackage{multicol}
\usepackage{array}
\usepackage{graphicx}
\usepackage{booktabs}

\hypersetup{colorlinks,breaklinks,
             linkcolor=blue,urlcolor=blue,
             anchorcolor=blue,citecolor=blue}

\newcommand{\blue}{\color{blue}}
\newcommand{\nc}{\normalcolor}
\newcommand{\tcb}[1]{\textcolor{blue}{#1}}
\usepackage{color}
\newcommand{\vv}{p}
\newcommand{\V}{P}
\newcommand{\tu}{\tau}
\newcommand{\zrd}{\sigma}
\newcommand{\eps}{\varepsilon}
\newcommand{\bfu}{\mathbf{u}}
\newcommand{\x}{\mathbf{x}}
\newcommand{\z}{\mathbf{z}}

\renewcommand{\l}[1]{{\ell^{#1}(\X)}}

\newcommand{\lx}[1]{{\ell^{#1}(\X^2)}}
\newcommand{\ipg}[1]{\left\langle #1\right\rangle_{\l2}}

\newcommand{\ipv}[1]{\left\langle #1\right\rangle_{\lx2}}

\renewcommand{\ng}[2]{\left\| #2\right\|_{\l{#1}}}
\newcommand{\nv}[1]{\left\| #1\right\|_{\lx2}}

\renewcommand{\L}{\mathcal{L}}
\renewcommand{\u}{\mathbf{u}}
\renewcommand{\v}{\mathbf{v}}
\newcommand{\X}{{\mathcal X}}
\newcommand{\N}{\mathbb{N}}
\newcommand{\U}{\mathcal{U}}
\newcommand{\R}{\mathbb{R}}
\let\div\relax
\DeclareMathOperator{\div}{div}

\newcommand{\Loss}{\mathcal{J}}
\newcommand{\kth}{$k^{\rm th}$}

\DeclareMathOperator*{\argmax}{\arg\!\max}


\usepackage{lastpage}
\jmlrheading{25}{2024}{1-\pageref{LastPage}}{12/24; Revised 12/25}{TBD}{21-0000}{Jason Brown, Bohan Chen, Harris Hardiman-Mostow, Jeff Calder, Andrea L. Bertozzi}


\ShortHeadings{GLL: A Differentiable Graph Learning Layer}{Brown$^{*}$, Chen$^{*}$, Hardiman-Mostow$^{*}$, Calder, and Bertozzi}
\firstpageno{1}

\begin{document}

\title{GLL: A Differentiable Graph Learning Layer for Neural Networks}

\author{\name Jason Brown\thanks{Equal Contribution} \email jasbrown@g.ucla.edu \\
       \addr Department of Mathematics \\
       University of California, Los Angeles \\
       Los Angeles, CA 90095, USA
       \AND
       \name Bohan Chen$^*$ \email bhchen@caltech.edu \\
       \addr Computing + Mathematical Sciences (CMS) Department\\
       California Institute of Technology \\
       Pasadena, CA 91125, USA
       \AND
       \name Harris Hardiman-Mostow$^*$ \email hhm@math.ucla.edu \\
       \addr Department of Mathematics\\
       University of California, Los Angeles\\
       Los Angeles, CA 90095, USA
       \AND
       \name Jeff Calder \email jwcalder@umn.edu \\
       \addr School of Mathematics\\
       University of Minnesota\\
       Minneapolis, MN 55455, USA
       \AND
       \name Andrea L. Bertozzi \email bertozzi@math.ucla.edu \\
       \addr Department of Mathematics\\
       University of California, Los Angeles\\
       Los Angeles, CA 90095, USA}

\editor{TBD}

\maketitle

\begin{abstract}
Standard deep learning architectures used for classification generate label predictions with a projection head and softmax activation function. Although successful, these methods fail to leverage any relational information between samples for generating label predictions. In recent works, graph-based learning techniques, namely Laplace learning (\cite{zhu2003semi}) or \textit{label propagation}, have been heuristically combined with neural networks for both supervised and semi-supervised learning (SSL) tasks. However, prior works approximate the gradient of the loss function with respect to the graph learning algorithm or decouple the processes; end-to-end integration with neural networks is not achieved. In this work, we derive backpropagation equations, via the adjoint method, for the inclusion of a general family of graph learning layers into a neural network.
\tcb{The resulting method, distinct from graph neural networks, allows us to precisely integrate similarity graph construction and graph Laplacian-based label propagation into a neural network layer, replacing a projection head and softmax activation function for general classification tasks}. Our experimental results demonstrate smooth label transitions across data, improved generalization and robustness to adversarial attacks, and improved training dynamics compared to a standard softmax-based approach\footnote{Our code is available at \url{https://github.com/jwcalder/GraphLearningLayer}}.
\end{abstract}
\begin{keywords}
 Graph learning, deep learning, Laplace learning, label propagation, adversarial robustness
\end{keywords}

\section{Introduction}

Graph-based learning is a powerful class of transductive machine learning techniques that leverage 
graph structures in data for tasks such as semi-supervised learning, data visualization, and dimension reduction. Beginning with the seminal work by \cite{zhu2003semi} on semi-supervised learning, graph-based learning techniques have since proliferated \citep{belkin2004semi,belkin2004regularization, bengio2006label,zhou2011semi,wang2013dynamic,ham2005semisupervised,lee2013graph,calder2020poisson,calder2020properly,shi2017weighted} and found applications in many problem areas.
 In some settings, graphs are intrinsically present in a data set: links between servers in a network or between users on social media platforms are common in network science, while other problems may represent data as graphs themselves, such as molecules in drug discovery.
 In general, a \emph{similarity graph} can be constructed over a data set by comparing pairs of data points using any available notion of similarity.
 In this paper, we focus on graph-based learning using similarity graphs; that is, we do not assume the graph is given as part of the metadata in the problem. Thus, our work is orthogonal and complementary to much of the graph neural network literature, where the graph is \emph{a priori} fixed.

Similarity graphs and corresponding graph techniques have been extensively studied theoretically using variational analysis, random walks, harmonic functions, and partial differential equations (see e.g., \cite{calder_consistency_2019, calder2022improved, calder2020properly, calder2018game, slepcev2019analysis,dunlop2020large,garcia2020error,hein2007graph,hein2005graphs,calder2022hamilton, calder2023rates}). Graph learning also contrasts with modern deep learning approaches in that it is a data-dependent model without any trainable parameters. Previous works \citep{chapman2023novel,chen2023batch,chen2023graphigarss} show that graph learning outperforms many popular classifiers like support vector machine (\cite{cortes1995support}), random forest (\cite{ho1995random}), and neural networks on classification tasks, especially at very low label rates.



Recently, many works have focused on combining graph-based learning with deep learning. In such works, unsupervised or semi-supervised neural networks learn an embedding of the data into a feature space and use the similarity between features to construct the graph. Examples of such feature extraction networks include variational autoencoders (VAEs) (\cite{kingma2013auto,calder2020poisson, calder2022hamilton}), convolutional neural networks (CNNs) (\cite{lecun1989backpropagation, enwright2023deep}), convolutional VAEs (\cite{pu2016variational,miller2022graph,chapman2023novel}), and contrastive learning (\cite{chen2020simple,brown2023utilizing, brown2023material}). Some works have used graph learning to generate pseudolabels to train a deep neural network in low label rate settings (\cite{sellars2021laplacenet,iscen2019label}). Of particular relevance to our work is a recent approach that replaces the softmax activation layer with graph learning to improve generalization and adversarial robustness (\cite{wang2021graph}). \tcb{However, \citet{wang2021graph} do not compute the true gradients with respect to the loss, due to difficulties in backpropagating through graph learning, and instead replace them with a heuristic. Our work was partially motivated by the issues raised by \cite{wang2021graph}. }


A significant gap exists in the literature as no work fully integrates graph learning into the gradient-based learning framework of neural networks.  
In this paper, we leverage the adjoint graph Laplace equation to derive the exact backpropagation gradients for similarity graph construction and a general family of graph learning methods, and show how to implement them efficiently.  
This significantly expands upon the previous work by \cite{enwright2023deep}, where the backpropagation equations were only derived for Laplace learning on a fully connected graph. The backpropagation equations allow us to design a neural network layer that replaces the standard projection head and softmax classifier with a graph learning classifier (see Figure~\ref{fig:flowchart} for a comparison). We call this novel layer the \emph{Graph Learning Layer} (GLL). A GLL (Figure~\ref{fig:flowchart_jason}) can be combined with \emph{any} neural network architecture and enables end-to-end training in a gradient-based optimization framework.
Through a variety of experiments, we show that the GLL learns the intrinsic geometry of data, improves the training dynamics and test accuracy of both shallow and deep networks across levels of supervision, and is significantly more robust to adversarial attacks compared to a projection head and softmax classifier. 


\subsection{Related Work}

Graph-based semi-supervised learning (SSL) methods construct a graph that allows inferences of unlabeled nodes using labeled nodes and the graph topology. The seminal work of \textit{Laplace learning} (\cite{zhu2003semi}), also known as \textit{label propagation}, led to a proliferation of related techniques \citep{belkin2004semi,belkin2004regularization, bengio2006label,zhou2011semi,wang2013dynamic,ham2005semisupervised,lee2013graph,garcia2014multiclass,calder2020properly,shi2017weighted}. Laplace learning computes a graph harmonic function to extend the given labels to the remainder of the graph, essentially finding the smoothest function that agrees with the given labels.  When the label rate is low (close to one label per class), Laplace learning degenerates and suffers from a ``spiking'' phenomenon around labeled points, while predictions are roughly constant (\cite{calder2020poisson,el2016asymptotic,nadler2009infiniteunlabelled}). Several methods have been proposed for these very low label rate regimes, including reweighted Laplace learning (\cite{shi2017weighted, calder2020properly, miller2023poisson}), $p$-Laplace methods (\cite{zhou2011semi,el2016asymptotic,flores2022analysis,slepcev2019analysis}), and Poisson learning (\cite{calder2020poisson}). Analysis of the degeneracy of Laplace learning from a random walk and variational perspective can be found in (\cite{calder2020poisson,calder2023rates}). Some recent work has also focused on graph total variation problems using graph cuts (\cite{bertozzi2016diffuse, merkurjev2017modified, merkurjev2018semi}).

Another related, but distinct, area of work in machine learning on graphs is graph neural networks, or GNNs \citep{kipf2016semi,defferrard2016convolutional,wu2020comprehensive}. The GNN literature considers how to incorporate \emph{a priori} graph information relevant to the task, such as co-authorship in a citation graph, within a neural network architecture for various learning tasks, such as learning node embeddings or semi-supervised learning. \blue The setting and problem space is quite different from ours. \nc
In particular, we do not assume the data has an inherent graph structure. Moreover, GNNs contain trainable weight matrices, while the graph-based learning methods we consider here have no trainable parameters. There are some works, such as graph attention networks \citep{velivckovic2017graph}, which learn how to assign weights to a graph, but the graph adjacency structure is still fixed \emph{a priori} through masking in the attention mechanism.

Previous work by \cite{agrawal2019differentiable} studied backpropagation techniques for the differentiation of convex optimization problems. Our paper extends this work to nonlinear equations on graphs involving graph Laplacians. 

Deep learning methods have increasingly leveraged graph-based approaches or drawn inspiration from them.
\tcb{\cite{chen2024survey} provides a broad overview of applications and modalities of graph structures and neural networks in computer vision.
}
Two recent works propagate labels on a graph to generate pseudolabels in a semi-supervised setting (\cite{iscen2019label,sellars2021laplacenet}). 
\tcb{
Other works have applied transformers directly to a generated similarity graph for node classification (\cite{wu2022nodeformer}, \cite{han2022vision}, \cite{zheng2022graph}). 
}
However, in all of these works, the graph construction and label propagation process is disconnected from the neural network and is not directly integrated into the backpropagation or learning process. 
Contrastive learning (CL) uses pair-wise similarity, akin to a complete similarity graph, to facilitate the learning process (\cite{chen2020big,chen2020simple,khosla2020supervised,wang2023message}). \cite{wang2021graph} utilizes graph learning as a classifier within a neural network, but they approximate the backpropagation gradients with a linear layer instead of computing them exactly. 

\begin{figure}[t]
\centering
\begin{subfigure}{\textwidth}
  \centering
  \includegraphics[trim=10pt 85pt 30pt 110pt,clip,height=5cm]{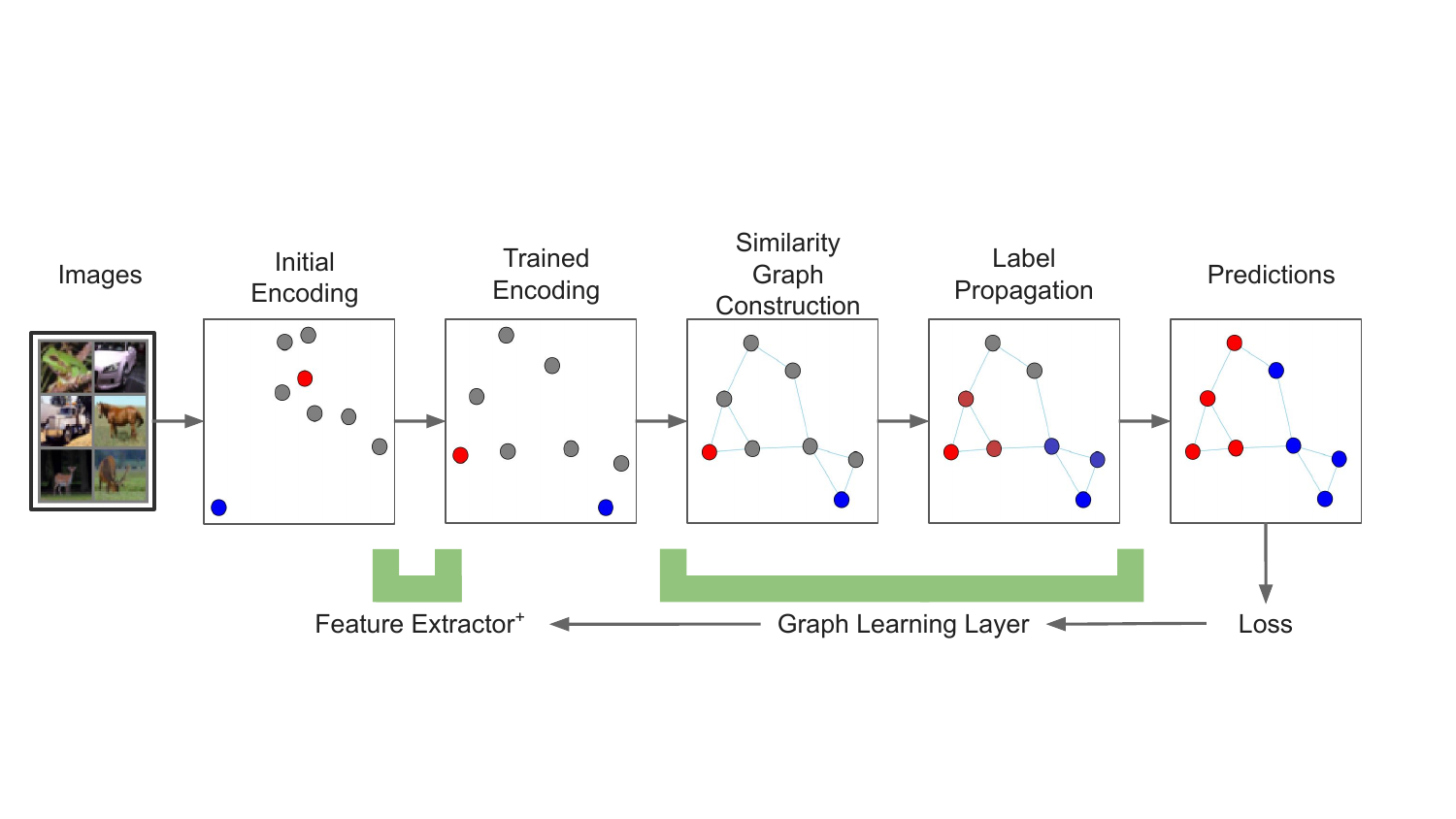}
  \caption{Graph Learning Layer}
\end{subfigure}
\begin{subfigure}{\textwidth}
  \centering
  \includegraphics[trim=10pt 100pt 150pt 100pt,clip,height=5cm]{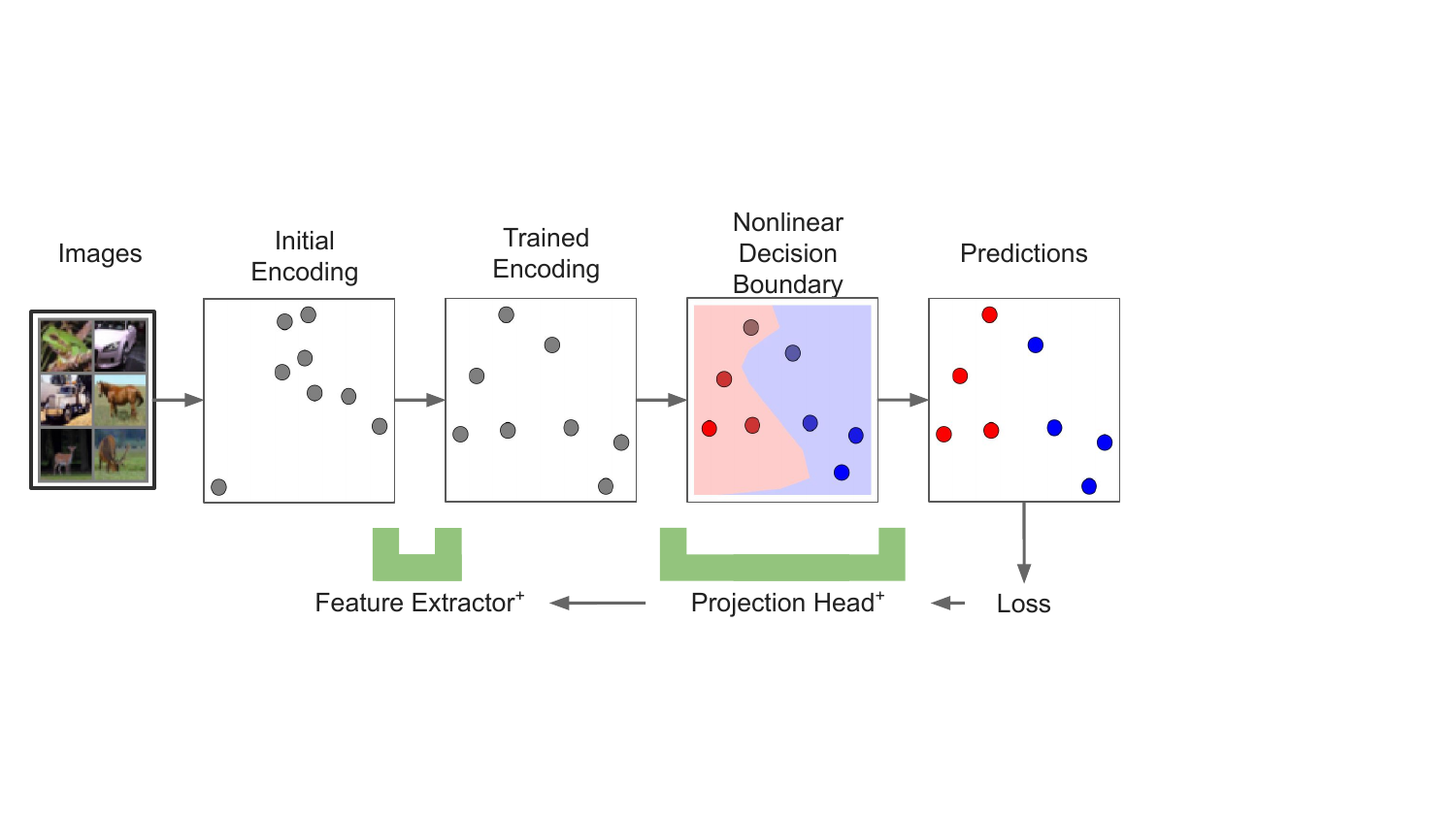} 
  \caption{Standard MLP projection head}
\end{subfigure}
\caption{\tcb{Visualization of the graph learning layer (GLL) within a neural network pipeline compared to a standard multilayer perceptron (MLP) projection head. For the GLL, a combination of labeled and unlabeled input images are batched and normalized into data for the feature extractor. The feature network encodes the images in feature space. The GLL is a combination of two steps; it generates a similarity graph from the encoded data and then propagates the labels across the graph, giving predictions on the unlabeled data. These predictions are used in the desired loss function and gradients can flow through the GLL to update the feature extractor. By contrast, the MLP projection head pipeline does not involve any initially labeled nodes and the classification method ignores relational information in the encodings. The \(^+\) indicates learnable parameters in the network; note GLL has none.}}
\label{fig:flowchart_jason}
\end{figure}

\begin{figure}[t]
\centering
\begin{subfigure}{\textwidth}
  \centering
  \includegraphics[width=0.7\textwidth]{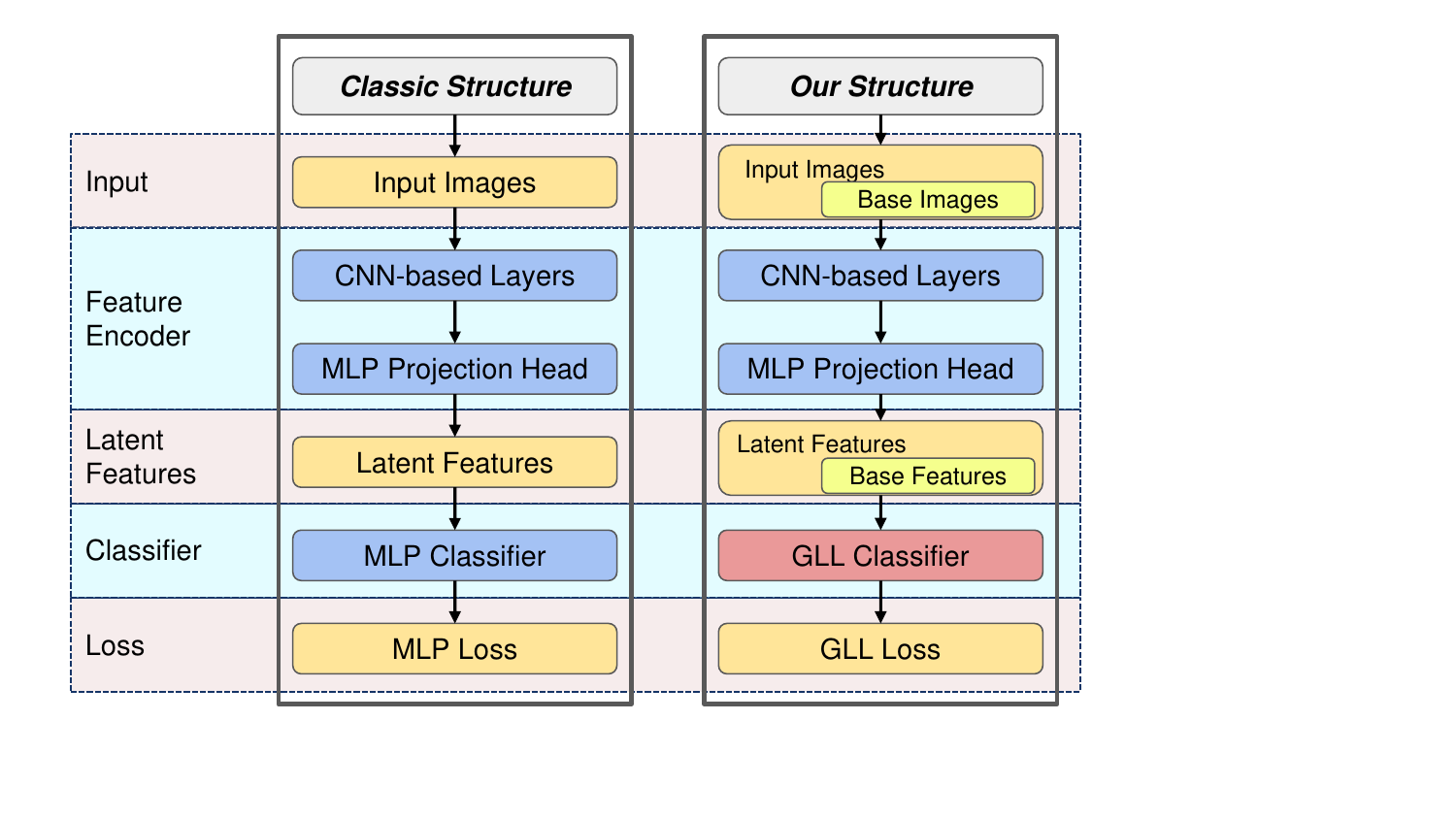}
\end{subfigure}
\\[1em]  
\begin{subfigure}{\textwidth}
  \centering
  \includegraphics[width=0.7\textwidth]{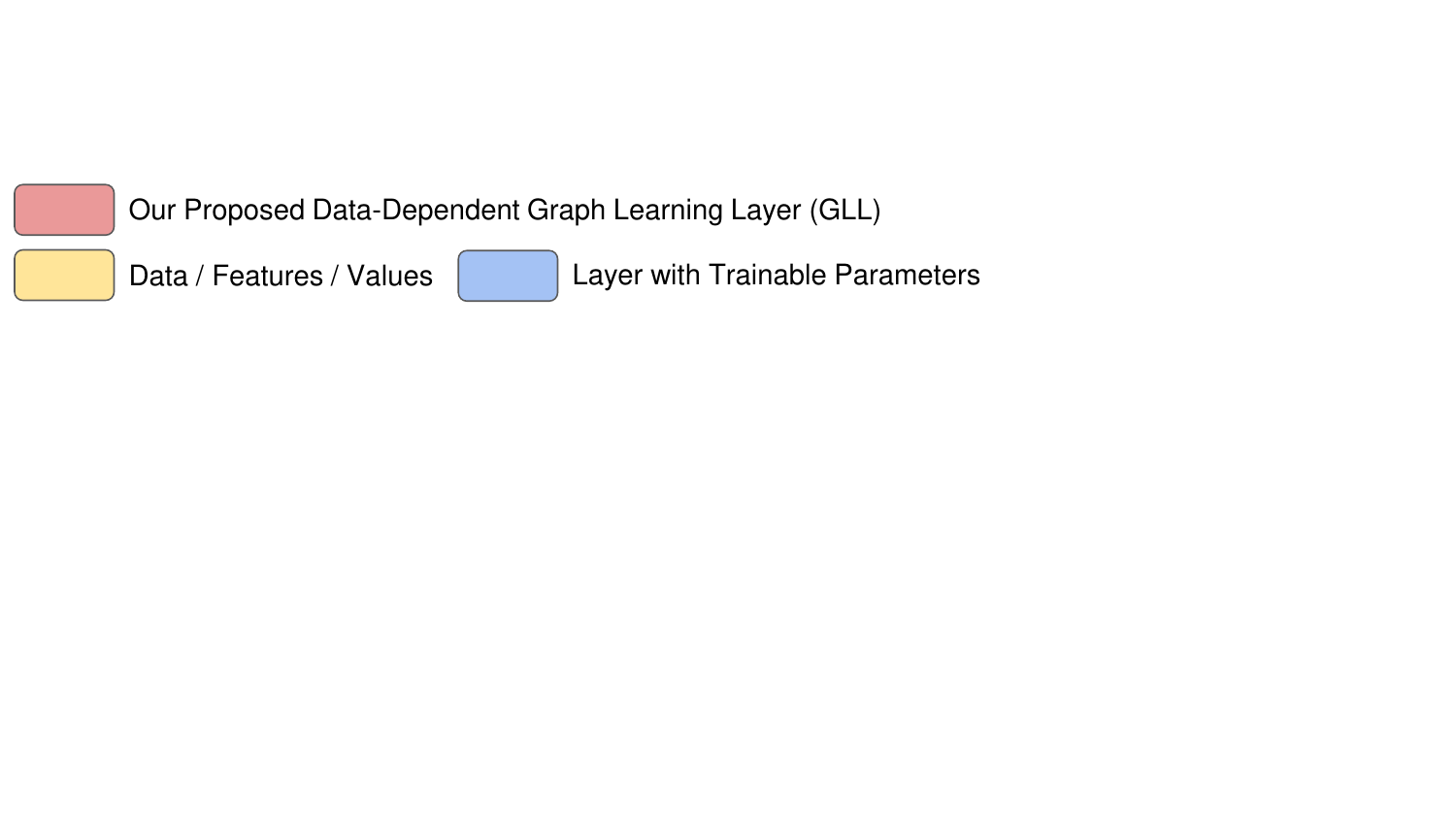}
\end{subfigure}
\caption{Comparison between the classic structure and our proposed graph learning layer (GLL) neural network structures for image classification. Both structures consist of a feature encoder and a classifier, but we replace the multilayer perceptron (MLP) classifier - consisting of one or more linear and activation layers and a final softmax classifier - with our data-dependent GLL with no trainable parameters. We reserve a subsample of the training data - denoted \emph{base} samples - to use as the labeled nodes for label propagation. The samples that are not from the base set are use to compute the loss.}
\label{fig:flowchart}
\end{figure}

\subsection{Our Contributions}

The most significant difference between our work and any of the preceding works is the derivation and implementation of the exact backpropagation equations through the Graph Learning Layer, including the similarity graph construction and the graph-based classifier. We also give a sparse, efficient implementation of these equations. 
This is a significant extension of \cite{enwright2023deep}, where the authors derive the backpropagation gradients for Laplace learning in the special case of a fully connected graph. In this work, we derive the gradients for any $k$-nearest neighbors-type graph construction and any nonlinear elliptic graph learning equation. In addition, we conduct detailed experiments on a variety of benchmark datasets and neural network architectures. Our analysis includes visualizing clustering effects in the latent space under different GLL hyperparameters, demonstrating improved training dynamics and generalization, and showcasing improved robustness to adversarial attacks. In the vast majority experiments - especially at low label rates - a GLL outperforms the standard projection head and softmax-based classifier.

The remainder of the paper is organized as follows. Section \ref{sec:background} details the necessary background on Laplace Learning, the seminal graph-based learning algorithm. Section \ref{sec:method} describes our novel mathematical contributions powering the GLL. Namely, we derive the gradient of the loss with respect to the similarity graph weight matrix and the associated $k$-Nearest Neighbor graph construction, which allows us to integrate graph learning into a neural network training context. We also demonstrate the effect this novel GLL has on the learned neural network embeddings. Section \ref{sec:experiments} contains illustrative experiments showcasing the benefits of the GLL, demonstrating improved generalization across network architectures and levels of supervision, and superior robustness to adversarial attacks compared to the softmax classifier.

\section{Background}\label{sec:background}

In this section, we give some background on the construction similarity graphs, calculus on graphs, and graph learning using various types of graph Laplace equations.

\subsection{Similarity Graph Construction}\label{sec:graph_construction}


\blue
Let $\X$ be the vertex set of a graph with $n \in \N$ vertices and nonnegative edge weights $w_{xy} \geq 0$, for $x,y \in \X$. Let $W=(w_{xy})_{x,y\in \X}$ denote the $n\times n$ weight matrix for the graph.
We always assume the weight matrix $W$ is symmetric, so $w_{xy}= w_{yx}$. We usually think of $W$ as a similarity matrix, in which case the entry $w_{xy}$ encodes how similar the nodes $x$ and $y$ are, with large positive values indicating a high degree of similarity, and $w_{xy}=0$ indicating the nodes are not similar. Given we have a feature vector in $\R^d$ for each node $x\in \X$ in the graph, a common way to construct a similarity weight matrix, which we adopt in the numerical experiments in this paper, is a $k$-nearest neighbor graph based on similarity of feature vectors. In this case, we will denote the vertex set of the graph by $\X=\{\x_1,\dots,\x_n\}\in \R^d$, where $\x_i\in \R^d$ is the feature vector for the $i^{\rm th}$ node in the graph. The weight $w_{xy}=w_{\x_i\x_j}=w_{ij}$ between nodes $\x_i$ and $\x_j$ is then given by the expression
\begin{equation}\label{eq:weightform}
w_{ij} = \exp\left(\frac{-4\Vert \x_i - \x_j \Vert^2}{\eps_k(\x_i)\eps_k(\x_j)}\right).
\end{equation}
The parameter $k$ is the number of neighbors used in the graph construction, and we choose the bandwidth parameter $\eps_k(\x_i)$ to be the distance from $\x_i$ to its $k$th nearest neighbor in $\R^d$, which allows the weights to self-tune appropriately to areas of density and sparsity. If desired for applications, \(\eps_k(\x_i)\) may be set to a constant value for all \(\x_i\). To produce a sparse graph, we set the weight $w_{ij}=0$ if $\x_i$ is not among the $k$ nearest neighbors of $\x_j$, and $\x_j$ is not among the $k$ nearest neighbors of $\x_i$. While searching for the exact $k$ nearest neighbors in high dimensional data sets is a computational hard problem, there are many fast approximate nearest neighbor algorithms that can be used in practice.\footnote{We use the Annoy python package: \url{https://github.com/spotify/annoy}} 
While similarity matrices are often used in practice, we note that the theory in this section, and much of Section \ref{sec:method}, does not place any restrictions on $W$ aside from symmetry $w_{xy}=w_{yx}$. \nc

\subsection{Calculus on graphs}\label{sec:graphcalculus}

We let $\l2$ denote the Hilbert space of functions $u:\X\to \R^k$, equipped with the inner product
\begin{equation}\label{eq:graph_inner}
\ipg{u,v} = \sum_{x\in \X} u(x)\cdot v(x),
\end{equation}
and norm  $\ng2{u}^2 = \ipg{u,u}$.  For $p\geq 1$ we also define $p$-norm
\begin{equation}\label{eq:graph_pnorm}
\ng{p}{u}^p = \sum_{x\in \X} \|u(x)\|^p.
\end{equation}
The \emph{degree} is a function $\deg \in \l2$ with $k=1$ defined by
\[\deg(x) = \sum_{y \in \X} w_{xy}.\]

We let $\lx2$ denote the space of functions $\u:\X^2\to \R^k$. A \emph{vector field} on the graph is a function $\u\in \lx2$ that is skew symmetric, so $\u(x,y)=-\u(y,x)$. We use bold face for vector fields to distinguish from functions on the graph. The gradient $\nabla u\in \lx2$ of $u\in \l2$ defined by
\begin{equation}\label{eq:graph_gradient}
\nabla u(x,y) = u(x) - u(y)
\end{equation}
is an example of a vector field over the graph. \blue Note that $\nabla u$ is a vector field, so it is defined over the edges in the graph, i.e., pairs $(x,y)$ of vertices, while $u$ is a function of a single vertex $x$ on the graph. \nc For $\u,\v\in \lx2$ we define an inner product 
\begin{equation}\label{eq:vector_inner}
\ipv{\u,\v} =\frac{1}{2}\sum_{x,y \in \X} w_{xy}\u(x,y)\v(x,y),
\end{equation}
together with a norm $\nv{\u}^2 = \ipv{\u,\u}$. 

The \emph{graph divergence} is an operator taking vector fields to functions in $\l2$, and is defined as the adjoint of the gradient. Here, we  define the divergence for any function $\v\in \lx2$; that is, it may not be a vector field.  Hence, for $\v\in \lx2$, the graph divergence $\div\v\in \l2$ is defined so that the identity
\begin{equation}\label{eq:div_adjoint}
\ipv{\nabla u,\v} = \ipg{u,\div \v}
\end{equation} 
holds for all $u\in \l2$. A straightforward computation shows that
\begin{equation}\label{eq:div_formula_general}
\div \v(x) =\frac{1}{2} \sum_{y\in \X}w_{xy}(\v(x,y) - \v(y,x)).
\end{equation}
If $\v$ is a vector field, then this can be simplified to
\begin{equation}\label{eq:div_formula}
\div \v(x) = \sum_{y\in \X}w_{xy}\v(x,y).
\end{equation}

The gradient is an operator $\nabla:\l2 \to \lx2$, and the divergence is an operator $\div: \lx2\to \l2$. The graph Laplacian is the composition of these two operators $\Delta = \div \circ \nabla$, which can be expressed as 
\begin{equation}\label{eq:graph_Laplacian}
\Delta u(x) = \div \nabla u(x) =\sum_{y\in \X}w_{xy}(u(x) - u(y)).
\end{equation}
By the definition $\Delta = \div \circ \nabla$ we have
\[\ipg{\Delta u,v} = \ipv{\nabla u,\nabla v},\]
for any $u,v\in \l2$. In particular, we have
\[\ipg{\Delta u,u} = \ipv{\nabla u,\nabla u} = \nv{\nabla u}^2 = \frac{1}{2}\sum_{x,y\in \X}w_{xy}(u(x)-u(y))^2,\]
which is the Dirichlet energy on the graph. For a connected graph with $k=1$, the kernel of the graph Laplacian $\Delta$ is $1$-dimensional and spanned by the vector of all ones. In general, the multiplicity of the eigenvalue $0$ indicates the number of connected components in the graph (\cite{von2007tutorial}).

\subsection{Classical Graph-based Learning}\label{sec:classic_gl}

Let $\X = \{\x_1, \x_2, \dots, \x_n\} \in \mathbb{R}^d$ be a set of $d$-dimensional feature vectors, \blue and let $W$ denote a corresponding weight matrix, which may be constructed as a similarity matrix, as discussed in Section \ref{sec:graph_construction}. This endows the data set with a graph structure. \nc Let $\mathcal{L} \subset \X$ be the set of labeled indices that identifies which feature vectors have labels $y_i \in \{1, 2, \dots, k\}$ (for a $k$-class classification problem). \blue The task of graph-based semi-supervised learning is to utilize the graph structure to infer the labels on the unlabeled index set $\mathcal{U} = \X\setminus \L$. \nc The labels $y_i$ are represented by one-hot vectors; that is, the vector $e_{y_i} \in \mathbb{R}^k$ of all zeros, except a 1 in the $y_i^{th}$ entry. \blue As in Section \ref{sec:graphcalculus}, generic elements of $\X$, i.e., graph vertices, are denoted as $x,y\in \X$, and the corresponding entry of the weight matrix $W$ is denoted $w_{xy}$.\nc


\textit{Laplace learning} (\cite{zhu2003semi}), also called \textit{label propagation}, is a seminal method in graph-based learning. The labels are inferred by solving the graph Laplace equation 
\begin{equation}\label{eq:graph_laplace}
\left\{
\begin{aligned}
\Delta u(x) &= 0,&& \text{if } x\in \U \\
u(x) &= g(x),&& \text{if } x\in \L,
\end{aligned}
\right.
\end{equation}
where $g:\L\to \R^k$ encodes the labeled data through their one hot vector representations. The solution $u\in \l2$ is a function $u:\X\to \R^k$ where the components of $u(x)\in \R^k$ can be interpreted as the probability that  $x$ belongs to each class. The label prediction is the class with the highest probability. Equation \eqref{eq:graph_laplace} is a system of $k$ linear equations, one for each class, which are all completely independent of each other. Thus, this formulation of Laplace learning is equivalent to applying the one-vs-rest method for converting a binary classifier to the multiclass setting. 

Laplace learning also has a variational interpretation as the solution to the problem
\begin{equation}\label{eq:laplace_variational}
\min_{u\in \l2}\frac{1}{2}\sum_{x,y\in \X}w_{xy}\|u(x) - u(y)\|^2  = \nv{\nabla u}^2,
\end{equation}
subject to the condition that $u=g$ on the labeled set $\L$. The necessary condition satisfied by any minimizer of \eqref{eq:laplace_variational} is exactly the Laplace equation \eqref{eq:graph_laplace}. Furthermore, as long as the graph is connected\footnote{In fact, a weaker condition that the graph is connected to the labeled data $\L$ is sufficient, which is equivalent to asking that every connected component of the graph has at least one labeled data point.} the solution of \eqref{eq:graph_laplace} exists and is unique, as is the minimizer of \eqref{eq:laplace_variational} (see \cite{calder2018game}).

There are variants of Laplace learning that employ soft constraints on the agreement with given labels, which can be useful when there is noise or corruption of the labels, typical in many real world settings. Fixing a parameter $\lambda>0$, the soft-constrained Laplace learning problem minimizes
\begin{equation}\label{eq:laplace_soft}
\min_{u\in \l2}\left\{\nv{\nabla u}^2 + \lambda\sum_{x\in \L}\|u(x) - g(x)\|^2\right\},
\end{equation}
where $\lambda>0$ tunes the tradeoff between label fidelity and label smoothness.  Provided the graph is connected, the unique minimizer of \eqref{eq:laplace_soft} is the solution of the graph Laplace equation
\begin{equation}\label{eq:graph_laplace_soft}
\Delta u + \lambda \xi (u - g) =0,
\end{equation}
where $\xi(x)=0$ if $x\in \U$ and $\xi(x)=1$ if $x\in \L$. We recover Laplace learning \eqref{eq:graph_laplace} in the limit as $\lambda\to \infty$. 

The graph Laplace equations \eqref{eq:graph_laplace} and \eqref{eq:graph_laplace_soft} are positive definite symmetric linear equations, and can be solved with any number of direct or indirect techniques. When the size of the problem is small, a direct matrix inversion may be used. For larger problems, especially for large sparse systems, an iterative method like the preconditioned conjugate gradient method is preferred. 

There are also a variety of graph-based learning techniques that do not require labeled information, such as the Ginzburg-Landau formulation \citep{garcia2014multiclass}
\begin{equation}\label{eq:ginzburg}
\min_{u\in \l2}\left\{\nv{\nabla u}^2 + \lambda\sum_{x\in \X}G(u(x))\right\},
\end{equation}
where $G$ is a many-well potential, with wells located at the one hot label vectors. For binary classification, the wells may be positioned at $\pm 1$ and $G(u) = (1 - u^2)^2$. The MBO methods pioneered by \cite{merkurjev_mbo_2013,garcia2014multiclass} are effective approximations of the Ginzburg-Landau energy \eqref{eq:ginzburg} when $\lambda \gg 1$. 
This method was first developed in a semi-supervised setting but also used for modularity optimization \cite{Boyd18} and a piecewise constant Mumford-Shah model in an unsupervised setting \cite{Mumfordgraph}.

\subsection{Diagonal Perturbation}\label{sec:tau}

A modification to Laplace learning (Section \ref{sec:classic_gl}) is to add a type of Tikhonov regularization in the energy (see e.g. \cite{miller2023poisson}), leading to the minimization problem 
\begin{equation}\label{eq:laplace_variational_tau}
\min_{u\in \l2}\left\{\nv{\nabla u}^2 + \tau \ng1{u}^2\right\},
\end{equation}
subject to the condition that $u=g$ on the labeled set $\L$. Tikhonov regularization can also be added to the soft constrained  Laplace learning \eqref{eq:laplace_soft}. The minimizer of \eqref{eq:laplace_variational_tau} is the solution of the graph Laplace equation \eqref{eq:graph_laplace} except that the graph Laplacian, $\Delta$, is replaced by the diagonal perturbed Laplacian $\Delta_\tau = \Delta + \tau I$ (likewise for the soft constrained problem \eqref{eq:graph_laplace_soft}). 

This regularization term increases the convergence speed of iterative algorithm by improving the condition number of the graph Laplacian. Moreover, it induces exponential decay away from labeled data points. This inhibits label propagation and weakens predictions further from labeled data in the latent space, which can limit overconfident predictions. In Section \ref{sec:toy}, we show that employing this regularization in the GLL causes denser clustering of similar samples in the latent space learned by the neural network, which may be desireable for classification or clustering tasks.



\subsection{$p$-Laplace learning}

There are also graph-learning methods based on the nonlinear graph $p$-Laplacian (\cite{zhou2011semi,el2016asymptotic,flores2022analysis,slepcev2019analysis,calder2018game}). One such example is the variational $p$-Laplacian, which stems from solving the optimization problem 
\begin{equation}\label{eq:plaplace_variational}
\min_{u\in \l2}\frac{1}{2}\sum_{x,y\in \X}w_{xy}\|u(x) - u(y)\|^p,
\end{equation}
subject to $u=g$ on $\L$. Taking values of $p\gg 2$ was proposed in \citep{el2016asymptotic} as a method for improving Laplace learning in the setting of very few labeled data points, and analysis of the $p$-Laplacian was carried out in \citet{slepcev2019analysis,calder2018game}. The minimizer of the $p$-variational problem \eqref{eq:plaplace_variational} solves the graph $p$-Laplace equation
\begin{equation}\label{eq:}
\left\{
\begin{aligned}
\div (\|\nabla u\|^{p-2} \nabla u)(x) &= 0,&& \text{if } x\in \U \\
u(x) &= g(x),&& \text{if } x\in \L.
\end{aligned}
\right.
\end{equation}
Another form of the $p$-Laplacian---the game theoretic $p$-Laplacian---was proposed and studied in \citep{calder2018game,calder2024consistency}.

\subsection{Poisson learning}

Another recently proposed approach for low label rate graph-based semi-supervised learning is \emph{Poisson learning} \citep{calder2020poisson}. Instead of solving the graph Laplace equation \eqref{eq:graph_laplace}, Poisson learning solves the graph Poisson equation 
\[\Delta u = \sum_{x\in \L}(g - \bar{g})\delta_x\]
where $\bar{g} = \frac{1}{m}\sum_{x\in \L}g(x)$, $m$ is the number of labeled examples in $\L$, and $\delta_x(y)=0$ if $x\neq y$ and $\delta_x(x)=1$. In Poisson learning, the labels are encoded as point sources and sinks in a graph Poisson equation, which improves the performance at very low label rates \citep{calder2020poisson}. Poisson learning also has the variational interpretation
\begin{equation}\label{eq:poisson_variation}
\min_{u\in \l2}\left\{\nv{\nabla u}^2 - \sum_{x\in \L}(g(x) - \bar{g})\cdot u(x)\right\},
\end{equation}
which is similar to the soft constrained Laplace learning problem \eqref{eq:laplace_soft}, except that here we use a dot product fidelity instead of the $\l2$ fidelity.

\section{Automatic differentiation through graph learning}\label{sec:method}



In this section, we detail our main mathematical contributions, deriving the backpropagation equations necessary for incorporating graph-based learning algorithms into gradient-based learning settings, such as deep learning. Our results are general and apply to any nonlinear elliptic equation on a graph.


We assume the following generic pipeline. In the forward pass, the graph learning layer receives feature representations (i.e. the output of a neural network-based encoder) for a batch of data. Then, a k-nearest neighbors search is conducted over the features and we construct a sparse weight matrix and corresponding graph Laplacian, as described in Section \ref{sec:graph_construction}. We then solve a graph Laplace equation, such as Equation \eqref{eq:graph_laplace}, to make label predictions at a set of unlabeled nodes. 

In the backward pass, the GLL receives the upstream gradients coming from a loss function and need to return the gradients of the loss with respect to the neural network's feature representations. The calculations for backpropagation can be divided into two distinct steps: tracking the gradients through the graph Laplace equation with respect to the entries in the weight matrix, which is addressed in Section \ref{sec:lapback}, and then tracking the gradients from the weight matrix to the feature vectors, which is addressed in Section \ref{sec:weightback}. In practice, we combine the implementation of these two terms within the same autodifferentiation function, which allows for more efficient calculations.

\subsection{Backpropagation through graph Laplace equations}\label{sec:lapback}

\blue 

In this section, we present the derivation of the backpropagation equations for computing the gradient of the solution of a graph Laplace equation for label propagation with respect to the ingredients of the equation, including the weight matrix used to construct the graph, and any source terms or boundary conditions that are present in the equation. For simplicity, we restrict our attention to binary classification, since the multi-class setting of any graph learning algorithm is equivalent to the one-vs-rest approach. 

\subsubsection{Linear graph Laplace equations}

In order to elucidate our approach in a simpler setting, we will first present our main results for the linear graph Laplace equation 
\begin{equation}\label{eq:simple_laplace}
\left\{
\begin{aligned}
\tau u(x) + \Delta u(x) &= f(x),&& \text{if } x\in \U \\
u(x) &= g(x),&& \text{if } x\in \L.
\end{aligned}
\right.
\end{equation}
Here, $\tau\geq 0$ is a parameter, $\Delta$ is the graph Laplacian, defined in \eqref{eq:graph_Laplacian}, $g$ is the boundary condition, and $f$ is the source term. Depending on the context, the label information can be encoded into the source term $f$ or the boundary condition $g$.

As discussed previously, we have in mind that a graph PDE like \eqref{eq:simple_laplace} is a layer 
within a neural network that is trained end to end. The inputs to the graph learning layer are any one of the following objects (or all of them):
\begin{itemize}
\item The weight matrix entries $w_{xy}$.
\item The source term $f$.
\item The boundary condition $g$. 
\end{itemize}
In all cases, the output of the graph learning layer is the solution $u\in \l2$, which feeds into the next layer of the network. Let us keep in mind that, while we have used functional analysis notation to illuminate the main ideas more clearly, the inputs and outputs of the block are all vectors in Euclidean space; i.e., a function $u\in \l2$ is simply a vector in $\R^n$ where $n$ is the number of nodes in the graph.  

The graph learning layer is part of an end-to-end network that is fed into a scalar loss function that we denote by $\Loss$. For $u\in \l2$ the solution of \eqref{eq:simple_laplace}, let $\nabla_u\Loss\in \l2$ be the gradient of the loss $\Loss$ with respect to the output $u$ of the graph learning layer, so 
\[\nabla_u \Loss(x) = \partial_{u(x)} \Loss,\]
where $\partial_z$ denotes a scalar partial derivative in the scalar variable $z$.  Backpropagating through a graph learning layer requires taking as input the gradient $\nabla_u \Loss$ and computing, via the chain rule, the gradients
\begin{align}
\nabla_{W} \Loss(x,y) &= \ipg{\partial_{w_{xy}}u,\nabla_u \Loss} \label{eq:gradw_pre} \\
\nabla_{f} \Loss(x) &= \ipg{\partial_{f(x)}u,\nabla_u \Loss} \label{eq:gradf_pre},\ \ \text{and} \\
\nabla_{g} \Loss(x) &= \ipg{\partial_{g(x)}u,\nabla_u \Loss}\label{eq:gradg_pre} ,
\end{align}
for all $x,y\in \X$. The quantities $\nabla_W \Loss$, $\nabla_f \Loss$, and $\nabla_g \Loss$ are the outputs of the graph learning autodifferentiation procedure and are then fed into the preceding autodifferentiation block. Letting $m$ be the number of labeled points, $\nabla_W \Loss$ is an $n\times n$ matrix, while $\nabla_f \Loss$ is a length $n-m$ vector and $\nabla_g \Loss$ is a length $m$ vector.

In the special case of the linear graph Laplace equation \eqref{eq:simple_laplace}, backpropgation through the graph learning layer requires solving the \emph{adjoint} equation
\begin{equation}\label{eq:simple_laplace_adjoint}
\left\{
\begin{aligned}
\tau v(x) + \Delta v(x) &= \nabla_u \Loss,&& \text{if } x\in \U \\
v(x) &= 0,&& \text{if } x\in \L
\end{aligned}
\right.
\end{equation}
for the unknown function $v\in \l2$. Since the left hand side of \eqref{eq:simple_laplace} is self-adjoint, the adjoint equation \eqref{eq:simple_laplace_adjoint} is nearly identical to the original equation \eqref{eq:simple_laplace} --- indeed, the main difference is that the boundary conditions have been set to zero, and the source term is replaced by the upstream gradient $\nabla_u \Loss$, which is the input to the automatic differentiation method. Hence, the computational complexity of the adjoint equation is similar to the original graph Laplace equation that was solved in the forward pass.

In this case, the adjoint equation \eqref{eq:simple_laplace_adjoint} is uniquely solvable for $v$, provided the graph is connected, and the downstream gradients we wish to compute can all be expressed in terms of $v$ as follows (see Theorem \ref{thm:backpropagation}):
\begin{align}
\nabla_{W} \Loss(x,y) &= -\nabla u(x,y)v(x) =  -(u(x) - u(y))v(x) \ \ \text{for} \ x,y\in \X, \\
\nabla_{f} \Loss(x) &= v(x)\ \ \text{for} \ x\in \X\setminus \L \\
\nabla_{g} \Loss(x) &= \nabla_u \Loss(x) -  \Delta v(x) \ \ \text{for} \ x\in \L.
\end{align}
We note that the last expression for $\nabla_{g} \Loss(x)$ cannot be simplified using the adjoint equation, since it is evaluated on the labeled nodes $\L$ where the adjoint equation does not hold. We also note that it may seem a little strange at first that $\nabla_{W} \Loss(x,y)$ is not symmetric in $x$ and $y$. However, as we show in the next section, since the weight matrix $W$ is symmetric, we can replace $\nabla_{W} \Loss(x,y)$ by the symmetrized expression
\[ -\nabla u(x,y)\nabla v(x,y) =  -(u(x) - u(y))(v(x)-v(y))\]
without affecting any downstream gradient computations. We use this symmetrized expression in practice to simplify computations.

The proofs of the claims in this section are all provided in a more general setting in Theorem \ref{thm:backpropagation} in the following section.

\nc 

\subsubsection{Nonlinear elliptic equations on graphs}

\blue
In order to make our results general - applicalbe to any of the graph-based learning algorithms discussed in Section \ref{sec:background}, including nonlinear graph-Laplace equations - we proceed in this section with generalizing the simple graph Laplace equation \eqref{eq:simple_laplace} to a general nonlinear elliptic equation on a graph. \nc Given a function $\phi:\R\times \X^2\to\R$ and a vector field $\v\in \lx2$, we define the function $\phi(\v,\cdot)\in \lx2$ by
\[\phi(\v,\cdot)(x,y)  = \phi(\v(x,y),x,y).\]
Let us write $\phi=\phi(q,x,y)$ and write $\phi_q$ for the partial derivative of $\phi$ in $q$, when it exists. In general, $\phi(\v,\cdot)$ may not be a vector field, without further assumptions on $\phi$. 
\begin{definition}\label{def:phi_vector}
We say $\phi:\R\times \X^2\to \R$ \emph{preserves vector fields} if
\[\phi(-q,x,y) = -\phi(q,y,x)\]
holds for all $q\in \R$ and $x,y\in \X$.
\end{definition}
Clearly, if $\phi$ preserves vector fields, then for any vector field $\v$, $\phi(\v,\cdot)$ is also a vector field. 

We also define ellipticity and symmetry.
\begin{definition}\label{def:elliptic}
We say that $\phi:\R\times \X^2\to \R$ is \emph{elliptic} if $q\mapsto \phi(q,x,y)$ is monotonically nondecreasing for all $x,y\in \X$. 
\end{definition}
\begin{definition}\label{def:symmetric}
We say that $\phi:\R\times \X^2\to \R$ is \emph{symmetric} if  
\[\phi_q(q,x,y) = \phi_q(-q,y,x)\]
holds for all $q\in \R$ and $x,y\in \X$ for which the derivatives $\phi_q(q,x,y)$ and $\phi_q(-q,y,x)$ exist.
\end{definition}
The notion of ellipticity is also called monotonicity in some references \citep{barles1991convergence,calder2022hamilton}, and has appeared previously in the study of graph partial differential equations \citep{manfredi2015nonlinear,calder2018game}.

We will consider here a nonlinear graph Laplace-type equation of the form
\begin{equation}\label{eq:elliptic_pde}
\left\{
\begin{aligned}
\zrd(u(x),x) + \div \phi(\nabla u,\cdot)(x) &= f(x),&& \text{if } x\in \X\setminus \L \\
u(x) &= g(x),&& \text{if } x\in \L,
\end{aligned}
\right.
\end{equation}
where $\zrd:\R\times \X\to\R$, $f\in \l2$, $\L\subset \X$, and $g:\X\to \R$. The graph Laplace equation generalizes all of the examples of graph Laplace-based semi-supervised learning algorithms given in Section \ref{sec:background}. In particular, we allow for the situation that $\L=\varnothing$. We will assume the equation \eqref{eq:elliptic_pde} is uniquely solvable; see \cite{calder2022hamilton} for conditions under which this holds. However, at the moment we do not place any assumptions on $\zrd$ and $\phi$; in particular, $\phi$ need not be elliptic, preserve vector fields, or be symmetric. 

\begin{example}\label{Ex:p_laplace}
An example is the graph $p$-Laplace equation for $p\geq 1$, where $\zrd(z,x)=\tu z$ for a nonnegative constant $\tu$, $f\in \l2$, and
\[\phi(q,x,y) = |q|^{p-2}q.\]
In this case, the equation in \eqref{eq:elliptic_pde} becomes
\[\tu u(x) + \div(|\nabla u|^{p-2}\nabla u)(x) = f(x)\]
or written out more explicitly 
\[\tu u(x) + \sum_{y\in \X}w_{xy}|u(x)-u(y)|^{p-2}(u(x)-u(y)) = f(x).\]
Since $\phi_q(q,x,y) = (p-1)|q|^{p-2}$, we have that $\phi$ preserves vector fields, and is both elliptic and symmetric for $p\geq 1$. Note that $p=2$ recovers {\normalfont Laplace learning}, which will be the focus of our experimental Section \ref{sec:experiments}.
\end{example}
\begin{remark}\label{rem:variational}
If $\phi$ is elliptic, then the divergence part of \eqref{eq:elliptic_pde} arises from the minimization of a convex energy on the graph of the form
\[\sum_{x,y\in \X}w_{xy}\psi(u(x)-u(y),x,y),\]
where $\psi$ is any antiderivative of $\phi$ in the $q$ variable. The ellipticity guarantees that $\psi_q$ is nondecreasing, and so $\psi$ is convex in $q$.  For example, with the usual graph Laplacian, $\phi(q) = \frac{1}{2}q$ and $\psi(q) = q^2$. 
\end{remark}

As we show below, efficient computation of the gradients \eqref{eq:gradw_pre}, \eqref{eq:gradf_pre}, and \eqref{eq:gradg_pre}---the essence of backpropagation---involves solving the \emph{adjoint} equation
\begin{equation}\label{eq:adjoint_pde}
\left\{
\begin{aligned}
\zrd_z(u(x),x)v(x) + \div\left( \phi_q(\nabla u,\cdot)\nabla v\right) &= \nabla_u \Loss(x) ,&& \text{if } x\in \X\setminus \L \\
v(x) &= 0,&& \text{if } x \in \L
\end{aligned}
\right.
\end{equation}
for the unknown function $v\in \l2$.  \blue Recall that $\zrd_z$ denotes the partial derivative of the function $(z,x) \mapsto \zrd(z,x)$ in the first variable $z$, i.e., if $\zrd(z,x) = z^2$ then $\zrd_z(z,x) = 2z$. We also note that using the definition of divergence given in \eqref{eq:div_formula}, the divergence term $\div\left( \phi_q(\nabla u,\cdot)\nabla v\right)$ evaluated at a vertex $x\in \X\setminus \L$ in \eqref{eq:adjoint_pde} can be expressed as
\begin{equation}\label{eq:divout}
\frac{1}{2} \sum_{y\in \X}w_{xy}(\phi_q(\nabla u(x,y),x,y)(v(x) - v(y)) - \phi_q(\nabla u(y,x),y,x)(v(y) - v(x))).
\end{equation}
If $\phi$ is symmetric, as per Definition \ref{def:symmetric}, then $\phi_q(-q,y,x) = \phi_q(q,x,y)$ and this expression can be simplified to read
\begin{equation}\label{eq:divout_simplified}
\div\left( \phi_q(\nabla u,\cdot)\nabla v\right) = \sum_{y\in \X}w_{xy}\,\phi_q(\nabla u(x,y),x,y)(v(x) - v(y)).
\end{equation}
This expression is a graph Laplacian applied to the function $v$, with weights that depend on the gradient of $u$. In fact, the first term $\zrd_z(u(x),x)v(x)$ in the adjoint equation \eqref{eq:adjoint_pde} also depends \nc on the solution $u$ of \eqref{eq:elliptic_pde}, which was computed in the forward pass of the neural network and can be saved for use in backpropagation. We also note that, even though the graph PDE \eqref{eq:elliptic_pde} is nonlinear, the adjoint equation \eqref{eq:adjoint_pde} is always a linear equation for $v$.

The following theorem is our main result for backpropagation through graph learning layers. 
\begin{theorem}\label{thm:backpropagation}
Assume $z\mapsto \zrd(z,x)$ and $q\mapsto \phi(q,x,y)$ are continuously differentiable for all $x,y\in \X$. Let $f\in \l2$, $g:\L\to \R$, and suppose $u\in \l2$ satisfies \eqref{eq:elliptic_pde}.  If the adjoint equation \eqref{eq:adjoint_pde} admits a unique solution $v\in \l2$, then the partial derivatives $\partial_{w_{xy}}u,\partial_{f(x)}u, \partial_{g(x)}u\in \l2$ exist and are continuous in the variables $w_{xy}, f(x)$ and $g(x)$, respectively. Furthermore, it holds that
\begin{align}
\nabla_{W} \Loss(x,y) &= -\frac{1}{2}\Big(\phi(\nabla u(x,y),x,y) - \phi(\nabla u(y,x),y,x)\Big)v(x) \ \ \text{for} \ x,y\in \X, \label{eq:gradW}\\
\nabla_{f} \Loss(x) &= v(x)\ \ \text{for} \ x\in \X\setminus \L\label{eq:gradf} \\
\nabla_{g} \Loss(x) &= \nabla_u \Loss(x) -  \div\left( \phi_q(\nabla u,\cdot)\nabla v\right)(x) \ \ \text{for} \ x\in \L \label{eq:gradg}.
\end{align}
\end{theorem}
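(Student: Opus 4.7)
My plan is to apply the classical adjoint method to the graph PDE setting. First I would write the forward problem \eqref{eq:elliptic_pde} as a single equation $F[u;p]=0$ in $\l2$, with $F[u;p](x)=u(x)-g(x)$ on $\L$, where $p$ stands for whichever of $w_{xy}$, $f(x)$, or $g(x)$ we are differentiating with respect to. The differentiability hypotheses on $\tau$ and $\phi$ make $F$ continuously differentiable, with linearization
\[F_u[u]\,h(x) = \tau_z(u(x),x)h(x) + \div(\phi_q(\nabla u,\cdot)\nabla h)(x) \quad (x\in\X\setminus\L),\]
and $F_u[u]h=h$ on $\L$. Invertibility of $F_u[u]$ would then let the implicit function theorem produce $\partial_{w_{xy}}u$, $\partial_{f(x)}u$, and $\partial_{g(x)}u$ as continuous functions of the respective parameter.

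The central analytical tool will be a self-adjointness identity coming from one application of the graph integration-by-parts formula \eqref{eq:div_adjoint}: for any $h,v\in\l2$,
\[\ipg{\div(\phi_q(\nabla u,\cdot)\nabla h),v} = \ipv{\phi_q(\nabla u,\cdot)\nabla h,\nabla v} = \tfrac{1}{2}\sum_{x,y}w_{xy}\phi_q(\nabla u(x,y),x,y)(h(x)-h(y))(v(x)-v(y)),\]
which is manifestly symmetric in $(h,v)$; combined with the trivial self-adjointness of multiplication by $\tau_z(u,\cdot)$, I would conclude that $F_u[u]$, restricted to functions vanishing on $\L$, is self-adjoint. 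In this finite-dimensional setting, unique solvability of the adjoint equation \eqref{eq:adjoint_pde} is therefore equivalent to invertibility of $F_u[u]$, which both justifies the implicit function theorem step and supplies the adjoint $v$ used below. Notably, nothing in this identity requires ellipticity, symmetry, or vector-field preservation of $\phi$.

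I would then obtain each gradient formula by differentiating \eqref{eq:elliptic_pde} in the appropriate parameter and pairing the resulting linear equation against $v$. For $f(x_0)$ with $x_0\in\X\setminus\L$, the sensitivity $h=\partial_{f(x_0)}u$ solves $\tau_z h+\div(\phi_q\nabla h)=\delta_{x_0}$ on $\X\setminus\L$ with zero Dirichlet data, and the self-adjointness identity immediately gives $\partial_{f(x_0)}\Loss=\ipg{\nabla_u\Loss,h}=\ipg{\tau_z v+\div(\phi_q\nabla v),h}=v(x_0)$ with no boundary contributions. For $w_{x_0y_0}$, formula \eqref{eq:div_formula_general} shows that the only term in \eqref{eq:elliptic_pde} directly depending on $w_{x_0y_0}$ is the divergence at $x_0$, which contributes the source $\tfrac{1}{2}[\phi(\nabla u(x_0,y_0),x_0,y_0)-\phi(\nabla u(y_0,x_0),y_0,x_0)]\delta_{x_0}$ to the linearized equation, and the same adjoint manipulation produces \eqref{eq:gradW}. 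For $g(x_0)$ with $x_0\in\L$, the sensitivity has $h(x_0)=1$, $h=0$ on $\L\setminus\{x_0\}$, and satisfies the homogeneous linearized equation on $\X\setminus\L$; here I would apply the self-adjointness identity on all of $\X$ and split the sums over $\L$ and $\X\setminus\L$, picking up the boundary contribution $\div(\phi_q(\nabla u,\cdot)\nabla v)(x_0)$, which together with the direct contribution $\nabla_u\Loss(x_0)$ will yield \eqref{eq:gradg} (the $\tau_z v$ term is cosmetic since $v|_\L=0$).

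The hard part will be the computation of $\nabla_g\Loss$: unlike the $f$ and $w$ cases, the linearized sensitivity has non-homogeneous Dirichlet data on $\L$, so integration-by-parts does not telescope cleanly against $v|_\L=0$. The remedy is to apply the self-adjointness identity globally on $\X$ rather than on $\X\setminus\L$, and to isolate the single nonvanishing boundary term at $x_0$. Once this boundary bookkeeping is in place, every other ingredient—the implicit function theorem, the graph integration-by-parts formula, and the expansion of $\div\phi(\nabla u,\cdot)$ via \eqref{eq:div_formula_general}—is routine.
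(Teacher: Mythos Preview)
Your proposal is correct and follows essentially the same route as the paper: both invoke the implicit function theorem via invertibility of the linearized operator, establish the self-adjointness identity $\ipg{\div(\phi_q(\nabla u,\cdot)\nabla h),v}=\ipg{h,\div(\phi_q(\nabla u,\cdot)\nabla v)}$ by one pass through \eqref{eq:div_adjoint}, and then differentiate \eqref{eq:elliptic_pde} in each parameter and pair against $v$. Your treatment of the $g$-case (apply self-adjointness on all of $\X$ and split over $\L$ and $\X\setminus\L$) is equivalent to the paper's device of subtracting $\delta_r$ from $h$ before invoking the adjoint identity; your observation that the $\tau_z v$ term in \eqref{eq:gradg} is cosmetic since $v|_\L=0$ is correct and not noted in the paper.
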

\begin{proof}
The existence and continuity of the partial derivatives $\partial_{w_{xy}}u,\partial_{f(x)}u, \partial_{g(x)}u\in \l2$ follows from the unique solvability of the adjoint equation \eqref{eq:adjoint_pde} and the implicit function theorem. 

Now, the adjoint equation is utilized throughout the whole proof in a similar way. Given $h\in \l2$ with $h(x)=0$ for $x\in \L$, we can compute the inner product with $\nabla_u \Loss$ as follows:
\begin{equation}\label{eq:adjoint_formula}
\ipg{h,\nabla_u \Loss}=\ipg{\zrd_z(u,\cdot)h +\div\left(\phi_q(\nabla u,\cdot)\nabla h\right),v}.
\end{equation}
To see this we compute
\begin{align*}
\ipg{h,\nabla_u \Loss}&=\ipg{h,\zrd_z(u,\cdot)v + \div\left( \phi_q(\nabla u,\cdot)\nabla v\right)}\notag \\
&=\ipg{h,\zrd_z(u,\cdot)v}  + \ipg{h,\div\left( \phi_q(\nabla u,\cdot)\nabla v\right)}\notag \\
&=\ipg{\zrd_z(u,\cdot)h,v}  + \ipv{\nabla h,\phi_q(\nabla u,\cdot)\nabla v}\notag \\
&=\ipg{\zrd_z(u,\cdot)h,v}  + \ipv{\phi_q(\nabla u,\cdot)\nabla h,\nabla v}\notag \\
&=\ipg{\zrd_z(u,\cdot)h,v}  + \ipg{\div\left(\phi_q(\nabla u,\cdot)\nabla h\right),v}\notag \\
&=\ipg{\zrd_z(u,\cdot)h +\div\left(\phi_q(\nabla u,\cdot)\nabla h\right),v}.
\end{align*}
We will use the identity \eqref{eq:adjoint_formula} multiple times in what follows. 

We now prove \eqref{eq:gradW}. Given $r,s\in \X$, we let $h(x) = \partial_{w_{rs}}u(x)$. Then differentiating both sides of \eqref{eq:elliptic_pde} we find that 
\begin{equation}\label{eq:diff_wxy}
\zrd_z(u(x),x)h(x) + \div \left( \phi_q(\nabla u,\cdot)\nabla h\right) = -\frac{1}{2}\delta_r(x)(\phi(\nabla u(r,s),r,s) - \phi(\nabla u(s,r),s,r))
\end{equation}
for $x\in \X\setminus \L$, and $v(x)=0$ for $x\in \L$, where $\delta_r(x)=0$ if $r\neq x$, and $\delta_r(r)=1$. By \eqref{eq:gradw_pre}, \eqref{eq:adjoint_formula}, and \eqref{eq:diff_wxy} we have
\begin{align*}
\nabla_{W} \Loss(r,s) &= \ipg{\partial_{w_{rs}}u,\nabla_u \Loss}=\ipg{h,\nabla_u \Loss}\\
&=\ipg{\zrd_z(u,\cdot)h +\div\left(\phi_q(\nabla u,\cdot)\nabla h\right),v}\\
&=-\frac{1}{2}\ipg{\delta_r(\phi(\nabla u(r,s),r,s) - \phi(\nabla u(s,r),s,r)),v}\\
&=-\frac{1}{2}(\phi(\nabla u(r,s),r,s) - \phi(\nabla u(s,r),s,r))v(r).
\end{align*}

To prove \eqref{eq:gradf}, let $r\in \X\setminus \L$ and set $h(x) = \partial_{f(r)}u(x)$. Then differentiating both sides of \eqref{eq:elliptic_pde} we find that 
\begin{equation}\label{eq:diff_fx}
\zrd_z(u(x),x)h(x) + \div \left( \phi_q(\nabla u,\cdot)\nabla h\right) = \delta_r(x),
\end{equation}
for $x\in \X\setminus \L$, and $h(x)=0$ for $x\in \L$. By \eqref{eq:gradf_pre}, \eqref{eq:adjoint_formula}, and \eqref{eq:diff_fx} we have
\begin{align*}
\nabla_{f} \Loss(r) &= \ipg{h,\nabla_u \Loss}=\ipg{\zrd_z(u,\cdot)h +\div\left(\phi_q(\nabla u,\cdot)\nabla h\right),v}=\ipg{\delta_r,v} = v(r).
\end{align*}

Finally, to prove \eqref{eq:gradg}, let $r\in \L$ and set $h(x) = \partial_{g(r)}u(x)$. 
Then differentiating both sides of \eqref{eq:elliptic_pde} we find that 
\begin{equation}\label{eq:diff_gx}
\zrd_z(u(x),x)h(x) + \div \left( \phi_q(\nabla u,\cdot)\nabla h\right) = 0,
\end{equation}
for $x\in \X\setminus \L$ and $h(x)=\delta_r(x)$ for $x\in \L$. By \eqref{eq:gradg_pre}, \eqref{eq:adjoint_formula}, and \eqref{eq:diff_gx} we have
\begin{align*}
\nabla_{g} \Loss(r) &= \ipg{h,\nabla_u \Loss} = \ipg{h - \delta_r,\nabla_u \Loss} + \nabla_u \Loss(r)\\
&=\ipg{h - \delta_r,\zrd_z(u,\cdot)v + \div\left( \phi_q(\nabla u,\cdot)\nabla v\right)} + \nabla_u \Loss(r)\\
&=\ipg{h,\zrd_z(u,\cdot)v + \div\left( \phi_q(\nabla u,\cdot)\nabla v\right)} - \zrd_z(u(r),r)v(r) \\
&\hspace{2in}- \div\left( \phi_q(\nabla u,\cdot)\nabla v\right)(r) + \nabla_u \Loss(r)\\
&=\ipg{\zrd_z(u,\cdot)h + \div\left( \phi_q(\nabla u,\cdot)\nabla h\right),v} - \zrd_z(u(r),r)v(r) \\
&\hspace{2in}- \div\left( \phi_q(\nabla u,\cdot)\nabla v\right)(r) + \nabla_u \Loss(r)\\
&=-\zrd_z(u(r),r)v(r) -  \div\left( \phi_q(\nabla u,\cdot)\nabla v\right)(r) + \nabla_u \Loss(r).
\end{align*}
We finally use that $v(r)=0$ for $r\in \L$.
\end{proof}

\vspace{-0.5cm}
Recall that we have assumed the weight matrix $W$ is symmetric, so $w_{xy}=w_{yx}$. Thus, it may at first seem strange that the gradient $\nabla_W \Loss(x,y)$ is not symmetric in $x$ and $y$, but this is a natural consequence of the lack of any restrictions on the variations in $W$ when computing gradients. However, it turns out that we can symmetrize the expression for the gradient $\nabla_W \Loss(x,y)$ without affecting the backpropagation of gradients, provided that $W$ is guaranteed to be symmetric under perturbations in any variables that it depends upon.  This is encapsulated in the following lemma. 
\begin{lemma}\label{lem:symmetric_gradW}
Suppose that the weight matrix $W = (w_{xy})_{x,y\in \X}$ depends on a parameter vector $\theta \in \R^m$, so $W = W(\theta)$ and $w_{xy}=w_{xy}(\theta)$ for all $x,y\in \X$, and furthermore that the loss $\Loss$ depends on $\theta$ only through the weight matrix $W$. If $w_{xy}(\theta)=w_{yx}(\theta)$ for all $\theta \in \R^m$ and $x,y\in \X$ then it holds that
\begin{equation}\label{eq:symmetric_backpropagate}
\nabla_\theta \Loss = \sum_{x,y\in \X}\nabla_W \Loss(x,y) \nabla_\theta w_{xy} = \sum_{x,y\in \X}\frac{1}{2}\left(\nabla_W \Loss(x,y) + \nabla_W \Loss(y,x)\right) \nabla_\theta w_{xy}.
\end{equation}
\end{lemma}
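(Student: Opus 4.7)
The plan is to derive the first equality directly from the chain rule and then produce the symmetrized second equality by averaging the sum with a relabeled version of itself, invoking the hypothesis $w_{xy}(\theta)=w_{yx}(\theta)$ to identify $\nabla_\theta w_{xy}$ with $\nabla_\theta w_{yx}$.

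For the first equality, since $\Loss$ depends on $\theta$ only through the matrix $W$, the multivariate chain rule applied to the composition $\theta\mapsto W(\theta)\mapsto \Loss$ yields
$$\nabla_\theta \Loss \;=\; \sum_{x,y\in \X}\frac{\partial \Loss}{\partial w_{xy}}\,\nabla_\theta w_{xy} \;=\; \sum_{x,y\in \X}\nabla_W \Loss(x,y)\,\nabla_\theta w_{xy},$$
where the second step is the definition of $\nabla_W \Loss(x,y)$ used in \eqref{eq:gradw_pre}. It is worth emphasizing that the entries $w_{xy}$ are treated as independent scalar variables when $\Loss$ is differentiated with respect to $W$; no symmetry of $W$ has yet been invoked.

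For the second equality, differentiating the identity $w_{xy}(\theta)=w_{yx}(\theta)$ coordinatewise in $\theta$ gives $\nabla_\theta w_{xy}=\nabla_\theta w_{yx}$ for every pair $x,y\in\X$. Relabeling the dummy indices $x\leftrightarrow y$ in the sum produced by the first equality and then applying this symmetry yields
$$\sum_{x,y\in\X}\nabla_W \Loss(x,y)\,\nabla_\theta w_{xy} \;=\; \sum_{x,y\in\X}\nabla_W \Loss(y,x)\,\nabla_\theta w_{yx} \;=\; \sum_{x,y\in\X}\nabla_W \Loss(y,x)\,\nabla_\theta w_{xy}.$$
Averaging this with the original sum collapses the two forms into $\tfrac{1}{2}\bigl(\nabla_W \Loss(x,y)+\nabla_W \Loss(y,x)\bigr)\nabla_\theta w_{xy}$ inside the sum, which is the claimed symmetrized identity.

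The argument is essentially an index-bookkeeping exercise, so no serious obstacle arises. The only conceptual subtlety worth flagging is the distinction between the ``unconstrained'' gradient $\nabla_W \Loss$ produced by Theorem \ref{thm:backpropagation}, which need not be symmetric in $(x,y)$ because the derivation perturbs each $w_{xy}$ independently, and the effective gradient of $\Loss$ with respect to any admissible parameter $\theta$ that always yields a symmetric $W$: in the latter composition the antisymmetric part of $\nabla_W\Loss$ is annihilated against $\nabla_\theta w_{xy}=\nabla_\theta w_{yx}$, so only its symmetric part contributes. In particular, one may freely replace $\nabla_W \Loss(x,y)$ by its symmetrization when propagating gradients through any symmetric parametrization of $W$.
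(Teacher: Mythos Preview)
Your proposal is correct and follows essentially the same approach as the paper's proof: apply the chain rule for the first equality, use $w_{xy}(\theta)=w_{yx}(\theta)$ to deduce $\nabla_\theta w_{xy}=\nabla_\theta w_{yx}$, and then average the sum with its index-swapped copy. Your write-up is in fact slightly more explicit than the paper's, which compresses the relabeling step into a single line.
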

\begin{proof}
By the chain rule we have
\[\nabla_\theta \Loss = \sum_{x,y\in \X}\nabla_W \Loss(x,y) \nabla_\theta w_{xy} = \sum_{x,y\in \X}\frac{1}{2}\left(\nabla_W \Loss(x,y)\nabla_\theta w_{xy} + \nabla_W \Loss(y,x)\nabla_\theta w_{yx}\right).\]
Since $w_{xy}(\theta)=w_{yx}(\theta)$ we have $\nabla_\theta w_{xy}=\nabla_\theta w_{yx}$, which upon substituting above completes the proof.
\end{proof}
Lemma \ref{lem:symmetric_gradW} allows us to replace the gradient $\nabla_W\Psi$ with a symmetrized version without affecting the correct computation of gradients. 
\begin{remark}\label{rem:symmetric_gradW}
By Lemma \ref{lem:symmetric_gradW}, we may replace $\nabla_W \Loss(x,y)$, given by \eqref{eq:gradW} with its symmetrized version 
\begin{equation}\label{eq:symmetrized_gradW}
\frac{1}{2}\left(\nabla_W \Loss(x,y) + \nabla_W \Loss(y,x)\right) = -\frac{1}{2}(\phi(\nabla u(x,y),x,y) - \phi(\nabla u(y,x),y,x))\nabla v(x,y), 
\end{equation}
without affecting the resultant backpropagation. The resulting expression is symmetric, in $x$ and $y$, which may be convenient in implementations. 
\end{remark}
\begin{example}\label{Ex:pLaplace_equations}
As in Example \ref{Ex:p_laplace}, we consider the graph $p$-Laplace equation with $p>1$,\footnote{The case $p=1$ does not satisfy the continuous differentiability conditions of Theorem \ref{thm:backpropagation}.} $\zrd(z,x)=\tu z$, where $\tu$ a nonnegative constant, and $f\in \l2$. In this case we have $\phi(q,x,y) = |q|^{p-2}q$ and $\phi_q(q,x,y) = (p-1)|q|^{p-2}$. The adjoint equation \eqref{eq:adjoint_pde} becomes 
\[\tu v + (p-1)\div(|\nabla u|^{p-2}\nabla v) = \nabla_u \Loss,\]
subject to $v=0$ on $\L$. The adjoint equation is uniquely solvable if $\tu>0$; otherwise the solvability depends crucially on the structure of $|\nabla u|^{p-2}$ when $p\neq 2$. For example, if $u$ is constant, so $\nabla u=0$, then the adjoint equation is \emph{not} solvable when $\tu = 0$. When $p=2$ and $\L\neq \varnothing$ (this is the setting of {\normalfont Laplace learning}) the adjoint equation is always uniquely solvable, and is simply given by the Laplace equation
\[\tu v + \Delta v = \nabla_u \Psi,\]
subject to $v=0$ on $\L$. 

Proceeding under the assumption that the adjoint equation is uniquely solvable, the expressions for the gradients in Theorem \ref{thm:backpropagation}, taking the symmetrized version for $\nabla_W \Loss$, become $\nabla_f \Loss(x)=v(x)$ for $x\in \X\setminus \L$,
\begin{equation}\label{eq:grad_w_plaplace}
\frac{1}{2}\left(\nabla_W \Loss(x,y) + \nabla_W \Loss(y,x)\right) = -|\nabla u(x,y)|^{p-2}\nabla u(x,y) \nabla v(x,y),
\end{equation}
for $x,y\in \X$, and
\begin{equation}\label{eq:grad_g_plaplace}
\nabla_{g} \Loss(x) = \nabla_u \Loss(x) - (p-1) \div\left( |\nabla u|^{p-2}\nabla v\right)(x) \ \ \text{for} \ x\in \L.
\end{equation}
When $p=2$, the symmetrized gradient $\nabla_W \Psi$ for Laplace learning has the especially simple form $-\nabla u(x,y) \nabla v(x,y)$. 
\end{example}

\subsection{Backpropagation through similarity matrices}\label{sec:weightback}

We now consider the backpropagation of gradients through the construction of the graph weight matrix. 


As in Section \ref{sec:graph_construction}, we consider a symmetric k-nearest neighbor type graph construction for the weight matrix $W$, of the form
\begin{equation}\label{eq:Wij}
w_{ij} = a_{ij}\eta\left( \frac{\|\x_i -\x_j\|^2}{2\eps_i\eps_j}\right),
\end{equation}
where $\eta$ is differentiable and $a_{ij}$ is the adjacency matrix of the graph, so $a_{ij}=1$ if there is an edge between nodes $i$ and $j$, and $a_{ij}=0$ otherwise. We consider graphs without self-loops, so $w_{ii} = a_{ii} = 0$ for all $i$.

In deriving the backpropagation equations, we will assume that the adjacency matrix $a_{ij}$ does not depend on the features $\x_i$. This is an approximation that is true generically for $k$-nearest neighbor graphs, unless there is a tie for the kth neighbor of some $\x_i$ that allows the neighbor information to change under small perturbation. However, the self-tuning parameters $\eps_i$ and $\eps_j$ do depend on the feature vectors; in particular, they are defined by 
\begin{equation}\label{eq:epsi}
\eps_i = \|\x_i - \x_{k_i}\|,
\end{equation}
where $k_i$ is the index of the $k^{\rm th}$ nearest neighbor of the vertex $i$.  However, we note that the derivation below does not place any assumptions on the indices $k_1,\dots,k_n$ except that $k_i\neq i$ and $a_{ik_i}=1$. This, in particular, allows our analysis to handle approximate nearest neighbor searches, where some neighbors may be incorrect, as long as we use the correct distances. We will also assume that the indices $k_i$ do not depend on (or rather, are constant with respect to) any of the features $\x_j$, which is again true under small perturbations of the features, as long as there is no tie for the \kth { }nearest neighbor of any given point. This is generically true for e.g. random data on the unit sphere in $\mathbb{R}^d$.

In backpropagation, we are given the gradients $g_{ij} = \partial_{w_{ij}}\Loss$ computed in Section \ref{sec:lapback}, which we can assemble into a matrix $G=(g_{ij})_{ij}$, and we need to compute $\nabla_{\x_\ell}\mathcal{J}$ in terms of the matrix $G$, where $\x_\ell$ is an arbitrary feature vector. That is to say, we backpropagate the gradients through the graph construction \eqref{eq:Wij}.  Our main result is the following lemma.
\begin{lemma}\label{lem:backw}
We have
\begin{equation}\label{eq:final_grad1}
\nabla_{\x_i}\mathcal{J} = \sum_{j=1}^n\frac{\tilde g_{i j}a_{i j}\vv_{i j}}{\eps_i\eps_j}(\x_i - \x_j) + \sum_{j=1}^n b_j(\x_j - \x_{k_j})\delta_{i k_j} - b_i(\x_i - \x_{k_i}),
\end{equation}
where $\tilde g_{ij} = g_{ij} + g_{ji}$, and the vector $b=(b_i)_i$ and matrix $\V=(\vv_{ij})_{ij}$ are defined by
\begin{equation}\label{eq:bi}
b_i = \sum_{j=1}^n\frac{\tilde g_{ij}a_{ij}\vv_{ij}}{2\eps_i^3\eps_j}\|\x_i-\x_j\|^2, \ \ \text{and} \ \ \vv_{ij} = \eta'\left(\frac{\|\x_i -\x_j\|^2}{2\eps_i\eps_j} \right),
\end{equation}
\blue where $\eta'$ is the derivative of $\eta$. \nc
\end{lemma}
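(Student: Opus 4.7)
The plan is a straightforward multivariable chain rule computation, organized by the three places in \eqref{eq:Wij} where $\x_\ell$ can appear: the squared distance in the numerator, the self-tuning bandwidth $\eps_i$, and the bandwidth $\eps_j$. Since $\mathcal{J}$ depends on the features only through $W$, the starting point is
\[
\nabla_{\x_\ell}\mathcal{J} = \sum_{i,j} g_{ij}\,\nabla_{\x_\ell} w_{ij},
\]
and by the chain rule $\nabla_{\x_\ell} w_{ij}=a_{ij}v_{ij}\,\nabla_{\x_\ell}\!\left(\tfrac{\|\x_i-\x_j\|^2}{2\eps_i\eps_j}\right)$, with the adjacencies $a_{ij}$ and neighbor indices $k_i$ treated as constants (as justified in the paragraph preceding the lemma).

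Next I would expand the inner gradient into three pieces using the product/quotient rule, namely $\nabla_{\x_\ell}\|\x_i-\x_j\|^2 = 2(\x_i-\x_j)(\delta_{i\ell}-\delta_{j\ell})$ together with $\nabla_{\x_\ell}\eps_i = \eps_i^{-1}(\x_i-\x_{k_i})(\delta_{i\ell}-\delta_{k_i\ell})$ and the analogous formula for $\eps_j$. Substituting these into the double sum $\sum_{i,j}g_{ij}\nabla_{\x_\ell} w_{ij}$ produces six terms indexed by which Kronecker delta is active. I would then resolve the deltas: the $\delta_{i\ell}$ and $\delta_{j\ell}$ contributions reduce each sum to a single index, while the $\delta_{k_i\ell}$ and $\delta_{k_j\ell}$ contributions restrict to those $i$ (or $j$) whose $k$th-nearest-neighbor index is $\ell$.

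The key bookkeeping step, which I expect to be the main obstacle, is merging pairs of terms that differ only by a transposition of summation indices. Using the symmetries $a_{ij}=a_{ji}$, $v_{ij}=v_{ji}$, $\eps_i\eps_j=\eps_j\eps_i$, and $\|\x_i-\x_j\|^2=\|\x_j-\x_i\|^2$, the two ``distance'' contributions combine to give the factor $\tilde g_{\ell j}=g_{\ell j}+g_{j\ell}$ and produce the first sum in \eqref{eq:final_grad1}. Similarly, the $\delta_{i\ell}$ piece from $\nabla_{\x_\ell}\eps_i$ and the $\delta_{j\ell}$ piece from $\nabla_{\x_\ell}\eps_j$ combine (again after an index swap using the three symmetries) to yield $-b_\ell(\x_\ell-\x_{k_\ell})$, while the $\delta_{k_i\ell}$ and $\delta_{k_j\ell}$ pieces combine to yield $\sum_{j} b_j(\x_j-\x_{k_j})\delta_{\ell k_j}$. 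Renaming $\ell\to i$ at the end gives \eqref{eq:final_grad1}.

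In short, no deep idea is needed beyond the chain rule and the symmetry of $W$; the whole challenge is to pair up the six raw terms in the right way so that the sums $g_{ij}+g_{ji}=\tilde g_{ij}$ emerge and produce the compact form of $b_i$ given in \eqref{eq:bi}. One minor care point is that $\eta$ must be evaluated at the same argument as $v_{ij}$ throughout, which is automatic since $\eta'$ appears only through $v_{ij}$; another is that the assumption that $k_i\ne i$ and $a_{ik_i}=1$ guarantees $\eps_i>0$ so that all the denominators make sense and the derivative of $\eps_i$ is well-defined.
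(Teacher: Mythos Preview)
Your proposal is correct and follows essentially the same route as the paper: start from $\nabla_{\x_\ell}\mathcal{J}=\sum_{i,j}g_{ij}a_{ij}v_{ij}\,\nabla_{\x_\ell}\bigl(\|\x_i-\x_j\|^2/(2\eps_i\eps_j)\bigr)$, expand the inner gradient into the six Kronecker-delta terms coming from the numerator and the two bandwidths, and then pair terms by swapping $(i,j)$ using the symmetries of $a_{ij}$, $v_{ij}$, and $\|\x_i-\x_j\|^2$ so that $g_{ij}+g_{ji}=\tilde g_{ij}$ emerges and the three groups collapse into the three summands of \eqref{eq:final_grad1}. The paper's proof is organized slightly differently (it first packages the gradient of the product $\eps_i\eps_j$ and names the full integrand $b^\ell_{ij}$ before summing), but the computation and the use of symmetry are identical to what you describe.
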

\begin{remark}\label{rem:sparsity}
Notice in \eqref{eq:final_grad1} and \eqref{eq:bi}, the gradients $\tilde{g}_{ij}$ always appear next to the corresponding adjacency matrix entry $a_{ij}$. Thus, in an end to end graph learning block, where the features vectors $\x_i$ are taken as input, we only need to compute the gradients $g_{ij}$ along \emph{edges} in the graph. This allows us to preserve the sparsity structure of the graph during backpropagation, drastically accelerating computations. 
\end{remark}
\begin{proof}
We begin by using the chain rule to compute
\begin{equation}\label{eq:gradL}
\nabla_{\x_\ell}\mathcal{J} = \sum_{i,j=1}^n \frac{\partial \mathcal{J}}{\partial w_{ij}}\nabla_{\x_\ell}w_{ij} =\sum_{i,j=1}^n g_{ij}\nabla_{\x_\ell}w_{ij}=\sum_{i,j=1}^n g_{ij}a_{ij}\nabla_{\x_\ell}\eta\left( \frac{\|\x_i -\x_j\|^2}{2\eps_i\eps_j}\right).
\end{equation}
Notice that the sum above is only over edges in the graph where $a_{ij}=1$, and we may also omit $i=j$, since the gradient term is zero there. Thus, we need to compute
\[b^{\ell}_{ij}:=\nabla_{\x_\ell}\eta\left( \frac{\|\x_i -\x_j\|^2}{2\eps_i\eps_j}\right)\]
for $i\neq j$ with $a_{ij}=1$. Recalling \eqref{eq:bi} we have
\[b^{\ell}_{ij} = \frac{\vv_{ij}}{\eps_i\eps_j}(\x_i - \x_j)(\delta_{i\ell} - \delta_{j\ell}) - \frac{\vv_{ij}}{2\eps_i^2\eps_j^2}\|\x_i-\x_j\|^2 \nabla_{\x_\ell}(\eps_i\eps_j).\]
We now note that
\[\nabla_{\x_\ell}\eps_i = \frac{1}{\epsilon_i}(\x_i - \x_{k_i})(\delta_{\ell i} -  \delta_{\ell k_i}) \quad \text{and} \quad \nabla_{\x_\ell}\eps_j = \frac{1}{\epsilon_j}(\x_j - \x_{k_j})(\delta_{\ell j} -  \delta_{\ell k_j}), \]
so
\[\nabla_{\x_\ell}(\eps_i\eps_j) = \frac{\eps_i}{\eps_j}(\x_j - \x_{k_j})(\delta_{\ell j} -  \delta_{\ell k_j})  + \frac{\eps_j}{\eps_i} (\x_i - \x_{k_i})(\delta_{\ell i} -  \delta_{\ell k_i}).\]
Plugging this in above we have
\begin{align*}
b^{\ell}_{ij} = \frac{\vv_{ij}}{\eps_i\eps_j}(\x_i - \x_j)(\delta_{i\ell} - \delta_{j\ell}) &- \frac{\vv_{ij}}{2\eps_i\eps_j^3}\|\x_i-\x_j\|^2(\x_j - \x_{k_j})(\delta_{\ell j} -  \delta_{\ell k_j}) \\
& - \frac{\vv_{ij}}{2\eps_i^3\eps_j}\|\x_i-\x_j\|^2 (\x_i - \x_{k_i})(\delta_{\ell i} -  \delta_{\ell k_i}).
\end{align*}
Plugging this into \eqref{eq:gradL} we have
\begin{align*}
\nabla_{\x_\ell}\mathcal{J} &=\sum_{i,j=1}^n g_{ij}a_{ij}b^{\ell}_{ij}\\
&=\sum_{i,j=1}^ng_{ij}a_{ij}\frac{\vv_{ij}}{\eps_i\eps_j}(\x_i - \x_j)(\delta_{i\ell}- \delta_{j\ell})\\
&\hspace{1cm} - \sum_{i,j=1}^n g_{ij}a_{ij}\frac{\vv_{ij}}{2\eps_i\eps_j^3}\|\x_i-\x_j\|^2(\x_j - \x_{k_j})(\delta_{\ell j} -  \delta_{\ell k_j})  \\
&\hspace{1cm} - \sum_{i,j=1}^n g_{ij}a_{ij}\frac{\vv_{ij}}{2\eps_i^3\eps_j}\|\x_i-\x_j\|^2 (\x_i - \x_{k_i})(\delta_{\ell i} -  \delta_{\ell k_i}) \\
&=\sum_{i=1}^n\frac{\tilde g_{\ell i}a_{\ell i}\vv_{\ell i}}{\eps_\ell\eps_i}(\x_\ell - \x_i) - \left[\sum_{i=1}^n \frac{\tilde g_{\ell i}a_{\ell i}\vv_{\ell i}}{2\eps_\ell^3\eps_i}\|\x_\ell-\x_i\|^2\right](\x_\ell - \x_{k_\ell}) \\
&\hspace{2.5in}- \sum_{i=1}^n \left[\sum_{j=1}^n\frac{\tilde g_{ij}a_{ij}\vv_{ij}}{2\eps_i^3\eps_j}\|\x_i-\x_j\|^2\right](\x_{k_i} - \x_i)\delta_{\ell k_i},  
\end{align*}
where $\tilde g_{ij} = g_{ij} + g_{ji}$. Recalling \eqref{eq:bi}, the gradient above can be expressed as
\begin{equation}\label{eq:final_grad}
\nabla_{\x_\ell}\mathcal{J} = \sum_{i=1}^n\frac{\tilde g_{\ell i}a_{\ell i}\vv_{\ell i}}{\eps_\ell\eps_i}(\x_\ell - \x_i) + \sum_{i=1}^n b_i(\x_i - \x_{k_i})\delta_{\ell k_i} - b_\ell(\x_\ell - \x_{k_\ell}).
\end{equation}
The proof is completed by relabelling indices.
\end{proof}

\begin{remark}\label{rem:grad_form}
For computational purposes, we can further simplify \eqref{eq:final_grad1} by encoding the $k$-nearest neighbor information as follows. Let $C=(c_{ij})_{ij}$ be the $n\times n$ matrix with $c_{ij}=1$ if node $i$ is the \kth nearest neighbor of node $j$, $c_{ii}=-1$, and $c_{ij}=0$ otherwise. Then the gradient given in the expression in Eq.~\eqref{eq:final_grad1} can be expressed as
\begin{equation}\label{eq:final_grad2}
\nabla_{\x_i}\mathcal{J} = \sum_{j=1}^n\frac{\tilde g_{i j}a_{i j}\vv_{i j}}{\eps_i\eps_j}(\x_i - \x_j) + \sum_{j=1}^n c_{ij}b_j(\x_j - \x_{k_j}).
\end{equation}
This is our final equation for the gradient of the loss with respect to the feature representations $\x_1,\dots,\x_n$; since these features are the output from a neural network, the remainder of backpropagation can be done as normal. 
\end{remark}

\subsubsection{Sparse and efficient gradient computations}\label{sec:final_grad2}

We discuss here how to efficiently compute the backpropagation gradient formula \eqref{eq:final_grad2} in a computationally efficient way in the setting of a sparse adjacency matrix $A$. 

We use the expression \eqref{eq:final_grad2} from Remark \ref{rem:grad_form}. The matrices $A$ (adjacency matrix), $\V$ (from Lemma \ref{lem:backw}), and $C$ (from Remark \ref{rem:grad_form}) require minimal additional computational overhead, as their structure and values all come from the $k$-NN search and weight matrix construction required in the forward pass to construct the weight matrix $W$. In our experiments, for example, we use $\eta(z) = \exp(-4z)$, so $\V = -4W$. The matrix $G$ is computed as described in Section \ref{sec:lapback}.

It is more efficient in implementations to express \eqref{eq:final_grad2} as sparse matrix multiplications. The first term already has this form; indeeed, if we define the matrix $H=(h_{ij})_{ij}$ by
\[h_{ij} = \frac{\tilde g_{i j}a_{i j}\vv_{i j}}{\eps_i\eps_j}\]
then the first term is given by
\[\sum_{j=1}^nH_{ij}(\x_i-\x_j)^T = L_H X\]
which is the multiplication of the sparse graph Laplacian matrix $L_H$ corresponding to the symmetric matrix $H$ with the data matrix $X$ whose rows are the feature vectors $\x_i^T$. 

Analogously, for the second term, define the matrix $M = (m_{ij})_{ij}$ as
\[ m_{ij} = c_{ij}b_j + c_{ji}b_i. \]
We then have that
\begin{align*}
\sum_{j=1}^n m_{ij}(\x_i-\x_j) &= \sum_j m_{ij} \x_i - \sum_j m_{ij}\x_j \\
&= \sum_j c_{ij}b_j(\x_i - \x_j) + \sum_j c_{ji}b_i(\x_i - \x_j).
\end{align*}
Now note that, for a fixed $i$, the column $c_{ji}$ ($j \in \{1,...,n\}$) is 1 when $\x_j$ is the kth nearest neighbor of $\x_i$. A node only has 1 kth nearest neighbor, so the column $c_{ji}$ is 0 everywhere except when $j = k_i$, $c_{k_ii} = 1$. The term $c_{ii}=-1$ is irrelevant, since $\x_i-\x_i=0$. Hence we have
\begin{align*}
\sum_{j=1}^n m_{ij}(\x_i-\x_j)&= \sum_j c_{ij}b_j(\x_i - \x_j) + b_i(\x_i - \x_{k_i})\\
&= \sum_j b_j(\x_{k_j}-\x_j)\delta_{ik_j} + b_i(\x_i - \x_{k_i})\\
&=-\sum_{j=1}^nc_{ij}b_j(\x_j - \x_{k_j}),
\end{align*}
where we used the definition of the matrix $C$ again. This is exactly the negative of the second term in \eqref{eq:final_grad2}. The expression on the left is again a graph Laplacian $L_M$ for the symmetric matrix $M$, so we can write 
\[\sum_{j=1}^nc_{ij}b_j(\x_j - \x_{k_j})^T = L_M X.\]
Finally, this allow us to express \eqref{eq:final_grad2} as
\begin{equation}\label{eq:final_grad_matrix}
\nabla_{X}\mathcal{J}^T = (L_H - L_M)X = L_{H-M}X.
\end{equation}
Compared to a softmax classifier, the only additional computational overhead are sparse matrices, which can be efficiently stored and manipulated.



\section{Implementation \& Numerical Experiments}\label{sec:experiments}


{\blue
This section details the practical use of the proposed GLL classifier and showcases its strong performance through a variety of numerical experiments. Section~\ref{sec:implementation} explains our implementation of the GLL as a classifier on supervised and semi-supervised tasks, including batching, contrastive pretraining, and data augmentation. Section~\ref{sec:toy} visualizes the embeddings learned by a GLL on a toy example and illustrates the effect of the Tikhonov parameter $\tau$ in~\eqref{eq:laplace_variational_tau}; all remaining experiments use the standard Laplace learning algorithm in~\eqref{eq:laplace_variational}. Section~\ref{sec:comparison_mlp_gll} considers a deliberately over-parameterized CNN on FashionMNIST to compare a GLL head to an MLP head in terms of generalization, training dynamics, and computational efficiency. Section~\ref{sec:large_experiments} presents large-scale experiments on the CIFAR-10 (\cite{krizhevsky2009learning}) and EMNIST (\cite{cohen2017emnist}) datasets at varying levels of supervision, comparing MLP, WNLL (\cite{wang2021graph}), and GLL classification heads across several deep neural network backbones; GLL-based networks consistently match or surpass the performance of MLP- and WNLL-based networks, especially at low label rates. Section~\ref{sec:training_inference_times} quantifies the computational overhead of GLL by reporting training and inference times as a function of batch size and network size. Finally, Section~\ref{sec:adversarial} provides our most significant empirical result: across datasets, training strategies, and attack strengths, a GLL yields consistently and substantially stronger robustness to adversarial attacks than both MLP- and WNLL-based classifiers. While our theoretical analysis applies to general graph Laplacian-based learning algorithms, we leave numerical experiments on variants such as Poisson learning and $p$-Laplace learning to future work.
}


{\blue
\subsection{Implementation}\label{sec:implementation}


In this subsection, we present implementation details for the GLL in our experiments. Although our GLL can be plugged into any neural network as a classifier, replacing the original head in this way leads to a batch setting that differs slightly from the classical MLP-based training paradigm. In addition, to accelerate training, we adopt a contrastive-learning–based pretraining stage, after which we perform supervised training for the neural network with the GLL.

\subsubsection{Sampling and Training on Batches}\label{sec:sampling_and_training_on_batches}
We describe a slight modification of batching needed for a GLL classifier. Recalling the notation of Section \ref{sec:classic_gl}, we let $\X$ denote the set of samples in the training set, \(\mathcal{L}\) be the subset of base samples with known labels, \(\mathcal{U}\) the unlabeled subset. Then we consider training on a batch $\mathcal{B}$ with size $B$ for the classic classifier (e.g. an MLP) or for our GLL classifier:

\paragraph{Classic Training: } Since each data is classified individually, the batch is selected only from the labeled dataset $\mathcal{L}$, i.e. $\mathcal{B}\subset \mathcal{L}$. The training loss $\Loss$ is computed between the predicted labels and their ground-truth labels over all samples in $\mathcal{B}$, which are available for every element of $\mathcal{B}$ since $\mathcal{B}\subset \mathcal{L}$.

\paragraph{GLL Training: } The data is classified by label propagation on a graph built on a batch $\mathcal{B}$. 
Because the GLL is a transductive classifier, it requires constructing each batch to include an additional set of labeled data - which we call the \textit{base dataset} $\mathcal{L}_b$  - to propagate the labels to the rest of the batch. 
$\mathcal{L}_b$ is a subset sampled from the labeled dataset $\mathcal{L}$ with a size $N_b$. The remainder of the batch includes standard training samples $\mathcal{B}_l$ - whose labels we predict and compute a loss over - and (optionally) some unlabeled samples $\mathcal{B}_u$ in (e.g.) the semi-supervised setting. The batch can thus be written as

\begin{equation}
\mathcal{B} = \mathcal{L}_b \cup \mathcal{B}_l \cup \mathcal{B}_u,
\end{equation}
where $\mathcal{B}_l = (\mathcal{B}\setminus \mathcal{L}_b) \cap \mathcal{L}$ and $\mathcal{B}_u = (\mathcal{B}\setminus \mathcal{L}_b) \cap \mathcal{U}$. Let $N_l = |\mathcal{B}_l|$, $N_u = |\mathcal{B}_u|$, and $N_b + N_u = B$. We further require that, after fixing the base dataset $\mathcal{L}_b$, the proportion of labeled and unlabeled samples in each batch reflects their proportion in the remaining training set. Excluding the base samples, we sample $\mathcal{B}_l$ from $\mathcal{L} \setminus \mathcal{L}_b$ and $\mathcal{B}_u$ from $\mathcal{U}$ such that
\begin{equation}
\frac{N_l}{N_l + N_u} = \frac{\left|\mathcal{L} \setminus \mathcal{L}_b\right|}{\left|\mathcal{L} \setminus \mathcal{L}_b\right| + \left|\mathcal{U}\right|}.
\end{equation}
Equivalently, for a fixed batch size $B$ and the base dataset $\mathcal{L}_b$, we choose $N_l$ and $N_u$ to satisfy $N_l + N_u = B$ and
\begin{equation}\label{eq:N_l_and_N_u}
N_l = \frac{\left|\mathcal{L} \setminus \mathcal{L}_b\right|}{\left|\mathcal{L} \setminus \mathcal{L}_b\right| + \left|\mathcal{U}\right|}B, \quad
N_u = \frac{\left|\mathcal{U}\right|}{\left|\mathcal{L} \setminus \mathcal{L}_b\right| + \left|\mathcal{U}\right|}B.
\end{equation}
In practice, we choose a suitable batch size $B$ and base dataset size $N_b$ such that the resulting $N_l$ and $N_u$ computed from \eqref{eq:N_l_and_N_u} are both integers. For GLL, the training loss $\Loss$ is computed between the predicted labels and ground-truth labels over all samples in $\mathcal{B}_l$. Note that although the base dataset $\mathcal{L}_b$ is labeled, those labels are already used for graph-based label propagation and are therefore not included in the loss computation. 

\begin{remark}[Practical usage of GLL]
The proposed GLL-based batching scheme leads to several useful regimes in practice:
\begin{enumerate}
    \item \textbf{Semi-supervised learning.}
    In the standard semi-supervised setting, the training set decomposes as $\X = \mathcal{L} \cup \mathcal{U}$ with $\mathcal{L}$ labeled and $\mathcal{U}$ unlabeled. In each batch, the nodes in $\mathcal{B}_u$ are treated as unlabeled on the graph: they do not contribute directly to the loss, but they participate in graph construction and label propagation, and hence influence the predictions on $\mathcal{B}_l$. In this way the GLL can fully exploit the unlabeled data to improve classification on the labeled subset.

    \item \textbf{Choice of base dataset during training.}
    During training, the base dataset \(\mathcal{L}_b\) is selected at the beginning of each epoch, and all batches within the same epoch share the same base set \(\mathcal{L}_b\). We consider two strategies for constructing \(\mathcal{L}_b\), including a simple stratified random resampling scheme that preserves the class proportions of the full labeled set \(\mathcal{L}\) and an uncertainty-based selection method. The precise sampling procedures are described in Appendix~\ref{sec:base_dataset_sampling}.

    \item \textbf{Transductive inference at test time.}
    At test time, the GLL still requires a base set to construct the graph, but no backpropagation through the GLL classifier is needed. We can therefore first apply the feature encoder (the network backbone preceding the GLL classifier) to all available samples, including both training and test data, using standard mini-batch processing. Based on the resulting feature representations, we then construct a single large graph and solve the graph learning problem described in Section~\ref{sec:classic_gl} on this graph to obtain predictions for the test nodes and compute the test accuracy. Since this stage only involves linear algebra on fixed features and does not require gradients, it can be moved to the CPU to reduce GPU memory usage.
\end{enumerate}
\end{remark}

\subsubsection{Pretraining}
Due to its transductivity, GLL accuracy and learning behavior may degenerate if the feature embeddings are poor, which is often the case for an untrained network. To remedy this, we warm the model up with a pretext task that allows the feature extractor to learn basic features.
Prior works (\cite{brown2023utilizing, chen2025cgap}) show that feature representations learned from contrastive learning pre-training exhibit more pronounced clustering structure, leading to improved performance for graph-based learning methods such as GLL. 
We follow suit and warm-up the models using the contrastive methods SimCLR (\cite{chen2020simple}) or SupCon (\cite{khosla2020supervised}), depending on the availability of labeled data. 
Contrasive learning and the GLL are a natural pairing, as the loss functions for the contrastive methods use a similarity function that is effectively the same as the one used in the similarity graph construction in the GLL. Warming up the models with contrastive learning will thus help the model capture features and structures similar to the GLL and accelerate learning. 
The contrastive methods are discussed further in Appendix~\ref{sec:appendix_simclr}. 
Warm-ups were also used in previous work on graph-based classification heads (\cite{wang2021graph}), where the authors alternate between 400 epochs of MLP-based training and 5 epochs of WNLL-based training in order to stabilize learning. Conversely, we warm-up with contrastive learning, and then proceed with GLL training, with no alternating stages.

In practice, we consider a mixed training target with the SimCLR loss $\mathcal{L}_\mathrm{sim}$ \eqref{eq: SimCLR_loss} and the SupCon loss $\mathcal{L}_\mathrm{sup}$ \eqref{eq: SupCon_loss}. We only pretrain the feature encoder before the classifier in the neural network architecture since both losses do not require predicted labels. The pretraining loss is defined by:
\begin{equation}\label{eq:pretrain_loss}
    \mathcal{L}_\mathrm{pretrain} = \gamma \mathcal{L}_\mathrm{sim} + (1 - \gamma) \mathcal{L}_\mathrm{sup}, \gamma\in [0,1].
\end{equation}
The contrastive pretraining described above is agnostic to the choice of classification head and can therefore be used both with classical MLP classifiers and with our GLL classifier. For each pretraining batch, we draw a mixture of labeled and unlabeled samples from the full training set $\mathcal{X} = \mathcal{L} \cup \mathcal{U}$, in the same spirit as the batch construction in Section~\ref{sec:sampling_and_training_on_batches}. The SimCLR loss $\mathcal{L}_\mathrm{sim}$ is evaluated on all samples in the batch, regardless of whether they are labeled or unlabeled, since it does not require label information. By contrast, the SupCon loss $\mathcal{L}_\mathrm{sup}$ is computed only on those samples in the pretraining batch that have labels, using their ground-truth class assignments. In this way, the pretraining stage can exploit all available data while still benefiting from supervised signal when labels are present, and the resulting encoder can be paired with any downstream classifier, including GLL.

\begin{remark}[Choice of mixing parameter $\gamma$]
In practice, we choose the mixing parameter $\gamma$ in $\mathcal{L}_\mathrm{pretrain}$ \eqref{eq:pretrain_loss} from the discrete candidate set $\{0.01, 0.25, 0.5, 0.75, 0.99\}$. For each candidate value, we perform the contrastive pretraining stage to obtain a feature encoder and then feed the resulting feature vectors into a parameter-free graph learning classifier. We compute the resulting classification accuracy on the labeled subset $\mathcal{L} \subset \mathcal{X}$ of the training data and select the value of $\gamma$ that achieves the highest accuracy on this labeled subset.
\end{remark}



\subsubsection{Data Augmentation}\label{sec:data_augmentation}
Data augmentation is applied throughout both the contrastive pretraining stage without a classification head and the subsequent supervised training of the full network with different classifiers. This follows the standard practice in contrastive representation learning (\cite{chen2020simple,chen2020big,khosla2020supervised}), where multiple views of the same input are generated via stochastic transformations to form positive pairs. During evaluation on all benchmarks, we disable data augmentation and feed the original inputs to the encoder and classifier.

Concretely, at each training step we first sample a mini-batch of images from the training set and then apply a stochastic augmentation pipeline to each image. During contrastive pretraining, every sample in the batch is independently transformed twice to produce two augmented views, which are then used as positive pairs for the SimCLR and SupCon losses. In the fully supervised training phase, each data sample is augmented once on-the-fly to increase the diversity of training examples. 

The augmentation pipeline itself is based on a RandAugment-style (\cite{cubuk2019randaugment}) policy: for each image, a small number of transformations is drawn at random from a pool that includes common photometric and geometric operations, such as autocontrast, histogram equalization, brightness/contrast adjustments, rotations, shears, translations, and solarization, followed by a random cutout operation that masks out a square region of the image. For grayscale datasets we use a slightly modified variant of this policy that is tailored to single-channel images, e.g., removing color-only operations and using grayscale cutout. The full list of transformations and the exact sampling procedure are provided in Appendix~\ref{sec:appendix_augmentation}. RandAugment is applied to all samples in the batch, including those belonging to the base dataset $\mathcal{L}_b$.

}

\begin{figure}[!t]
    \centering
    \includegraphics[width=0.4\textwidth]{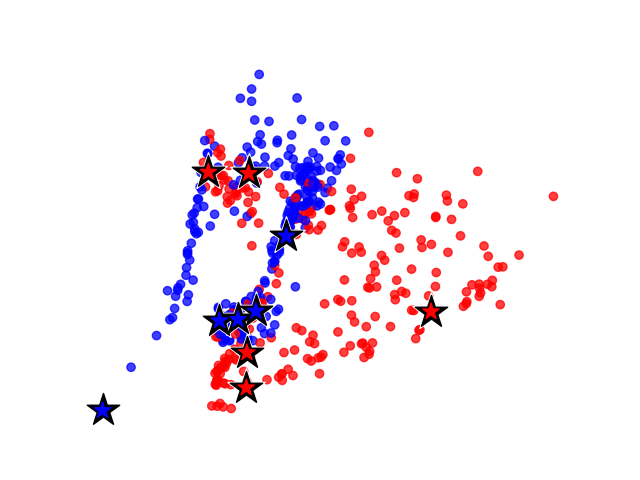}
    \caption{The initial embedding of the two moons dataset from the random seed. The blue and red correspond to the different classes, with the starred samples corresponding to the base samples for the respective classes.}
    \label{fig:tauinit}
\end{figure}

\subsection{Understanding Embeddings and the Effect of Tikhonov Regularization via Two Moons}\label{sec:toy}

In this subsection, we highlight the differences between the learned embeddings of neural networks when trained with a GLL compared to a softmax layer. Moreover, we demonstrate the effect on the embedding by modifying the decay parameter, \(\tau\), from Section \ref{sec:tau}. 

The softmax classifier uses no relational information and thus will create an embedding that leads to a straightforward separation based on the features alone. The GLL, using relational information, will learn to separate the data such that the generated similarity graph will be nearly disjoint by class (a totally disjoint graph can prevent learning). Therefore, we may expect the GLL to produce a more relaxed embedding, as a disjoint similarity graph does not necessarily require the data to be separated to a probability simplex. Increasing the Tikhonov regularization, $\tau$, inhibits label propagation and weakens predictions, meaning higher confidence predictions and lower loss will require a similarity graph that is even more separated. Specifically, increasing \(\tau\) will increase the learning rate and convergence speed of the process and should lead to more locally dense learned embeddings. We give a brief example of these effects in two dimensions with the simple toy dataset, two moons (\cite{scikit-learn})).

In this experiment, we perform supervised classification on a two-dimensional generated dataset of 500 samples from the two moons distribution with a noise of 0.1 and random state of one. The network was trained with a stochastic gradient descent optimizer with a learning rate of 0.01. 

The architecture backbone for this experiment is a simple multi-layer perceptron neural network with three fully connected (FC) layers and two ReLU activation layers. The hidden layer has a size of 64 and the output size is two for two dimensional visualization purposes. We choose an initialization for the neural network such that the initial embedding has sufficiently mixed samples (Figure \ref{fig:tauinit}). As the dataset is simplistic, there will be no projection head in this comparison, meaning the only difference between the standard MLP model and the GLL model will be purely the softmax and GLL classifiers. 

We arbitrarily fix the first ten samples, five from each class, to act as the base samples which propagate the labels. As the dataset is small, we load all the samples in each batch so there is no concern about partial class representation. To make the results more interpretable, we fix the \(\eps_k\) term, from the similarity equation, Equation \ref{eq:weightform}, to be fixed at one. 

The embeddings for this ablation study are shown in Figure \ref{fig:grid}. We compare five values of \(\tau\): 0, 0.001, 0.01, 0.1, and 0.5. The figure shows the learned embedding of the dataset from the neural network across lengthening training epochs. 

We note that this regime is deliberately constrained to two dimensions, meaning that the network may struggle to 'move' the embeddings from one class beyond the other class's base samples, as approaching the opposing base samples will increase the loss. The seed for these samples was chosen to be difficult and we note that the problem is easier in a higher dimensional space. Despite this, after sufficient training, the model successfully separates the data. 

\begin{figure}[!t]
    \centering
    \renewcommand{\thesubfigure}{\arabic{subfigure}} 
    \setlength{\tabcolsep}{0pt} 
    \renewcommand{\arraystretch}{0} 
    \begin{tabular}{ >{\centering\arraybackslash} m{2cm} >{\centering\arraybackslash} m{2cm} >{\centering\arraybackslash} m{2cm} >{\centering\arraybackslash} m{2cm} >{\centering\arraybackslash} m{2cm} >{\centering\arraybackslash} m{2cm} >{\centering\arraybackslash} m{2cm}}
        & Ep. 0 & Ep. 500 & Ep. 2e3 & Ep. 1e4 & Ep. 5e4 & Ep. 1.5e5 \\
        
        \(\tau=0\)&
        \begin{subfigure}[b]{0.15\textwidth}
            \includegraphics[width=\textwidth]{figures/tau/0/1_Emb.png}
        \end{subfigure} &
        \begin{subfigure}[b]{0.15\textwidth}
            \includegraphics[width=\textwidth]{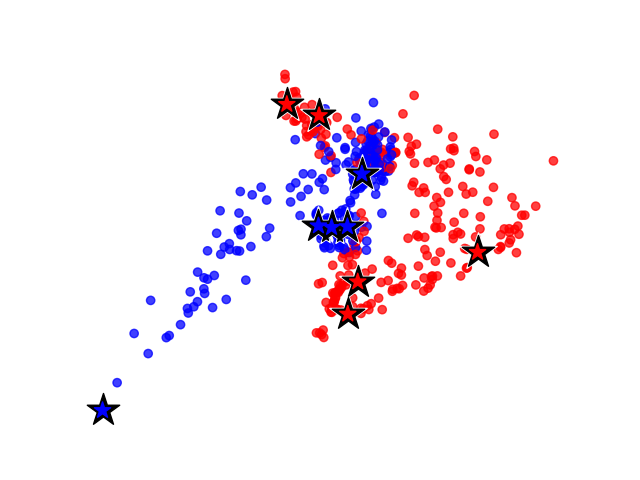}
        \end{subfigure} & 
        \begin{subfigure}[b]{0.15\textwidth}
            \includegraphics[width=\textwidth]{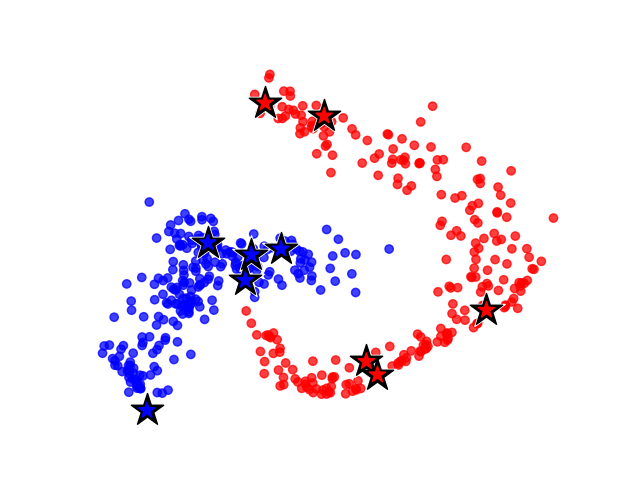}
        \end{subfigure} &
        \begin{subfigure}[b]{0.15\textwidth}
            \includegraphics[width=\textwidth]{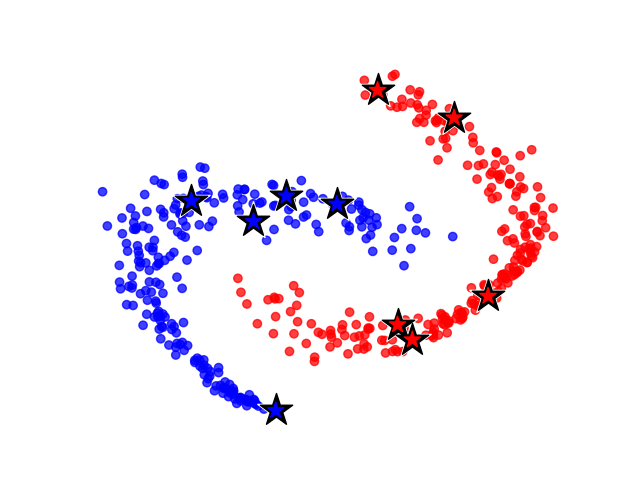}
        \end{subfigure} &
        \begin{subfigure}[b]{0.15\textwidth}
            \includegraphics[width=\textwidth]{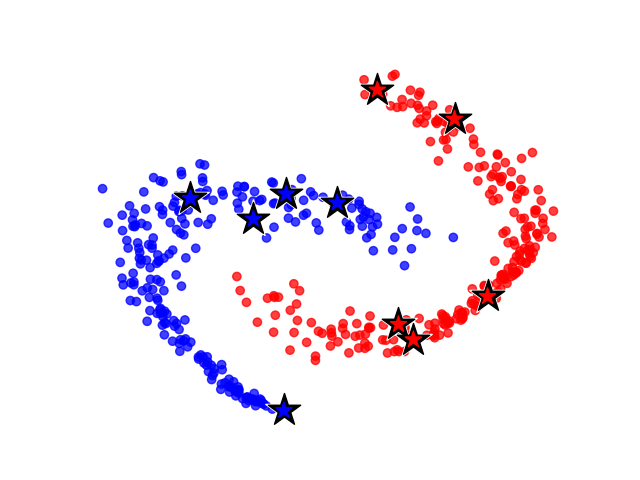}
        \end{subfigure} &
        \begin{subfigure}[b]{0.15\textwidth}
            \includegraphics[width=\textwidth]{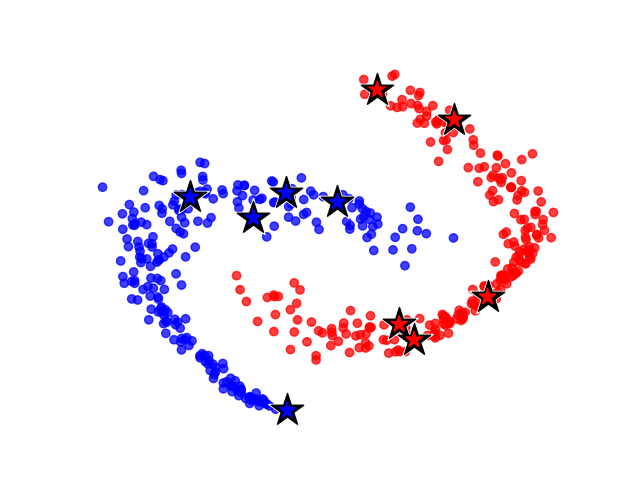}
        \end{subfigure} \\

        \(\tau=0.001\) &
                \begin{subfigure}[b]{0.15\textwidth}
            \includegraphics[width=\textwidth]{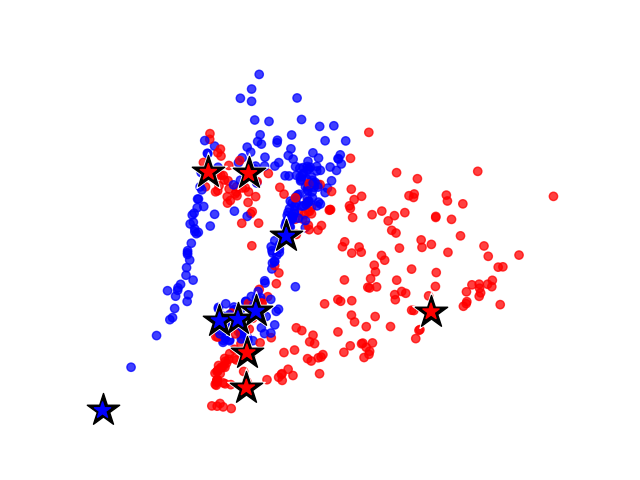}
        \end{subfigure} &
        \begin{subfigure}[b]{0.15\textwidth}
            \includegraphics[width=\textwidth]{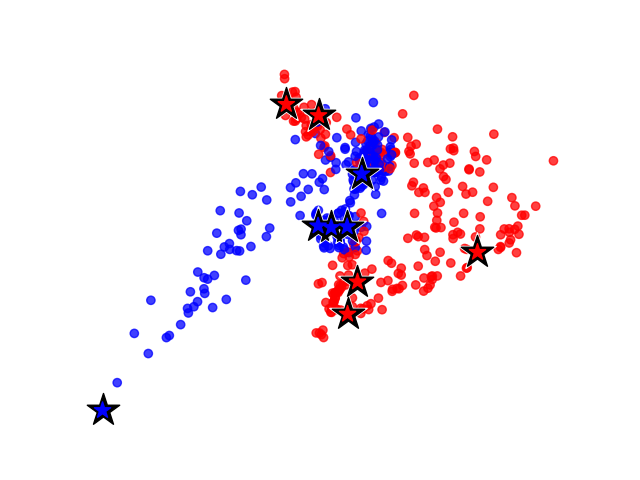}
        \end{subfigure} & 
        \begin{subfigure}[b]{0.15\textwidth}
            \includegraphics[width=\textwidth]{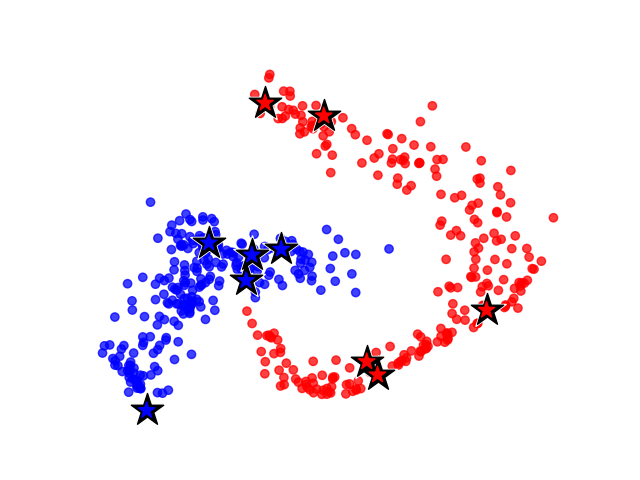}
        \end{subfigure} &
        \begin{subfigure}[b]{0.15\textwidth}
            \includegraphics[width=\textwidth]{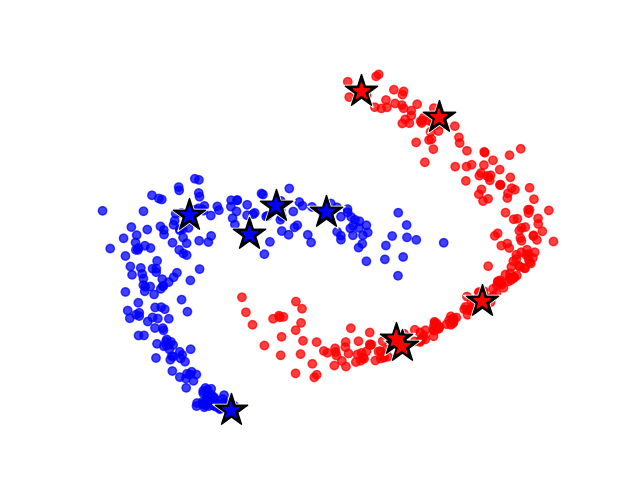}
        \end{subfigure} &
        \begin{subfigure}[b]{0.15\textwidth}
            \includegraphics[width=\textwidth]{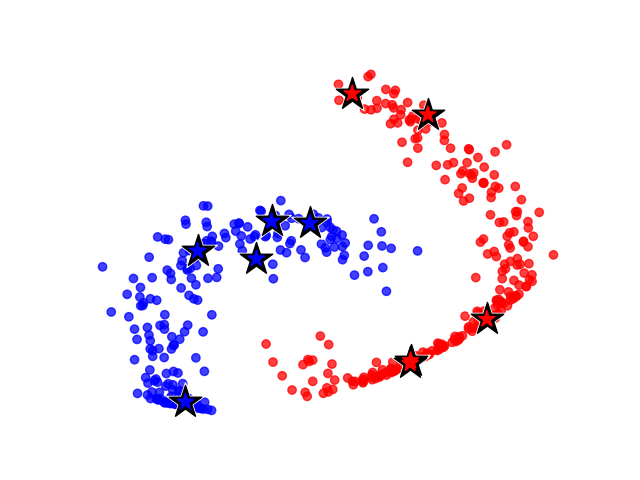}
        \end{subfigure} &
        \begin{subfigure}[b]{0.15\textwidth}
            \includegraphics[width=\textwidth]{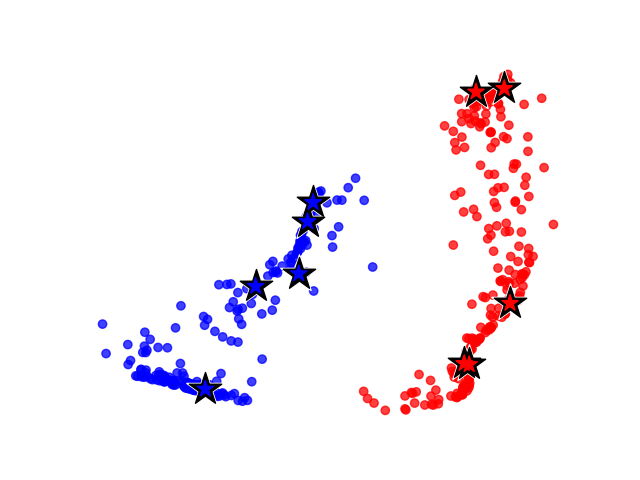}
        \end{subfigure} \\

        \(\tau=0.01\) &
                \begin{subfigure}[b]{0.15\textwidth}
            \includegraphics[width=\textwidth]{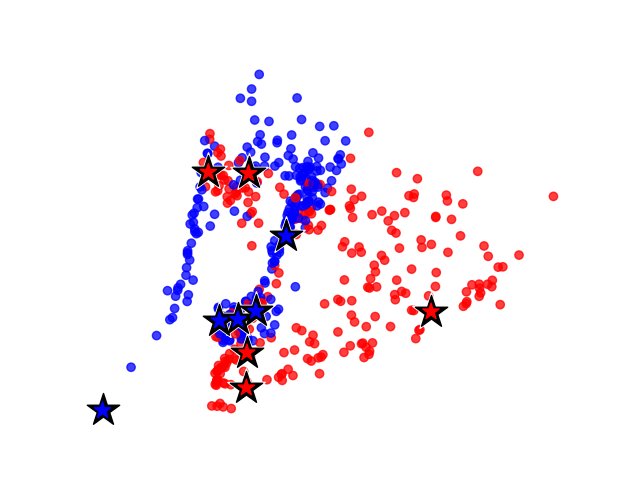}
        \end{subfigure} &
        \begin{subfigure}[b]{0.15\textwidth}
            \includegraphics[width=\textwidth]{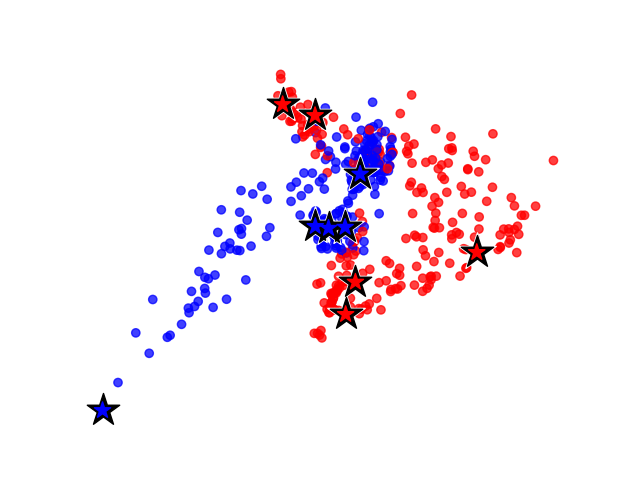}
        \end{subfigure} & 
        \begin{subfigure}[b]{0.15\textwidth}
            \includegraphics[width=\textwidth]{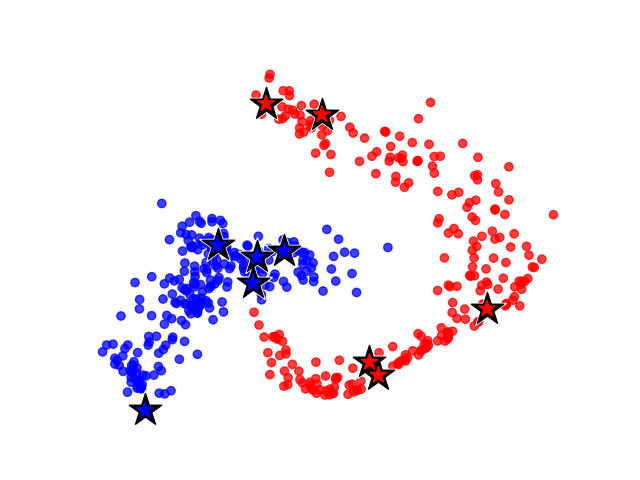}
        \end{subfigure} &
        \begin{subfigure}[b]{0.15\textwidth}
            \includegraphics[width=\textwidth]{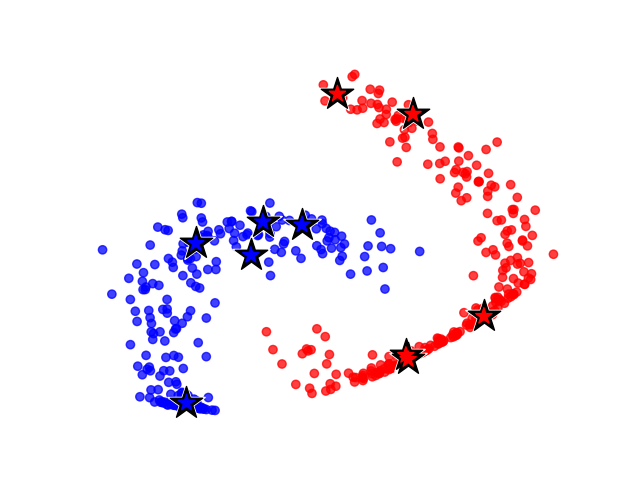}
        \end{subfigure} &
        \begin{subfigure}[b]{0.15\textwidth}
            \includegraphics[width=\textwidth]{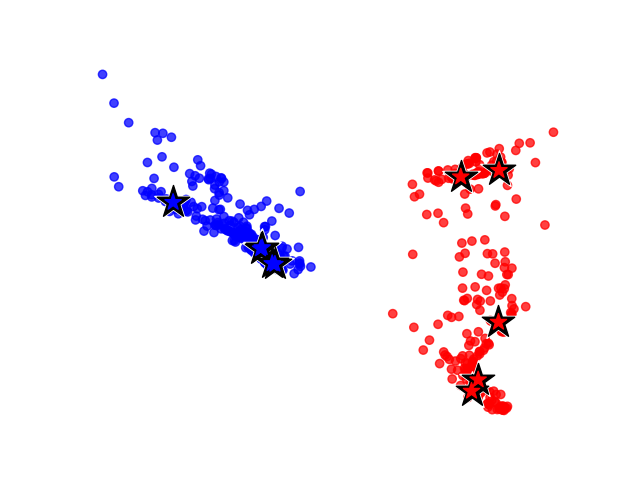}
        \end{subfigure} &
        \begin{subfigure}[b]{0.15\textwidth}
            \includegraphics[width=\textwidth]{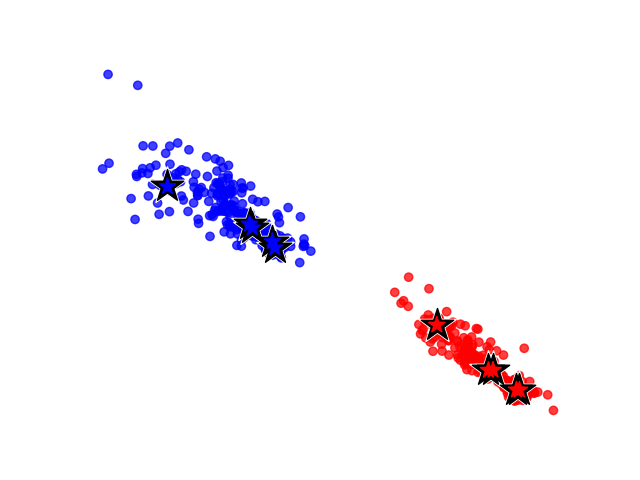}
        \end{subfigure} \\

        \(\tau=0.1\) &
                \begin{subfigure}[b]{0.15\textwidth}
            \includegraphics[width=\textwidth]{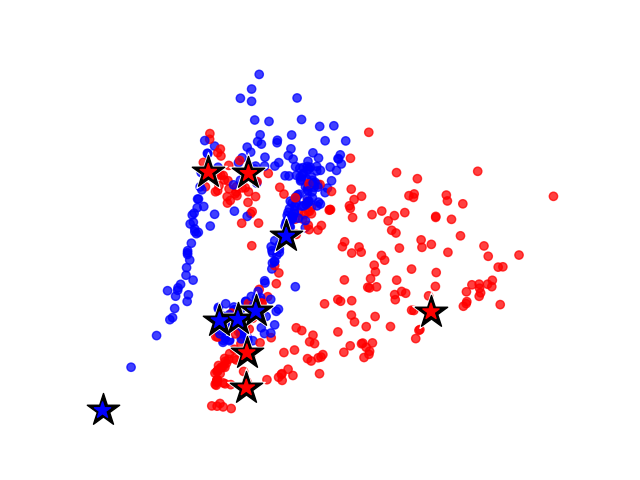}
        \end{subfigure} &
        \begin{subfigure}[b]{0.15\textwidth}
            \includegraphics[width=\textwidth]{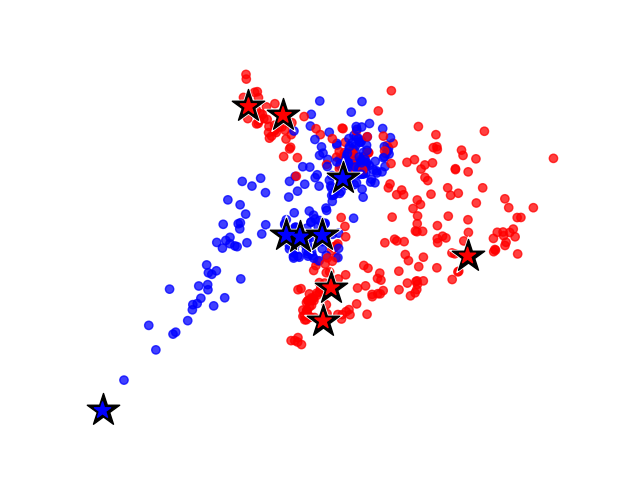}
        \end{subfigure} & 
        \begin{subfigure}[b]{0.15\textwidth}
            \includegraphics[width=\textwidth]{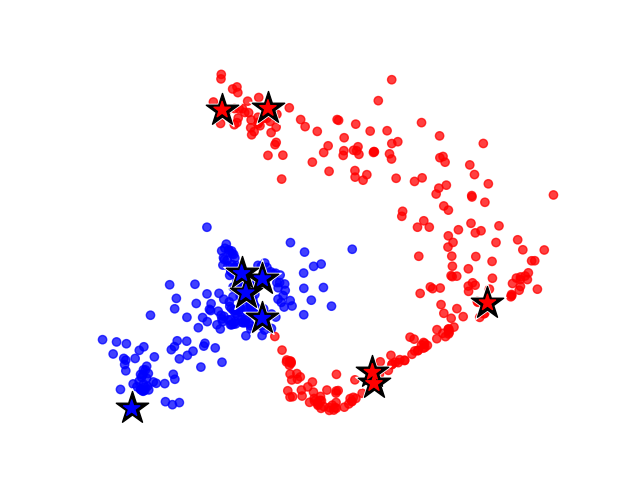}
        \end{subfigure} &
        \begin{subfigure}[b]{0.15\textwidth}
            \includegraphics[width=\textwidth]{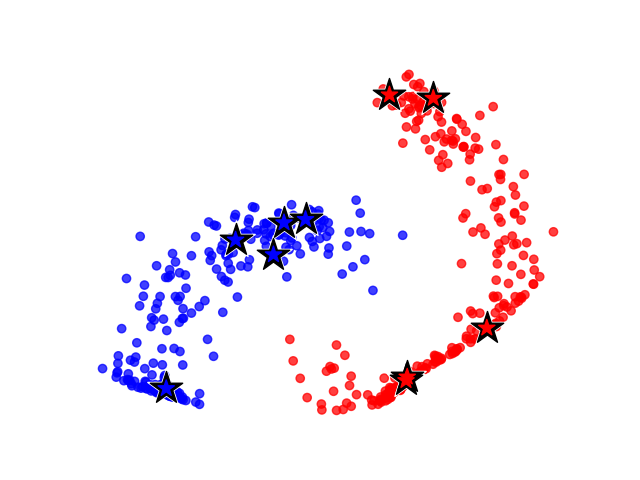}
        \end{subfigure} &
        \begin{subfigure}[b]{0.15\textwidth}
            \includegraphics[width=\textwidth]{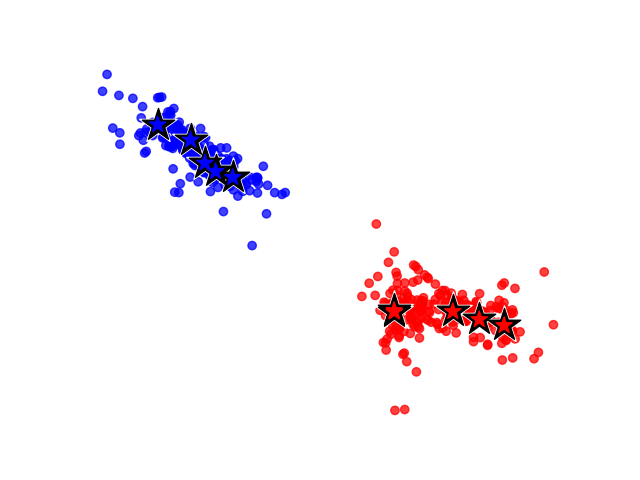}
        \end{subfigure} &
        \begin{subfigure}[b]{0.15\textwidth}
            \includegraphics[width=\textwidth]{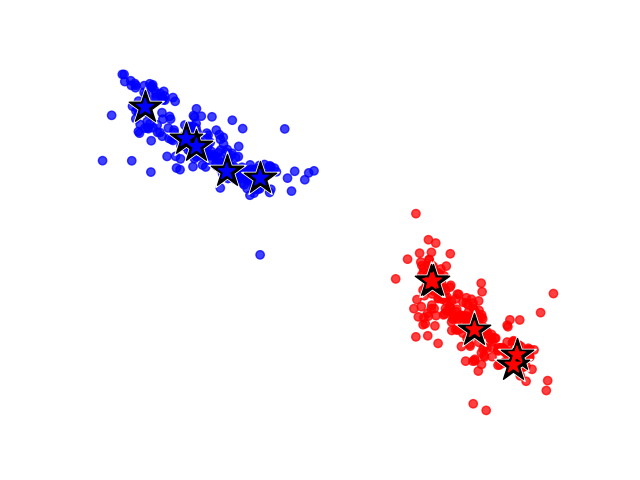}
        \end{subfigure} \\

        \(\tau=0.5\) &
                \begin{subfigure}[b]{0.15\textwidth}
            \includegraphics[width=\textwidth]{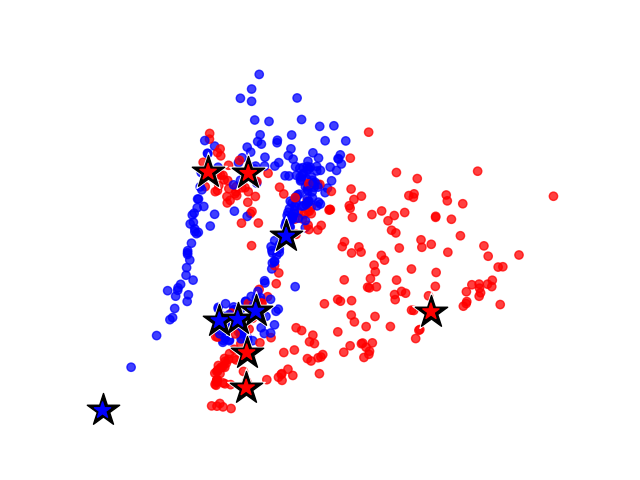}
        \end{subfigure} &
        \begin{subfigure}[b]{0.15\textwidth}
            \includegraphics[width=\textwidth]{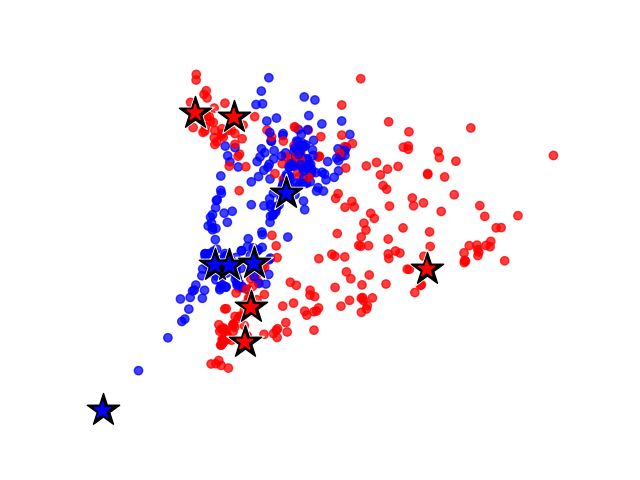}
        \end{subfigure} & 
        \begin{subfigure}[b]{0.15\textwidth}
            \includegraphics[width=\textwidth]{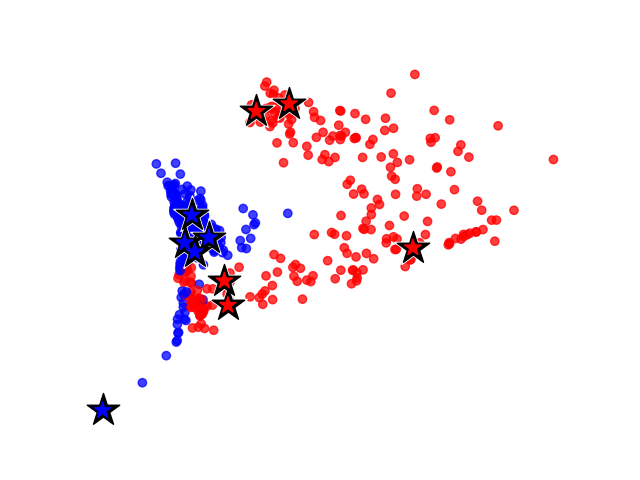}
        \end{subfigure} &
        \begin{subfigure}[b]{0.15\textwidth}
            \includegraphics[width=\textwidth]{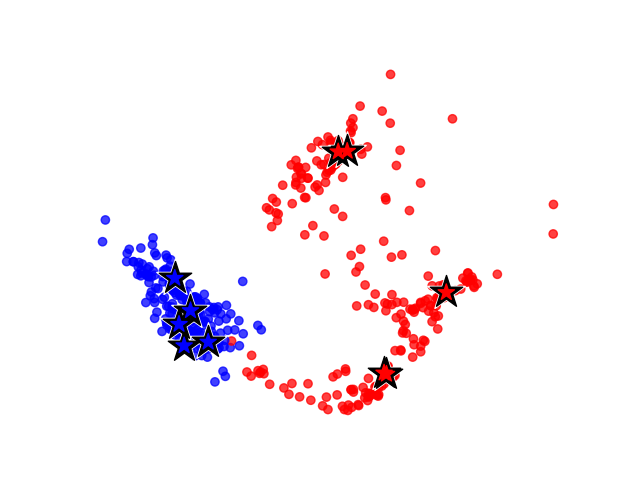}
        \end{subfigure} &
        \begin{subfigure}[b]{0.15\textwidth}
            \includegraphics[width=\textwidth]{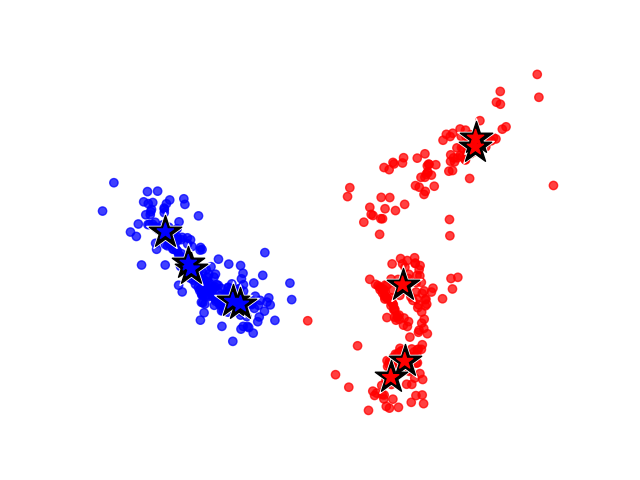}
        \end{subfigure} &
        \begin{subfigure}[b]{0.15\textwidth}
            \includegraphics[width=\textwidth]{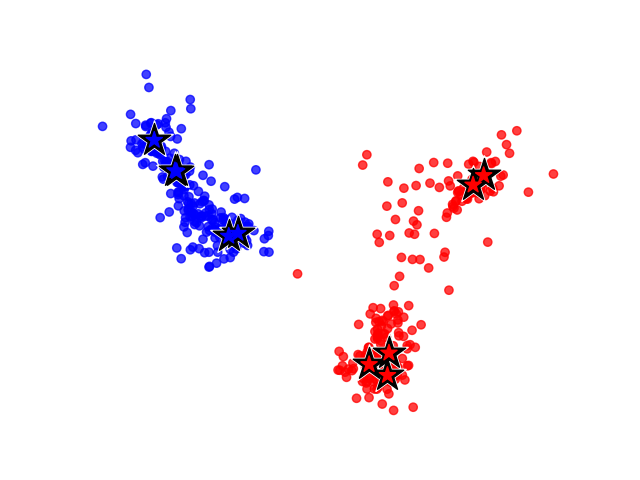}
        \end{subfigure} \\

        Softmax &
                \begin{subfigure}[b]{0.15\textwidth}
            \includegraphics[width=\textwidth]{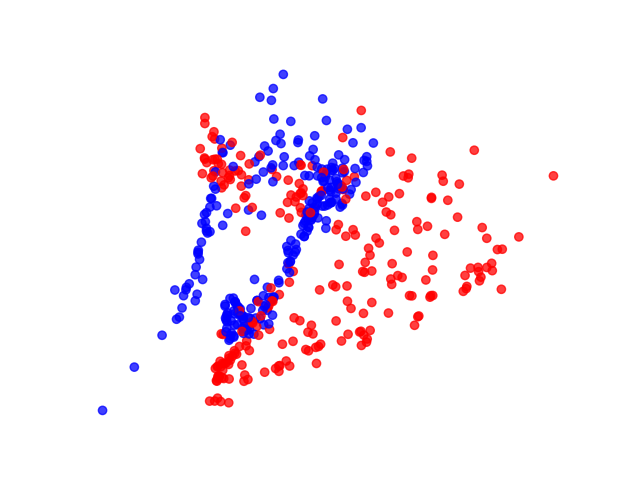}
        \end{subfigure} &
        \begin{subfigure}[b]{0.15\textwidth}
            \includegraphics[width=\textwidth]{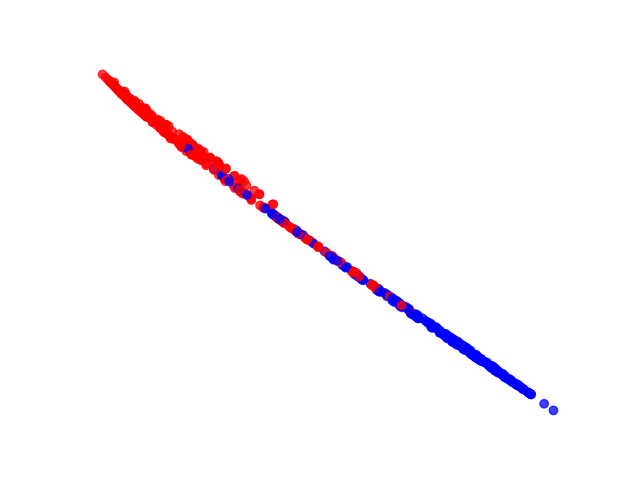}
        \end{subfigure} & 
        \begin{subfigure}[b]{0.15\textwidth}
            \includegraphics[width=\textwidth]{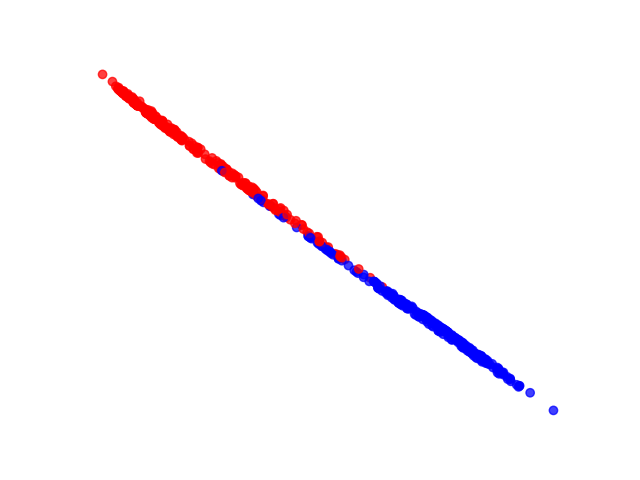}
        \end{subfigure} &
        \begin{subfigure}[b]{0.15\textwidth}
            \includegraphics[width=\textwidth]{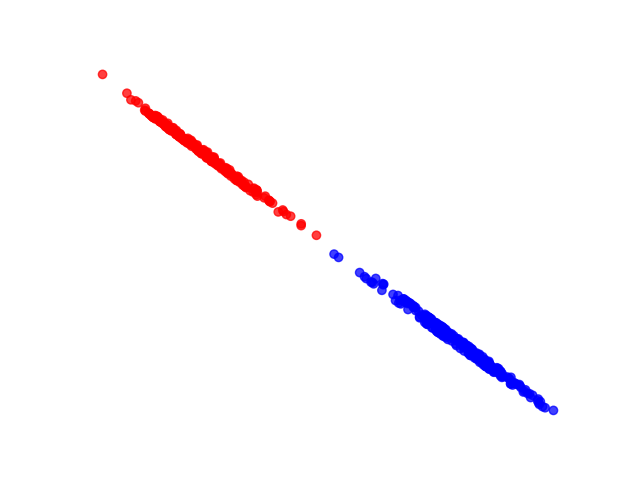}
        \end{subfigure} &
        \begin{subfigure}[b]{0.15\textwidth}
            \includegraphics[width=\textwidth]{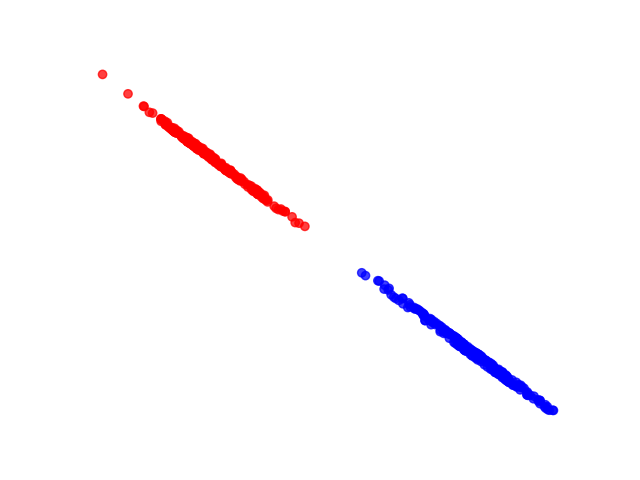}
        \end{subfigure} &
        \begin{subfigure}[b]{0.15\textwidth}
            \includegraphics[width=\textwidth]{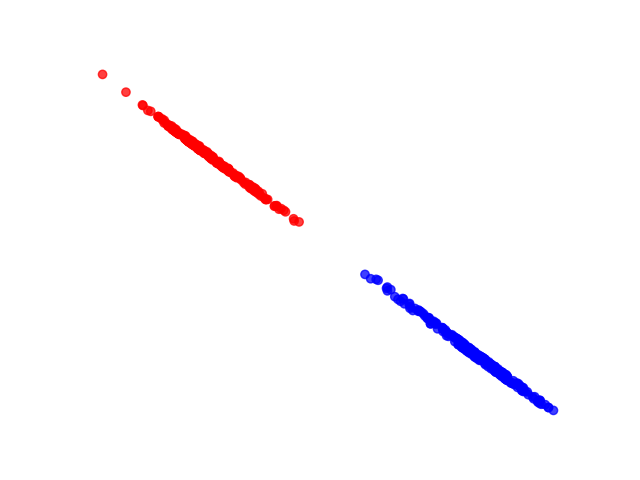}
        \end{subfigure} \\
    \end{tabular}
    \caption{Visualization of effect of \(\tau\) on spatial embeddings. Each row corresponds to a graph learning model with a specified value of \(\tau\) or a softmax classification head. The columns correspond to the epoch of the training. With low values of \(\tau\), the model separates the data by class and resembles the original two moons shape. Increasing \(\tau\) forces tighter clustering and localization. The softmax layer simply learns a linear separation with no regard to relational information. Further training with the softmax layer increases the scale, which is not seen without the axes. }
    \label{fig:grid}
\end{figure}

Examining the results, the baseline MLP encoder trained with the softmax layer simply separated the data approximately along a line. However, the same encoder architecture trained with the GLL will recover the original two moons shape, implying that the two moons shape induces a strongly separated $k$-nearest neighbors similarity graph. 

The addition of the \(\tau\) decay parameter will encourage a tighter clustering of the classes around the base labels. For low values of \(\tau\), we still see initial convergence to the two moons before diverging into tighter clusters around the base samples. If the \(\tau\) value is large, the divergence is more immediate and is seen splintering one of the classes into multiple clusters localized about different base samples. This can be rationalized as the base samples pulling the unlabeled samples closer to themselves, strengthening their own influence and dampening the influence and learning from distant base samples. This is because increasing \(\tau\) acts as an exponential dampening term which localizes the influence of the base samples.  

{\color{blue}
\subsection{Toy Example: Over-parameterized CNN on FashionMNIST}\label{sec:comparison_mlp_gll}

In this subsection, we focus on a toy example on the FashionMNIST dataset to compare our GLL classifier with a MLP classifier. The goal is to study a setting where the backbone network is intentionally over-parameterized and challenging to optimize, so that the effect of replacing the MLP classifier by GLL becomes more evident.

\paragraph{Experimental setup.}
We use the FashionMNIST dataset (\cite{xiao2017fashion}) and construct a simple convolutional neural network as the feature encoder. The architecture is summarized in Table~\ref{tab:structure_mlp_gll}. The feature encoder consists of three convolutional layers followed by max-pooling layers and a fully connected (FC) layer. Each convolutional and FC layer is followed by a ReLU activation function, which is omitted from the table for brevity. The resulting encoder is purposely over-parameterized relative to FashionMNIST, and we will see that it is difficult to train effectively when combined with a standard MLP classifier.

\begin{table}[htbp]
\centering
\begin{tabular}{|c|c|c|c|}
\hline
\textbf{Structure} & \textbf{Layer} & \textbf{Details} & \textbf{Feature Size}
\\\hline
Input Image & - & From FashionMNIST Dataset & 28 $\times$ 28 $\times$ 1 \\
\hline
\multirow{6}{*}{\centering\parbox{2.5cm}{\centering Feature\\Encoder}} & Conv1 & 1 $\rightarrow$ 64, kernel size: 3, padding: 1 & 28 $\times$ 28 $\times$ 64 \\
& Conv2 & 64 $\rightarrow$ 128, kernel size: 3, padding: 1 & 28 $\times$ 28 $\times$ 128 \\
& MaxPool & kernel size: 2, stride: 2 & 14 $\times$ 14 $\times$ 128 \\
& Conv3 & 128 $\rightarrow$ 256, kernel size: 3, padding: 1 & 14 $\times$ 14 $\times$ 256 \\
& MaxPool & kernel size: 2, stride: 2 & 7 $\times$ 7 $\times$ 256 \\
& FC & 256 $\times$ 7 $\times$ 7 $\rightarrow$ 128 & 128 \\
\hline
\multirow{3}{*}{\parbox{2.5cm}{\centering MLP\\Classifier}} & FC1 & 128 $\rightarrow$ 1024 & 1024 \\
& FC2 & 1024 $\rightarrow$ 10 & 10 \\
& Softmax & dim: 1 & 10 \\
\hline
\end{tabular}
\caption{Network structure used in the FashionMNIST experiments (Section~\ref{sec:comparison_mlp_gll}). The feature encoder is deliberately over-parameterized to create a challenging training scenario, and the model contains 1,975,424 parameters in the feature encoder and 142,346 parameters in the MLP classifier.}
\label{tab:structure_mlp_gll}
\end{table}

The MLP classifier consists of two fully connected layers followed by a softmax layer, as shown in Table~\ref{tab:structure_mlp_gll}. To compare with our GLL classifier, we keep the feature encoder architecture fixed and replace the MLP classifier by GLL, which has no trainable parameters. In all cases, the final outputs are used to compute the cross-entropy loss with the ground-truth labels. All experiments in this subsection are fully supervised and use the entire FashionMNIST training set.

\paragraph{Training strategies.}
We consider four training strategies on FashionMNIST, all sharing the same feature encoder:

\begin{itemize}
\item \textbf{MLP only (100 MLP).} We train the network with the MLP classifier from scratch for 100 epochs. During this process, the encoder output is also passed to a GLL classifier in parallel. Since GLL has no trainable parameters, this does not affect the training dynamics. Throughout MLP training, we use the GLL outputs to compute loss (without backpropagation) and test error, allowing a direct comparison between MLP and GLL under the same encoder.

\item \textbf{GLL-0 (50 GLL).} We train the network from scratch using GLL as the classifier for 50 epochs. In the plots we refer to this strategy as GLL-0.

\item \textbf{GLL-50 (50 MLP + 50 GLL).} We first train the network with the MLP classifier for 50 epochs. Starting from epoch 51, we switch the classifier to GLL and continue training for another 50 epochs, using the gradients induced by the GLL loss.

\item \textbf{GLL-75 (75 MLP + 25 GLL).} We first train with the MLP classifier for 75 epochs, and then switch to GLL for the remaining 25 epochs.
\end{itemize}

\paragraph{Test error comparison.}
Table~\ref{tab:network_comparison_mlp_vs_gll} reports the final test classification errors in \%. For the MLP-only strategy, we report the test error of the MLP classifier, which is used for training, and the test error of the GLL classifier evaluated on the same encoder. For the remaining strategies, we report the test error of the GLL classifier at the end of training.

\begin{table}[t]
\centering
\small
\begin{tabular}{lllc}
\toprule
Dataset & Trial & Epochs & Test error (\%) \\
\midrule
FashionMNIST & MLP only (MLP) & 100 MLP & 45.42 \\
FashionMNIST & MLP only (GLL) & 100 MLP & 14.66 \\
FashionMNIST & GLL only (GLL-0) & 50 GLL & \textbf{8.90} \\
FashionMNIST & GLL-50 & 50 MLP + 50 GLL & \textbf{8.68} \\
FashionMNIST & GLL-75 & 75 MLP + 25 GLL & \textbf{8.91} \\
\bottomrule
\end{tabular}
\caption{Test classification error rates (\%, lower is better) on FashionMNIST for the customized network in Table~\ref{tab:structure_mlp_gll} under different training strategies. For the ``MLP only'' strategy we report both the error of the MLP classifier and that of a GLL classifier evaluated on the same encoder (GLL). For the remaining strategies, GLL is used as the classifier during training and evaluation.}
\label{tab:network_comparison_mlp_vs_gll}
\end{table}

Figure~\ref{fig:mlp_vs_gl} shows the evolution of the training loss and test accuracy over epochs. Figure~\ref{fig:gll_vs_mlp_loss} presents the training loss curves for all four strategies, while Figure~\ref{fig:gll_vs_mlp_acc} presents the corresponding test accuracies. The curves for GLL-50 and GLL-75 start at epochs 50 and 75, respectively, since these strategies switch to GLL after an initial MLP training phase.

\begin{figure}[t]
\centering
\begin{subfigure}{0.48\textwidth}
  \centering
  \includegraphics[width=\textwidth]{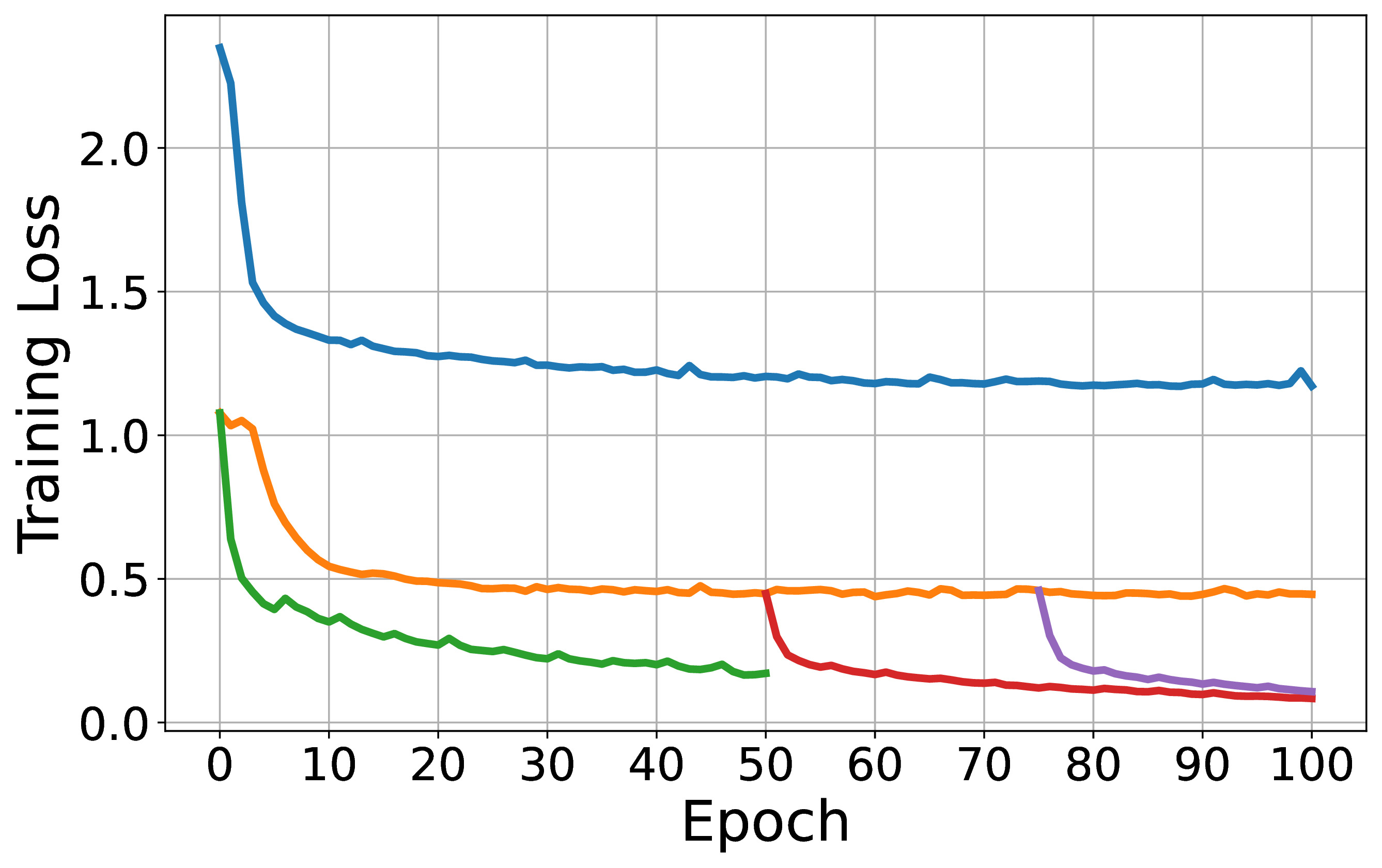}
  \caption{Training loss curves}
  \label{fig:gll_vs_mlp_loss}
\end{subfigure}
\begin{subfigure}{0.48\textwidth}
  \centering
  \includegraphics[width=\textwidth]{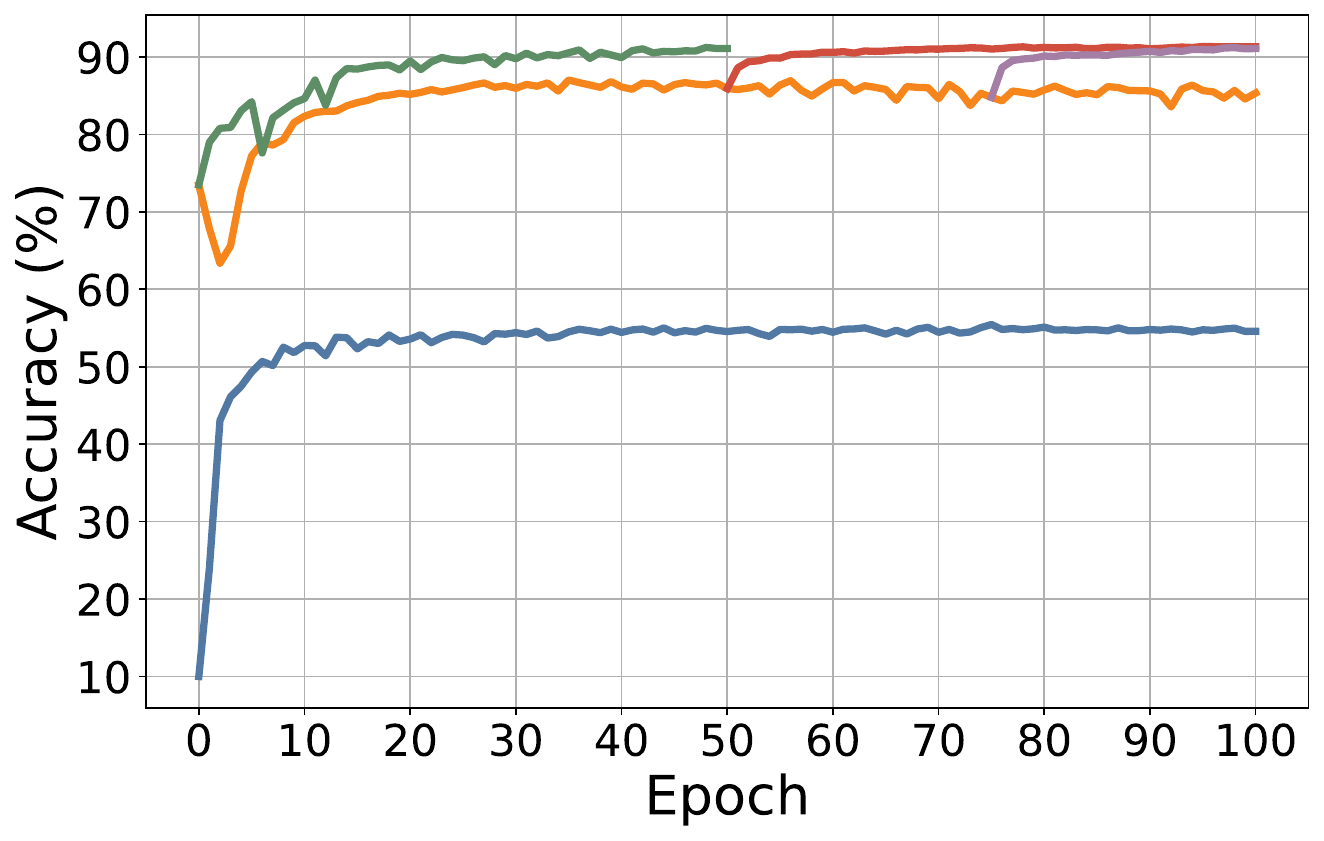}
  \caption{Test accuracy curves}
  \label{fig:gll_vs_mlp_acc}
\end{subfigure}  
\begin{subfigure}{\textwidth}
  \centering
  \includegraphics[width=\textwidth, trim={0 2cm 0 0},clip]{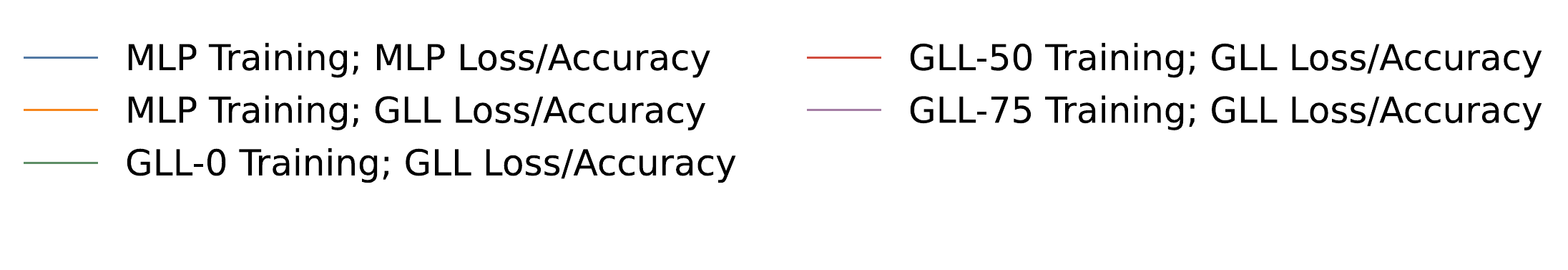}
\end{subfigure}
\caption{Training loss and test accuracy of four training strategies on the \textbf{FashionMNIST} dataset with the customized feature encoder in Table~\ref{tab:structure_mlp_gll}. The strategies are: (1) MLP trained from scratch, (2) GLL trained from scratch (GLL-0), (3) GLL-50 (GLL after 50 epochs of MLP training), and (4) GLL-75 (GLL after 75 epochs of MLP training). For the MLP-only strategy, both the MLP and GLL losses and accuracies are recorded during training; for the remaining strategies, only the GLL classifier is used.}
\label{fig:mlp_vs_gl}
\end{figure}

From Table~\ref{tab:network_comparison_mlp_vs_gll}, the MLP classifier trained from scratch performs poorly: after 100 epochs it still incurs a test error of 45.42\%, even though the GLL classifier reaches a much lower test error of 14.66\% when evaluated on the same encoder. This suggests that the bottleneck is not the representation learned by the encoder, but rather the optimization of the over-parameterized MLP classifier. In contrast, when we replace the MLP with the parameter-free GLL classifier and train the same backbone (GLL-0), the test error decreases to 8.90\%.

The mixed strategies GLL-50 and GLL-75 behave similarly. Although the MLP training tends to stagnate, switching to GLL mid-training yields stable decreases in the loss and further improvements in test error, reaching 8.68\% and 8.91\%, respectively. This indicates that once the encoder has learned a reasonable representation, GLL can immediately exploit it more effectively than the MLP classifier, and can also continue to drive smooth optimization of the encoder.

We summarize the main observations from this toy example:

\begin{enumerate}
\item \textit{Error under a fixed encoder.} Even when the encoder is trained only with the MLP classifier, the GLL achieves substantially lower loss and lower test error than the MLP on the same features. Simply replacing the MLP classifier by GLL can therefore yield a large performance gain without modifying the encoder architecture.

\item \textit{Robustness to over-parameterization of the classifier.} The over-parameterized MLP classifier is difficult to optimize and can suffer from issues such as vanishing gradients or dying ReLUs, leading to training stagnation. The GLL classifier, having no trainable parameters, avoids these optimization pathologies and works well even when the backbone is not carefully designed.

\item \textit{Smooth training dynamics.} When training starts with an MLP classifier and then switches to GLL (GLL-50 and GLL-75), the loss immediately resumes decreasing and the test accuracy improves further. This shows that GLL can effectively unlock stalled training and provide a smoother optimization trajectory for the shared encoder.
\end{enumerate}
}

\subsection{Large Scale Experiments}\label{sec:large_experiments}

\begin{table}[htbp]
  \centering
  \footnotesize
  \begin{tabular}{lrrr}
    \hline
    Network & \multicolumn{3}{c}{Params} \\
    \cline{2-4}
            & Encoder & MLP head & Total \\
    \hline
    VGG11 & 9,223,232 & 332,778 & 9,556,010 \\
    VGG13 & 9,407,936 & 332,778 & 9,740,714 \\
    ResNet20 & 271,824 & 16,938 & 288,762 \\
    ResNet32 & 466,256 & 16,938 & 483,194 \\
    ResNet44 & 660,688 & 16,938 & 677,626 \\
    ResNet56 & 855,120 & 16,938 & 872,058 \\
    ResNet110 & 1,730,064 & 16,938 & 1,747,002 \\
    ResNet18 & 11,168,832 & 332,778 & 11,501,610 \\
    PreActResNet18 & 11,171,146 & 332,778 & 11,503,924 \\
    \hline
  \end{tabular}
  \caption{Parameter counts by network}
  \label{tab:network_params_simple}
\end{table}

\begin{table}[t]
\centering
\small
\resizebox{\linewidth}{!}{%
\begin{tabular}{lrrrrrrrrrr}
\toprule
Train Num & \multicolumn{3}{c}{Whole (50000)} & \multicolumn{3}{c}{20\% (10000)} & \multicolumn{3}{c}{2\% (1000)} \\
\cmidrule(lr){1-1}\cmidrule(lr){2-4}\cmidrule(lr){5-7}\cmidrule(lr){8-10}
Network & MLP & WNLL & GLL & MLP & WNLL & GLL & MLP & WNLL & GLL \\
\midrule
VGG11 & \underline{7.77{\scriptsize$\,\pm\,$}{\scriptsize 0.22}} & 
         \underline{7.71{\scriptsize$\,\pm\,$}{\scriptsize 0.10}} & 
         \textbf{7.48{\scriptsize$\,\pm\,$}{\scriptsize 0.19}} & 
         15.16{\scriptsize$\,\pm\,$}{\scriptsize 0.58} & 
         14.74{\scriptsize$\,\pm\,$}{\scriptsize 0.27} & 
         \textbf{13.14{\scriptsize$\,\pm\,$}{\scriptsize 0.09}} & 
         34.76{\scriptsize$\,\pm\,$}{\scriptsize 1.38} & 
         34.97{\scriptsize$\,\pm\,$}{\scriptsize 1.57} & 
         \textbf{29.59{\scriptsize$\,\pm\,$}{\scriptsize 0.12}} \\
VGG13 & 6.16{\scriptsize$\,\pm\,$}{\scriptsize 0.31} & 
        6.20{\scriptsize$\,\pm\,$}{\scriptsize 0.23} & 
        \textbf{5.80{\scriptsize$\,\pm\,$}{\scriptsize 0.03}} & 
        12.68{\scriptsize$\,\pm\,$}{\scriptsize 0.48} & 
        12.27{\scriptsize$\,\pm\,$}{\scriptsize 0.19} & 
        \textbf{11.03{\scriptsize$\,\pm\,$}{\scriptsize 0.04}} & 
        31.61{\scriptsize$\,\pm\,$}{\scriptsize 1.17} & 
        31.31{\scriptsize$\,\pm\,$}{\scriptsize 0.49} & 
        \textbf{26.90{\scriptsize$\,\pm\,$}{\scriptsize 0.18}} \\
ResNet20 & 9.25{\scriptsize$\,\pm\,$}{\scriptsize 0.96} & 
           \underline{7.99{\scriptsize$\,\pm\,$}{\scriptsize 0.33}} & 
           \textbf{7.49{\scriptsize$\,\pm\,$}{\scriptsize 0.24}} & 
           15.60{\scriptsize$\,\pm\,$}{\scriptsize 0.74} & 
           \underline{14.76{\scriptsize$\,\pm\,$}{\scriptsize 0.68}} & 
           \textbf{14.00{\scriptsize$\,\pm\,$}{\scriptsize 0.31}} & 
           33.35{\scriptsize$\,\pm\,$}{\scriptsize 1.81} & 
           32.04{\scriptsize$\,\pm\,$}{\scriptsize 0.58} & 
           \textbf{31.25{\scriptsize$\,\pm\,$}{\scriptsize 0.19}} \\
ResNet32 & 8.45{\scriptsize$\,\pm\,$}{\scriptsize 0.96} & 
           7.26{\scriptsize$\,\pm\,$}{\scriptsize 0.29} & 
           \textbf{6.61{\scriptsize$\,\pm\,$}{\scriptsize 0.15}} & 
           14.78{\scriptsize$\,\pm\,$}{\scriptsize 0.58} & 
           13.63{\scriptsize$\,\pm\,$}{\scriptsize 0.24} & 
           \textbf{13.04{\scriptsize$\,\pm\,$}{\scriptsize 0.13}} & 
           33.93{\scriptsize$\,\pm\,$}{\scriptsize 1.76} & 
           \textbf{30.82{\scriptsize$\,\pm\,$}{\scriptsize 0.79}} & 
           \underline{32.96{\scriptsize$\,\pm\,$}{\scriptsize 1.44}} \\
ResNet44 & 8.07{\scriptsize$\,\pm\,$}{\scriptsize 0.78} & 
           6.57{\scriptsize$\,\pm\,$}{\scriptsize 0.11} & 
           \textbf{6.12{\scriptsize$\,\pm\,$}{\scriptsize 0.12}} & 
           14.38{\scriptsize$\,\pm\,$}{\scriptsize 0.82} & 
           13.23{\scriptsize$\,\pm\,$}{\scriptsize 0.33} & 
           \textbf{12.51{\scriptsize$\,\pm\,$}{\scriptsize 0.34}} & 
           \underline{33.76{\scriptsize$\,\pm\,$}{\scriptsize 2.07}} & 
           \underline{31.85{\scriptsize$\,\pm\,$}{\scriptsize 1.46}} & 
           \textbf{31.69{\scriptsize$\,\pm\,$}{\scriptsize 0.70}} \\
ResNet56 & 7.69{\scriptsize$\,\pm\,$}{\scriptsize 0.61} & 
           6.17{\scriptsize$\,\pm\,$}{\scriptsize 0.31} & 
           \textbf{5.62{\scriptsize$\,\pm\,$}{\scriptsize 0.07}} & 
           13.92{\scriptsize$\,\pm\,$}{\scriptsize 0.74} & 
           12.85{\scriptsize$\,\pm\,$}{\scriptsize 0.35} & 
           \textbf{12.02{\scriptsize$\,\pm\,$}{\scriptsize 0.09}} & 
           \underline{32.28{\scriptsize$\,\pm\,$}{\scriptsize 1.40}} & 
           \textbf{30.22{\scriptsize$\,\pm\,$}{\scriptsize 1.40}} & 
           \underline{30.37{\scriptsize$\,\pm\,$}{\scriptsize 0.04}} \\
ResNet110 & 6.89{\scriptsize$\,\pm\,$}{\scriptsize 0.46} & 
            5.71{\scriptsize$\,\pm\,$}{\scriptsize 0.18} & 
            \textbf{4.62{\scriptsize$\,\pm\,$}{\scriptsize 0.01}} & 
            14.11{\scriptsize$\,\pm\,$}{\scriptsize 1.21} & 
            \textbf{12.07{\scriptsize$\,\pm\,$}{\scriptsize 0.32}} & 
            \underline{12.15{\scriptsize$\,\pm\,$}{\scriptsize 0.11}} & 
            31.67{\scriptsize$\,\pm\,$}{\scriptsize 1.72} & 
            \underline{29.67{\scriptsize$\,\pm\,$}{\scriptsize 0.93}} & 
            \textbf{28.73{\scriptsize$\,\pm\,$}{\scriptsize 0.70}} \\
ResNet18 & 5.44{\scriptsize$\,\pm\,$}{\scriptsize 0.28} & 
           5.16{\scriptsize$\,\pm\,$}{\scriptsize 0.21} & 
           \textbf{4.57{\scriptsize$\,\pm\,$}{\scriptsize 0.13}} & 
           11.51{\scriptsize$\,\pm\,$}{\scriptsize 0.44} & 
           10.53{\scriptsize$\,\pm\,$}{\scriptsize 0.18} & 
           \textbf{9.44{\scriptsize$\,\pm\,$}{\scriptsize 0.06}} & 
           30.27{\scriptsize$\,\pm\,$}{\scriptsize 1.17} & 
           28.83{\scriptsize$\,\pm\,$}{\scriptsize 0.76} & 
           \textbf{25.05{\scriptsize$\,\pm\,$}{\scriptsize 0.35}} \\
PreResNet18 & 5.42{\scriptsize$\,\pm\,$}{\scriptsize 0.32} & 
              5.02{\scriptsize$\,\pm\,$}{\scriptsize 0.18} & 
              \textbf{4.48{\scriptsize$\,\pm\,$}{\scriptsize 0.01}} & 
              11.37{\scriptsize$\,\pm\,$}{\scriptsize 0.46} & 
              10.94{\scriptsize$\,\pm\,$}{\scriptsize 0.32} & 
              \textbf{9.89{\scriptsize$\,\pm\,$}{\scriptsize 0.01}} & 
              29.75{\scriptsize$\,\pm\,$}{\scriptsize 1.05} & 
              29.24{\scriptsize$\,\pm\,$}{\scriptsize 1.31} & 
              \textbf{25.23{\scriptsize$\,\pm\,$}{\scriptsize 0.07}} \\
\bottomrule
\end{tabular}
}
\caption{\blue CIFAR-10 test error rates (\%, lower is better) comparing MLP, WNLL, and GLL output heads across architectures, evaluated under three training-set sizes: “Whole” (50000 images), “20\%” (10000), and “2\%” (1000). Entries report mean $\pm$ one standard deviation of the test error. Best results for each backbone and label rate are \textbf{bolded} and results within a standard deviation of the best result are \underline{underlined}. Network parameter counts follow Table~\ref{tab:network_params_simple}.}
\label{tab:cifar10_largescale}
\end{table}

\begin{table}[t]
\centering
\small
\resizebox{\linewidth}{!}{%

\begin{tabular}{lrrrrrrrrrr}
\toprule
Train Num & \multicolumn{3}{c}{Whole (112800)} & \multicolumn{3}{c}{20\% (22560)} & \multicolumn{3}{c}{2\% (2256)} \\
\cmidrule(lr){1-1}\cmidrule(lr){2-4}\cmidrule(lr){5-7}\cmidrule(lr){8-10}
Network & MLP & WNLL & GLL & MLP & WNLL & GLL & MLP & WNLL & GLL \\
\midrule
VGG11 & \textbf{10.72{\scriptsize$\,\pm\,$}{\scriptsize 0.14}} & 
        12.18{\scriptsize$\,\pm\,$}{\scriptsize 0.79} & 
        \underline{10.76{\scriptsize$\,\pm\,$}{\scriptsize 0.09}} & 
        \underline{12.06{\scriptsize$\,\pm\,$}{\scriptsize 0.13}} & 
        13.46{\scriptsize$\,\pm\,$}{\scriptsize 0.69} & 
        \textbf{11.96{\scriptsize$\,\pm\,$}{\scriptsize 0.08}} & 
        15.86{\scriptsize$\,\pm\,$}{\scriptsize 0.35} & 
        16.64{\scriptsize$\,\pm\,$}{\scriptsize 0.78} & 
        \textbf{14.98{\scriptsize$\,\pm\,$}{\scriptsize 0.13}} \\
VGG13 & \textbf{10.50{\scriptsize$\,\pm\,$}{\scriptsize 0.06}} & 
        10.73{\scriptsize$\,\pm\,$}{\scriptsize 0.15} & 
        \underline{10.60{\scriptsize$\,\pm\,$}{\scriptsize 0.08}} & 
        \textbf{11.85{\scriptsize$\,\pm\,$}{\scriptsize 0.08}} & 
        15.43{\scriptsize$\,\pm\,$}{\scriptsize 1.15} & 
        \underline{11.94{\scriptsize$\,\pm\,$}{\scriptsize 0.10}} & 
        15.91{\scriptsize$\,\pm\,$}{\scriptsize 0.38} & 
        16.48{\scriptsize$\,\pm\,$}{\scriptsize 0.82} & 
        \textbf{15.08{\scriptsize$\,\pm\,$}{\scriptsize 0.20}} \\
ResNet20 & \textbf{9.72{\scriptsize$\,\pm\,$}{\scriptsize 0.07}} & 
           10.77{\scriptsize$\,\pm\,$}{\scriptsize 0.29} & 
           10.34{\scriptsize$\,\pm\,$}{\scriptsize 0.05} & 
           \textbf{11.45{\scriptsize$\,\pm\,$}{\scriptsize 0.06}} & 
           12.33{\scriptsize$\,\pm\,$}{\scriptsize 0.21} & 
           11.60{\scriptsize$\,\pm\,$}{\scriptsize 0.07} & 
           15.79{\scriptsize$\,\pm\,$}{\scriptsize 0.33} & 
           17.10{\scriptsize$\,\pm\,$}{\scriptsize 1.01} & 
           \textbf{14.63{\scriptsize$\,\pm\,$}{\scriptsize 0.12}} \\
ResNet32 & \textbf{9.73{\scriptsize$\,\pm\,$}{\scriptsize 0.06}} & 
           10.78{\scriptsize$\,\pm\,$}{\scriptsize 0.37} & 
           10.15{\scriptsize$\,\pm\,$}{\scriptsize 0.14} & 
           \textbf{11.41{\scriptsize$\,\pm\,$}{\scriptsize 0.05}} & 
           12.26{\scriptsize$\,\pm\,$}{\scriptsize 0.38} & 
           \underline{11.54{\scriptsize$\,\pm\,$}{\scriptsize 0.11}} & 
           15.85{\scriptsize$\,\pm\,$}{\scriptsize 0.46} & 
           17.52{\scriptsize$\,\pm\,$}{\scriptsize 0.90} & 
           \textbf{14.30{\scriptsize$\,\pm\,$}{\scriptsize 0.08}} \\
ResNet44 & \textbf{9.92{\scriptsize$\,\pm\,$}{\scriptsize 0.06}} & 
           11.30{\scriptsize$\,\pm\,$}{\scriptsize 0.36} & 
           10.10{\scriptsize$\,\pm\,$}{\scriptsize 0.09} & 
           \underline{11.55{\scriptsize$\,\pm\,$}{\scriptsize 0.16}} & 
           12.04{\scriptsize$\,\pm\,$}{\scriptsize 0.28} & 
           \textbf{11.43{\scriptsize$\,\pm\,$}{\scriptsize 0.14}} & 
           15.87{\scriptsize$\,\pm\,$}{\scriptsize 0.35} & 
           17.81{\scriptsize$\,\pm\,$}{\scriptsize 0.98} & 
           \textbf{14.98{\scriptsize$\,\pm\,$}{\scriptsize 0.18}} \\
ResNet56 & \underline{9.98{\scriptsize$\,\pm\,$}{\scriptsize 0.12}} & 
           10.80{\scriptsize$\,\pm\,$}{\scriptsize 0.44} & 
           \textbf{9.90{\scriptsize$\,\pm\,$}{\scriptsize 0.12}} & 
           \underline{11.76{\scriptsize$\,\pm\,$}{\scriptsize 0.14}} & 
           12.24{\scriptsize$\,\pm\,$}{\scriptsize 0.29} & 
           \textbf{11.60{\scriptsize$\,\pm\,$}{\scriptsize 0.08}} & 
           15.74{\scriptsize$\,\pm\,$}{\scriptsize 0.30} & 
           17.81{\scriptsize$\,\pm\,$}{\scriptsize 0.97} & 
           \textbf{14.61{\scriptsize$\,\pm\,$}{\scriptsize 0.21}} \\
ResNet110 & 10.47{\scriptsize$\,\pm\,$}{\scriptsize 0.09} & 
            10.74{\scriptsize$\,\pm\,$}{\scriptsize 0.24} & 
            \textbf{10.19{\scriptsize$\,\pm\,$}{\scriptsize 0.11}} & 
            \textbf{11.83{\scriptsize$\,\pm\,$}{\scriptsize 0.12}} & 
            13.49{\scriptsize$\,\pm\,$}{\scriptsize 0.54} & 
            \underline{11.99{\scriptsize$\,\pm\,$}{\scriptsize 0.07}} & 
            15.86{\scriptsize$\,\pm\,$}{\scriptsize 0.44} & 
            16.81{\scriptsize$\,\pm\,$}{\scriptsize 0.89} & 
            \textbf{14.34{\scriptsize$\,\pm\,$}{\scriptsize 0.19}} \\
ResNet18 & \underline{10.57{\scriptsize$\,\pm\,$}{\scriptsize 0.15}} & 
           11.04{\scriptsize$\,\pm\,$}{\scriptsize 0.18} & 
           \textbf{10.32{\scriptsize$\,\pm\,$}{\scriptsize 0.12}} & 
           \underline{11.76{\scriptsize$\,\pm\,$}{\scriptsize 0.10}} & 
           12.57{\scriptsize$\,\pm\,$}{\scriptsize 0.43} & 
           \textbf{11.66{\scriptsize$\,\pm\,$}{\scriptsize 0.07}} & 
           16.21{\scriptsize$\,\pm\,$}{\scriptsize 0.11} & 
           17.66{\scriptsize$\,\pm\,$}{\scriptsize 0.83} & 
           \textbf{14.75{\scriptsize$\,\pm\,$}{\scriptsize 0.09}} \\
PreResNet18 & 10.57{\scriptsize$\,\pm\,$}{\scriptsize 0.09} & 
              10.79{\scriptsize$\,\pm\,$}{\scriptsize 0.17} & 
              \textbf{10.37{\scriptsize$\,\pm\,$}{\scriptsize 0.06}} & 
              \underline{11.80{\scriptsize$\,\pm\,$}{\scriptsize 0.12}} & 
              12.24{\scriptsize$\,\pm\,$}{\scriptsize 0.14} & 
              \textbf{11.68{\scriptsize$\,\pm\,$}{\scriptsize 0.11}} & 
              15.72{\scriptsize$\,\pm\,$}{\scriptsize 0.40} & 
              16.77{\scriptsize$\,\pm\,$}{\scriptsize 0.64} & 
              \textbf{14.71{\scriptsize$\,\pm\,$}{\scriptsize 0.13}} \\
\bottomrule
\end{tabular}

}
\caption{\blue EMNIST test error rates (\%, lower is better) comparing MLP, WNLL, and GLL output heads across architectures, evaluated under three training-set sizes: “Whole” (112800 images), “20\%” (22560), and “2\%” (2256). “Whole” uses the full standard training set. Best results for each backbone and label rate are \textbf{bolded} and results within a standard deviation of the best result are \underline{underlined}. Entries report mean $\pm$ one standard deviation of the test error. Network parameter counts follow Table~\ref{tab:network_params_simple}. \nc}
\label{tab:emnist_largescale}
\end{table}

\begin{table}[t]
\centering
\small
\resizebox{\linewidth}{!}{%

\begin{tabular}{lrrrrrrrrrr}
\toprule
Train Num & \multicolumn{3}{c}{Whole (112800)} & \multicolumn{3}{c}{20\% (22560)} & \multicolumn{3}{c}{2\% (2256)} \\
\cmidrule(lr){1-1}\cmidrule(lr){2-4}\cmidrule(lr){5-7}\cmidrule(lr){8-10}
Network & MLP & WNLL & GLL & MLP & WNLL & GLL & MLP & WNLL & GLL \\
\midrule
VGG11 & \textbf{35.88{\scriptsize$\,\pm\,$}{\scriptsize 0.48}} & 
        \underline{36.92{\scriptsize$\,\pm\,$}{\scriptsize 1.15}} & 
        \underline{35.91{\scriptsize$\,\pm\,$}{\scriptsize 0.28}} & 
        38.51{\scriptsize$\,\pm\,$}{\scriptsize 0.42} & 
        \textbf{36.57{\scriptsize$\,\pm\,$}{\scriptsize 0.55}} & 
        38.10{\scriptsize$\,\pm\,$}{\scriptsize 0.18} & 
        \underline{42.91{\scriptsize$\,\pm\,$}{\scriptsize 0.76}} & 
        45.05{\scriptsize$\,\pm\,$}{\scriptsize 0.33} & 
        \textbf{42.14{\scriptsize$\,\pm\,$}{\scriptsize 0.33}} \\
VGG13 & \textbf{34.88{\scriptsize$\,\pm\,$}{\scriptsize 0.32}} & 
        36.05{\scriptsize$\,\pm\,$}{\scriptsize 0.37} & 
        \underline{35.38{\scriptsize$\,\pm\,$}{\scriptsize 0.34}} & 
        \textbf{37.90{\scriptsize$\,\pm\,$}{\scriptsize 0.23}} & 
        \underline{38.68{\scriptsize$\,\pm\,$}{\scriptsize 0.74}} & 
        \underline{38.11{\scriptsize$\,\pm\,$}{\scriptsize 0.33}} & 
        44.51{\scriptsize$\,\pm\,$}{\scriptsize 1.48} & 
        45.28{\scriptsize$\,\pm\,$}{\scriptsize 0.53} & 
        \textbf{42.29{\scriptsize$\,\pm\,$}{\scriptsize 0.36}} \\
ResNet20 & \textbf{31.66{\scriptsize$\,\pm\,$}{\scriptsize 0.21}} & 
           35.76{\scriptsize$\,\pm\,$}{\scriptsize 0.59} & 
           33.56{\scriptsize$\,\pm\,$}{\scriptsize 0.38} & 
           \underline{35.80{\scriptsize$\,\pm\,$}{\scriptsize 0.59}} & 
           \underline{36.00{\scriptsize$\,\pm\,$}{\scriptsize 0.36}} & 
           \textbf{35.53{\scriptsize$\,\pm\,$}{\scriptsize 0.23}} & 
           43.17{\scriptsize$\,\pm\,$}{\scriptsize 1.29} & 
           46.61{\scriptsize$\,\pm\,$}{\scriptsize 0.96} & 
           \textbf{41.45{\scriptsize$\,\pm\,$}{\scriptsize 0.41}} \\
ResNet32 & \textbf{31.69{\scriptsize$\,\pm\,$}{\scriptsize 0.29}} & 
           34.47{\scriptsize$\,\pm\,$}{\scriptsize 0.81} & 
           32.87{\scriptsize$\,\pm\,$}{\scriptsize 0.49} & 
           \textbf{36.09{\scriptsize$\,\pm\,$}{\scriptsize 0.32}} & 
           \underline{36.72{\scriptsize$\,\pm\,$}{\scriptsize 0.60}} & 
           \underline{36.09{\scriptsize$\,\pm\,$}{\scriptsize 0.42}} & 
           43.66{\scriptsize$\,\pm\,$}{\scriptsize 1.35} & 
           45.80{\scriptsize$\,\pm\,$}{\scriptsize 0.62} & 
           \textbf{39.94{\scriptsize$\,\pm\,$}{\scriptsize 0.33}} \\
ResNet44 & \textbf{31.84{\scriptsize$\,\pm\,$}{\scriptsize 0.27}} & 
           35.61{\scriptsize$\,\pm\,$}{\scriptsize 0.56} & 
           32.83{\scriptsize$\,\pm\,$}{\scriptsize 0.25} & 
           \underline{37.01{\scriptsize$\,\pm\,$}{\scriptsize 0.56}} & 
           \underline{36.47{\scriptsize$\,\pm\,$}{\scriptsize 0.44}} & 
           \textbf{35.98{\scriptsize$\,\pm\,$}{\scriptsize 0.57}} & 
           43.70{\scriptsize$\,\pm\,$}{\scriptsize 1.44} & 
           46.68{\scriptsize$\,\pm\,$}{\scriptsize 0.39} & 
           \textbf{41.51{\scriptsize$\,\pm\,$}{\scriptsize 0.51}} \\
ResNet56 & \underline{32.44{\scriptsize$\,\pm\,$}{\scriptsize 0.45}} & 
           34.62{\scriptsize$\,\pm\,$}{\scriptsize 1.07} & 
           \textbf{32.31{\scriptsize$\,\pm\,$}{\scriptsize 0.46}} & 
           \underline{37.41{\scriptsize$\,\pm\,$}{\scriptsize 0.41}} & 
           \textbf{36.89{\scriptsize$\,\pm\,$}{\scriptsize 0.72}} & 
           \underline{36.93{\scriptsize$\,\pm\,$}{\scriptsize 0.38}} & 
           43.99{\scriptsize$\,\pm\,$}{\scriptsize 1.25} & 
           45.65{\scriptsize$\,\pm\,$}{\scriptsize 0.79} & 
           \textbf{41.28{\scriptsize$\,\pm\,$}{\scriptsize 0.70}} \\
ResNet110 & \underline{34.03{\scriptsize$\,\pm\,$}{\scriptsize 0.53}} & 
            35.03{\scriptsize$\,\pm\,$}{\scriptsize 0.81} & 
            \textbf{33.22{\scriptsize$\,\pm\,$}{\scriptsize 0.29}} & 
            \underline{37.48{\scriptsize$\,\pm\,$}{\scriptsize 0.36}} & 
            \textbf{36.77{\scriptsize$\,\pm\,$}{\scriptsize 0.57}} & 
            38.25{\scriptsize$\,\pm\,$}{\scriptsize 0.30} & 
            44.98{\scriptsize$\,\pm\,$}{\scriptsize 1.22} & 
            43.70{\scriptsize$\,\pm\,$}{\scriptsize 0.65} & 
            \textbf{40.66{\scriptsize$\,\pm\,$}{\scriptsize 0.70}} \\
ResNet18 & \underline{35.51{\scriptsize$\,\pm\,$}{\scriptsize 0.46}} & 
           \underline{35.90{\scriptsize$\,\pm\,$}{\scriptsize 0.54}} & 
           \textbf{35.04{\scriptsize$\,\pm\,$}{\scriptsize 0.44}} & 
           38.11{\scriptsize$\,\pm\,$}{\scriptsize 0.23} & 
           \textbf{36.59{\scriptsize$\,\pm\,$}{\scriptsize 0.56}} & 
           37.71{\scriptsize$\,\pm\,$}{\scriptsize 0.22} & 
           44.77{\scriptsize$\,\pm\,$}{\scriptsize 0.54} & 
           47.10{\scriptsize$\,\pm\,$}{\scriptsize 1.95} & 
           \textbf{41.32{\scriptsize$\,\pm\,$}{\scriptsize 0.19}} \\
PreResNet18 & \underline{35.70{\scriptsize$\,\pm\,$}{\scriptsize 0.38}} & 
              35.95{\scriptsize$\,\pm\,$}{\scriptsize 0.50} & 
              \textbf{35.23{\scriptsize$\,\pm\,$}{\scriptsize 0.12}} & 
              \underline{38.30{\scriptsize$\,\pm\,$}{\scriptsize 0.57}} & 
              \underline{38.48{\scriptsize$\,\pm\,$}{\scriptsize 0.35}} & 
              \textbf{38.00{\scriptsize$\,\pm\,$}{\scriptsize 0.23}} & 
              44.27{\scriptsize$\,\pm\,$}{\scriptsize 1.25} & 
              45.02{\scriptsize$\,\pm\,$}{\scriptsize 0.80} & 
              \textbf{40.87{\scriptsize$\,\pm\,$}{\scriptsize 0.64}} \\
\bottomrule
\end{tabular}

}
\caption{\blue EMNIST test error rates (\%, lower is better) for MLP, WNLL, and GLL output heads across architectures, evaluated under three training-set sizes: “Whole” (112800 images), “20\%” (22560), and “2\%” (2256). “Whole” uses the full standard training set. Error rates are computed on a class subindex of the EMNIST (balanced) label space: although the dataset contains 47 classes in total, we evaluate only the 10 hardest classes: 0, 1, 9, F, I, L, O, f, g, and q.
Entries report mean $\pm$ one standard deviation of the test error. Best results for each backbone and label rate are \textbf{bolded} and results within a standard deviation of the best result are \underline{underlined}. Network parameter counts follow Table~\ref{tab:network_params_simple}.}
\label{tab:emnist_subindex}
\end{table}


{\blue This section presents large scale experiments that compare three interchangeable classifiers under the same backbone feature encoder: (1) a two-layer MLP with softmax, (2) WNLL (\cite{wang2021graph}), and (3) the proposed GLL. We chose (1) as a comparison point because GLL is intended to be a drop-in replacement for the standard MLP-based classification head, while (2) serves as a good intermediary comparison between MLP and GLL because it uses a graph-based classifier in testing. However, WNLL differs significantly from GLL in training. First, it uses an alternating two-stage training protocol, where Stage 1 consists of normal MLP training, and Stage 2 is WNLL training. During Stage 2, the graph classification head computes a loss, but backpropagation is handled by approximating the gradients with a linear layer. Second, Stage 1 lasts for 400 epochs at a time, while Stage 2 lasts for 5 epochs at a time. Finally, Stage 2 freezes the parameters of all but two layers of the network. In this sense, WNLL is closer to MLP in training than GLL. 

As discussed in Section \ref{sec:implementation}, all models are first warmed up with contrastive pretraining on the backbone without any classification head. Then, we proceed with supervised end-to-end training after attaching one of the three heads.  
We report the test error on CIFAR-10 and EMNIST-Balanced (\cite{cohen2017emnist}) datasets. EMNIST extends MNIST by adding handwritten upper and lowercase letters. The Balanced version contains 131,600 samples and 47 classes\footnote{Some of the upper and lowercase letters are merged into one class because they are the same shape, for example ``s'' and ``S''.}. Following a similar experimental protocol as \cite{wang2021graph}, we evaluate multiple feature encoders including variants of VGG (\cite{simonyan2014very}), ResNet (\cite{he2016deep}), and PreActResNet (\cite{he2016identity}) and three label fractions: full training set, 20 percent, and 2 percent based on class-balanced sampling, since graph-based classifiers are a standard choice in semi-supervised settings. Parameter counts for these networks are listed in Table \ref{tab:network_params_simple}. 

For CIFAR-10, we train for 1000 epochs. Recalling the definitions from Section \ref{sec:implementation}, we use a batch of $1500$ points in training and testing, which includes 250 base labeled points $\mathcal{L}_b$ for GLL. The remaining 1250 points are constructed with labeled ($\mathcal{B}_l$) and unlabeled ($\mathcal{B}_u$) points according to Equation \ref{eq:N_l_and_N_u}. This batch partition also applies to WNLL during WNLL's Stage 2 ($\mathcal{L}_b$ is referred to as ``template'' labels in \cite{wang2021graph}). 
We use SGD with a learning rate of $0.001$, momentum of $0.9$, and weight decay of $0.0005$, with cross entropy loss. We use the standard implementations of all the neural network backbones. On EMNIST, we use similar training settings, except we train for 200 epochs and use a learning rate of $0.01$, and the base sample size is increased to 470 to account for the additional number of classes. All experiments were run on a single A100 GPU. For both datasets, we use the data augmentation strategy described in Section \ref{sec:implementation}.

 


Test error rates for CIFAR-10 and EMNIST across architectures and supervision levels are given in Tables \ref{tab:cifar10_largescale} and \ref{tab:emnist_largescale}. On CIFAR-10, the GLL-based models are consistently the best performing of the three classification heads, especially as training set size decreases. This empirically validates the advantage of the relational information encoded and exploited via a GLL, as well as the precise tracking of gradients. This becomes even more essential for lower label rates. 

On EMNIST, GLL is better than WNLL across label rates and MLP for low label rates, and on par with MLP in the other training settings. As expected, the GLL-based networks see the largest advantage in the semi-supervsied setting (2\%). 

During our experiments on EMNIST, we found that the majority of the classes were easily classified by all methods, while a handful of other classes were consistently more challenging. By isolating results on only these trickier classes, we can better elucidate differences between models. To this end, we further report EMNIST test error on \textit{only the ten hardest classes} in Table \ref{tab:emnist_subindex}. These correspond to handwritten digits and letters that are difficult to distinguish without context\footnote{Specifically, the classes ``0'' and ``O''; ``1'', ``I'', and ``L'';  ``9'', ``g'', and ``q''; and ``F'' and ``f''.}. Here, the advantage of GLL is more pronounced in the 2\% training case.

Overall, the results show that using a GLL classification head consistently performs on par with or better than MLP or WNLL-based classifiers. The benefit of a GLL is particularly evident at low label rates. To complement these results, Section \ref{sec:adversarial} shows that a GLL offers substantial robustness gains under adversarial perturbations.

}



\subsection{Training and Inference Times}\label{sec:training_inference_times}


\begin{table}[!ht]
\centering
\begin{tabular}{llcc}
\toprule
Batch Size & Metric & GLL & MLP \\
\midrule
\multirow{3}{*}{128} 
  & Training time / batch (s) & $0.0457 {\pm 0.0093}$ & $0.0044 {\pm 0.0006}$ \\
  & Testing time / batch (s)  & $0.0308 {\pm 0.0085}$ & $0.0024 {\pm 0.0027}$ \\
  & Testing error (\%)        & 0.76                  & 2.11                  \\
\midrule
\multirow{3}{*}{256} 
  & Training time / batch (s) & $0.0685 {\pm 0.0146}$ & $0.0061 {\pm 0.0005}$ \\
  & Testing time / batch (s)  & $0.0457 {\pm 0.0108}$ & $0.0039 {\pm 0.0005}$ \\
  & Testing error (\%)        & 1.33                  & 1.69                  \\
\midrule
\multirow{3}{*}{512} 
  & Training time / batch (s) & $0.1244 {\pm 0.0280}$ & $0.0094 {\pm 0.0007}$ \\
  & Testing time / batch (s)  & $0.0881 {\pm 0.0206}$ & $0.0072 {\pm 0.0006}$ \\
  & Testing error (\%)        & 1.27                  & 1.53                  \\
\midrule
\multirow{3}{*}{1024} 
  & Training time / batch (s) & $0.2447 {\pm 0.0606}$ & $0.0167 {\pm 0.0012}$ \\
  & Testing time / batch (s)  & $0.1556 {\pm 0.0310}$ & $0.0139 {\pm 0.0009}$ \\
  & Testing error (\%)        & 1.26                  & 4.65                  \\
\bottomrule
\end{tabular}
\caption{Comparison of GLL and MLP performance across different batch sizes on MNIST. The architecture backbone is a four-layer CNN with two linear layers. We use identical training settings for both models. For time metrics, each entry is reported as mean $\pm$ standard deviation over five epochs of training. Testing error is reported in percentage points. The experiments were run on a T4 GPU.}
\label{tab:gll_mlp_compute_cnn}
\end{table}

\begin{table}[!ht]
\centering
\begin{tabular}{llcc}
\toprule
Batch Size & Metric & GLL & MLP \\
\midrule
\multirow{3}{*}{128} 
  & Training time / batch (s) & $0.1537 {\pm 0.0121}$ & $0.0597 {\pm 0.0083}$ \\
  & Testing time / batch (s)  & $0.0659 {\pm 0.0068}$ & $0.0220 {\pm 0.0026}$ \\
  & Testing error (\%)        & 0.84                  & 0.86                  \\
\midrule
\multirow{3}{*}{256} 
  & Training time / batch (s) & $0.2295 {\pm 0.0193}$ & $0.1157 {\pm 0.0168}$ \\
  & Testing time / batch (s)  & $0.1045 {\pm 0.0162}$ & $0.0426 {\pm 0.0060}$ \\
  & Testing error (\%)        & 0.94                  & 1.04                  \\
\midrule
\multirow{3}{*}{512} 
  & Training time / batch (s) & $0.3851 {\pm 0.0362}$ & $0.2292 {\pm 0.0323}$ \\
  & Testing time / batch (s)  & $0.1834 {\pm 0.0236}$ & $0.0924 {\pm 0.0102}$ \\
  & Testing error (\%)        & 0.97                  & 1.34                  \\
\midrule
\multirow{3}{*}{1024} 
  & Training time / batch (s) & $0.7129 {\pm 0.0773}$ & $0.4676 {\pm 0.0734}$ \\
  & Testing time / batch (s)  & $0.3404 {\pm 0.0352}$ & $0.2032 {\pm 0.0340}$ \\
  & Testing error (\%)        & 0.94                  & 2.46                  \\
\bottomrule
\end{tabular}
\caption{Comparison of GLL and MLP performance across different batch sizes on MNIST. The architecture backbone is ResNet-18. We use identical training settings for both models. For time metrics, each entry is reported as mean $\pm$ standard deviation over five epochs of training. Testing error is reported in percentage points. The experiments were run on a T4 GPU.}
\label{tab:gll_mlp_compute_resnet18}
\end{table}

\begin{table}[!ht]
\centering
\begin{tabular}{llcc}
\toprule
Batch Size & Metric & GLL & MLP \\
\midrule
\multirow{3}{*}{128} 
  & Training time / batch (s) & $0.1257 {\pm 0.0044}$ & $0.0471 {\pm 0.0024}$ \\
  & Testing time / batch (s)  & $0.0629 {\pm 0.0035}$ & $0.0224 {\pm 0.0037}$ \\
  & Testing error (\%)        & 1.65                  & 2.39                  \\
\midrule
\multirow{3}{*}{256} 
  & Training time / batch (s) & $0.1772 {\pm 0.0065}$ & $0.0725 {\pm 0.0028}$ \\
  & Testing time / batch (s)  & $0.0950 {\pm 0.0103}$ & $0.0404 {\pm 0.0043}$ \\
  & Testing error (\%)        & 2.10                  & 1.18                  \\
\midrule
\multirow{3}{*}{512} 
  & Training time / batch (s) & $0.2950 {\pm 0.0218}$ & $0.1480 {\pm 0.0637}$ \\
  & Testing time / batch (s)  & $0.1605 {\pm 0.0143}$ & $0.0774 {\pm 0.0074}$ \\
  & Testing error (\%)        & 1.14                  & 1.13                  \\
\midrule
\multirow{3}{*}{1024} 
  & Training time / batch (s) & $0.5214 {\pm 0.0605}$ & $0.2802 {\pm 0.0224}$ \\
  & Testing time / batch (s)  & $0.2919 {\pm 0.0208}$ & $0.1498 {\pm 0.0102}$ \\
  & Testing error (\%)        & 1.93                  & 1.57                  \\
\bottomrule
\end{tabular}
\caption{Comparison of GLL and MLP performance across different batch sizes on MNIST. The architecture backbone is ResNet-110. We use identical training settings for both models. For time metrics, each entry is reported as mean $\pm$ standard deviation over five epochs of training. Testing error is reported in percentage points. The experiments were run on an A100 GPU.}
\label{tab:gll_mlp_compute_resnet110}
\end{table}

{\blue 
To emphasize the practicality of our method, we present a side-by-side comparison of training and inference times of several GLL- and MLP-based models on the MNIST dataset in Tables \ref{tab:gll_mlp_compute_cnn}, \ref{tab:gll_mlp_compute_resnet18}, and \ref{tab:gll_mlp_compute_resnet110}. We train each model for 5 epochs with varying batch sizes and report the average time per batch in training and testing. For GLL, we use a base labeled set of 100 points (10 per class). All other training hyperparameters are identical throughout. To ensure a fair comparison, we also report test error alongside training and inference time. Across all architectures and batch sizes, GLL attains test errors that are comparable to or better than those of the corresponding MLP baselines. For the small CNN backbone, GLL consistently achieves noticeably lower error than MLP, particularly at large batch sizes. For ResNet-18 and ResNet-110, the two methods are typically within a fraction of a percentage point of each other, with GLL often slightly better and only occasionally slightly worse. Note that this setting is simplistic -  with no pretraining or hyperparameter optimization - and only meant to illustrate a computational efficiency comparison between two methods with similar test performance.

GLL is naturally slower due to the additional linear solve in the forward and backward passes: depending on the architecture and batch size, the per-batch training time increases by roughly a factor of two to an order of magnitude relative to MLP, with similar trends for testing time. Importantly, however, the extra cost of GLL is independent of the number of parameters in the backbone; its main overhead scales with the size of the base labeled set rather than network width or depth. Consequently, as the architecture becomes larger, from a small CNN to ResNet-18 and ResNet-110, GLL-based models progressively approach the runtime of their standard MLP-based counterparts, while retaining competitive or improved test error. 
}

\nc


\subsection{Adversarial Robustness}\label{sec:adversarial}

In this subsection, we demonstrate our method's superior robustness to adversarial attacks compared to the typical projection head and softmax activation layer in a neural network. Adversarial attacks seek to generate \textit{adversarial examples} (\cite{szegedy2013intriguing}): input data with small perturbations that cause the model to misclassify the data. Many highly performant deep neural networks can be fooled by very small changes to the input, even when the change is imperceptible to a human (\cite{madry2017towards, goodfellow2014explaining}). Hence, training models robust (i.e. resistant) to such attacks is important for both security and performance. Our experiments show that in both naturally and robustly-trained models, using the graph learning layer improves adversarial robustness compared to a softmax classifier on MNIST (\cite{mnist}), FashionMNIST (\cite{xiao2017fashion}), and CIFAR-10 (\cite{krizhevsky2009learning}) in both relatively shallow and deep networks. Moreover, the GLL is more robust to adversarial attacks \textit{without sacrificing performance on natural images}.

\subsubsection{Adversarial Attacks}

We consider three types of adversarial attacks: the fast gradient sign method (FGSM) (\cite{goodfellow2014explaining}) in the $\ell_\infty$ norm, the iterative fast gradient sign method (IFGSM) (\cite{kurakin2018adversarial}) in the $\ell_\infty$ norm, and the Carlini-Wagner attack (\cite{carlini2017towards}) in the $\ell_2$ norm. 

The FGSM and IFGSM attacks consider the setting where an attacker has an $\ell_\infty$ ``budget" to perturb an image $x$. More formally, FGSM creates an adversarial image $x'$ by maximizing the loss $\Loss(x',y)$ subject to a maximum allowed perturbation $\Vert x - x' \Vert_\infty \leq \epsilon$, where $\epsilon$ is a hyperparameter. Thus, a larger $\epsilon$ is more likely to successfully fool a model, but also more likely to be detectable. We can linearize the objective function as
\[ \Loss(x',y) = \Loss(x,y) + \nabla_x \Loss(x,y)(x-x')\]
which gives the optimal (in this framework) adversarial image 
\begin{equation}\label{eq:fgsm}
     x' = x + \epsilon \cdot \text{sign}(\nabla_x \Loss(x,y))
\end{equation}

It is also standard to clip the resulting adversarial example $x'$ to be within the range of the image space (e.g. $[0,1]$ for grayscale images) by simply reassigning $x'$ to $\min\{1, \max\{0, x'\}\}$.

Intuitively, FGSM computes the gradient of the loss with respect to each pixel of $x$ to determine which direction that pixel should be perturbed to maximize the loss. It should be noted that, as the name suggests, FGSM is designed to be a \textit{fast} attack (the cost is only one call to backpropagation), rather than an optimal attack. 

IFGSM simply iterates FGSM by introducing a second hyperparameter $\alpha$, which controls the step size of the iterates:
\[ x_{i+1} = x_i + \alpha \cdot \text{sign}(\nabla_{x_i} \Loss(x_{i},y)) \]
where $x_0 = x$ and $i = 0, ..., N-1$, and so $x_N$ is the generated adversarial example. However, we must also clip the iterates (pixel-wise) during this iteration to ensure that (1) they remain in the $\epsilon$-neighborhood of $x$ and (2) they remain in the range of the image space (e.g. in [0,1] for a grayscale image). Thus, the IFGSM attack is:
\begin{equation}
    x_{i+1} = \text{clip}_{x,\epsilon} \bigl\{ x_i + \alpha \cdot \text{sign}(\nabla_{x_i} \Loss(x_{i},y)) \bigr\}
\end{equation}
where
\[ \text{clip}_{x,\epsilon}\{z\} = \min \bigl\{1, x + \epsilon, \max\{0,x-\epsilon,z\}  \bigr\} \]

Note that this assumes the pixel values are in the range $[0,1]$, but can be easily tailored to other settings. IFGSM attacks have been shown to be more effective than FGSM attacks (\cite{kurakin2018adversarial}), however we will see in Section \ref{adv_results} that our graph learning layer is highly robust to this iterative method.

CW attacks frame the attack as an optimization problem where the goal is to find a minimal perturbation $\delta$ that changes the model's classification of an input $x$. Let $F$ be the neural network, so that $F(x)$ gives a discrete probability distribution over $k$ classes for an input $x$. Let $C(x) = \argmax_k F(x)$ be the classification decision for $x$. The CW attack is a perturbation $\delta$ that solves:
\begin{align*}
    \min_\delta &\phantom{=} \Vert \delta \Vert_2^2\\
    \text{s.t.} &\phantom{=} C(x+\delta) = t \\
    &\phantom{=} x + \delta \in [0,1]^n
\end{align*}
where $t$ is a classification such that $C(x) \neq t$. In other words, the CW attack seeks a minimal perturbation $\delta$ to apply to $x$ that changes its classification (while remaining a valid image). Other norms on $\delta$ are possible; in this work we focus on the $\ell_2$ norm. In practice, we choose $t$ to be the class with the second highest probability in $F(x)$, i.e. $t = \argmax_{k \neq C(x)} F(x)$

The constraint $C(x+\delta) = t$ is highly non-linear, so Carlini and Wagner instead consider objective functions $g$ that satisfy $C(x+\delta) = t$ if and only if $g(x+\delta) \leq 0$. While there are many choices for $g$ (see \cite{carlini2017towards}, Section IV A), we will use the following in our experiments:
\[g(x') = \bigl(\max_{i \neq t}F(x')_i - F(x')_t \bigr)^+ \]
where $a^+ = \max(0,a)$.  It is worth noting that a common choice for $g$ in the literature is to use the unnormalized logits from the neural network (that is, the output of a neural network just before the standard softmax layer) instead of $F(x)$. While both definitions satisfy the condition, we use the above because graph learning methods (such as Laplace learning) operate in the feature space and directly output probability distributions over the classes, so there is no notion of logits.

Using $g$, we can reformulate the problem as
\begin{align*}
    \min_\delta &\phantom{=} \Vert \delta \Vert_2^2 + c 
    \cdot g(x+\delta)\\
    \text{s.t.} &\phantom{=} x + \delta \in [0,1]^n
\end{align*}
where $c > 0$ is a suitably chosen hyperparameter. To make the problem unconstrained, we can introduce a variable $w$ and write $\delta  = \frac{1}{2}(\tanh{w}+1) - x$; since $-1 \leq \tanh{w} \leq 1$, we have that $0 \leq x + \delta \leq 1$. This gives the unconstrained objective:
\begin{equation}
    \min_w \phantom{=} \Vert \tfrac{1}{2}(\tanh{w}+1) - x \Vert_2^2 + c \cdot g(\tfrac{1}{2}(\tanh{w}+1))
\end{equation}
which can be solved using standard optimization routines. We use the Adam optimizer (\cite{kingma2014adam}).

\subsubsection{Adversarial Training}

To train adversarially robust networks, we use projected gradient descent (PGD) adversarial training (\cite{madry2017towards}) on our models. Motivated by guaranteeing robustness to first-order adversaries, PGD training applies IFGSM to images with a random initial perturbation during training. Hence, the model is trained on adversarial examples. More precisely, PGD training consists of two steps each batch: first, $x$ is randomly perturbed
\[ x^* = x + U(-\epsilon,\epsilon),\]
where $U$ is a uniform random perturbation. Then, IFGSM is applied 
\[ x_{i+1} = \text{clip}_{x,\epsilon} \bigl\{ x_i + \alpha \cdot \text{sign}(\nabla_{x_i} \Loss(x_{i},y)) \bigr\}, \]
where $x_{0} = x^*$ and the number of iterations and $\alpha$ are hyperparameters. This training procedure, while more expensive, has been shown to be effective at improving robustness in deep learning models compared to training on clean (i.e. natural) images (\cite{madry2017towards}, \cite{wang2021graph}).

\subsubsection{Adversarial Experiments}\label{adv_results}

We evaluate the adversarial robustness of neural networks with the GLL classifier on the MNIST (\cite{mnist}), FashionMNIST (\cite{xiao2017fashion}), and CIFAR-10 (\cite{krizhevsky2009learning}) benchmark datasets, and compare our method to the same networks trained with either a softmax classifier \tcb{or the WNLL classifier (\cite{wang2021graph})}. For all datasets, we normalize the images to have mean zero and standard deviation one, and 
we train and run all adversarial attacks on a 11GB GeForce RTX 2080Ti GPU. We note that the goal of these experiments is not to get state-of-the-art natural accuracies, but rather to demonstrate our method's superior robustness to adversarial attacks compared to the typical MLP classifier without sacrificing performance on natural images. A summary of our experiments, highlighting a handful of our adversarial robustness results, can be found in Table \ref{table:adv_results}, while complete results are shown in Figures \ref{fig:mnist_adv}, \ref{fig:fashionmnist_adv}, and \ref{fig:cifar_adv}. Throughout, ``MLP" refers to a model with a final linear layer mapping from the feature space to the logits followed by a softmax classifier, \tcb{``WNLL" refers to a model with the WNLL classifier}, and ``GLL" refers to a model with our graph learning layer classifier. \tcb{WNLL and GLL are both drop-in replacements for the MLP classifier, and applied on the feature space itself. For these experiments, we do not include any unlabeled samples $\mathcal{B}_u$ in batch training (because the setting is fully supervised), and we do not pretrain to demonstrate that GLL-based models can be trained from scratch with desireable natural and robust accuracies.} 









\begin{table}[!h]
\centering
\begin{tabular}{ llcccc } 
\toprule
Dataset & Model & Natural & FGSM & IFGSM & CW \\
\hline

\multirow{4}{3.5em}{MNIST} & GL Robust & 0.55 & \textbf{2.28} & \textbf{2.30} & \textbf{1.16} \\ 
& \tcb{WNLL Robust} & 0.58 & 2.48 & 2.97 & 2.72 \\
& MLP Robust & 0.80 & 4.83 & 6.43 & 2.59 \\ 
& GL Natural & \textbf{0.47} & 3.49 & 3.56 & 1.29 \\ 
& \tcb{WNLL Natural} & 0.50 & 3.54 & 4.64 & 3.19 \\
& MLP Natural & 0.75 & 5.74 & 8.48 & 2.76 \\
\hline
\multirow{4}{3.5em}{Fashion-MNIST} & GL Robust & 8.96 & 11.93 & 12.83 & \textbf{16.19}\\ 
& \tcb{WNLL Robust} & 8.11 & \textbf{10.97} & \textbf{11.31} & 18.55 \\
& MLP Robust & 7.92 & 13.50 & 14.33 & 19.00 \\ 
& GL Natural & 7.86 & 34.56 & 38.19 & 42.81 \\ 
& \tcb{WNLL Natural} & \textbf{6.44} & 21.46 & 39.33 & 74.59\\
& MLP Natural & 8.35 & 59.19 &  81.70  & 37.99 \\
\hline
\multirow{4}{3.5em}{CIFAR-10} & GL Robust & 9.40 & \textbf{19.50} & \textbf{20.38} & \textbf{21.91}\\
& \tcb{WNLL Robust} & 9.34 & 25.22 & 28.36 & 50.35 \\
& MLP Robust & 8.71 & 26.05 & 28.96 & 24.06 \\ 
& GL Natural & 5.71 & 40.95  & 45.59 & 22.77 \\ 
& \tcb{WNLL Natural} & 5.13 & 58.20 & 92.33 & 93.87 \\
& MLP Natural & \textbf{5.02}  & 49.25 & 91.79 & 47.00 \\

\bottomrule

\end{tabular}
\caption{Summary of adversarial results of models trained with our method compared to \tcb{WNLL and} the standard MLP classifier, on both robustly and naturally trained models. We report natural \tcb{error rate} (i.e. no attacks), and adversarial \tcb{error rate} on FGSM, IFGSM, and CW attacks. Best \tcb{error rates} are in bold. For MNIST, we report $\epsilon = 0.3$ for the FGSM and IFGSM attacks, and $\epsilon = 0.05$ for the other datasets. We report CW results for $c = 20$. We report more complete results across a range of values for $\epsilon$ and $c$ in Figures \ref{fig:mnist_adv}, \ref{fig:fashionmnist_adv}, and \ref{fig:cifar_adv}. We include the natural \tcb{error rates} to show that the GLL does not sacrifice performance in this regime, while also being significantly more robust to adversaries across datasets. \tcb{One mild exception is WNLL slightly outperforms GLL for the fast gradient attacks on FashionMNIST, however Figure \ref{fig:fashionmnist_adv} shows that in the case of IFGSM, GLL is much better against stronger attacks.} The slight increase in natural test error on robustly trained models compared to naturally trained models has been observed previously in \cite{madry2017towards}; however, as expected, the robust models perform significantly better on adversarially-perturbed data than their naturally-trained counterparts.}
\label{table:adv_results}

\end{table}

For MNIST, we use the same ``small CNN" architecture as in \cite{wang2021graph}: four convolutional layers with ReLU activations after each layer and maxpooling after the second and fourth layer, followed by three linear layers and a softmax classifier. We train for 100 epochs using the Adam optimizer (\cite{kingma2014adam}) with an initial learning rate of $0.01$ which decays by a factor of $0.1$ every 25 epochs. For the robustly trained models, we run PGD training with $\epsilon = 0.3$ and $\alpha = 0.01$ for 5 iterations. In training, we use a batch size of 1000, and for GLL \tcb{and WNLL} we use 100 labeled ``base" points per batch (i.e. 10 per class), which are randomly sampled each epoch. 

To evaluate experimental robustness, we attack the test set with the FGSM, IFGSM, and CW attacks. When using FGSM and IFGSM on GLL, we embed the entire test set of 10000 digits along with 10000 randomly selected data points from the training set (i.e. 1000 per class) when running the attacks. We run CW in batches of 1000 test and 1000 training points due to computational constraints; in general graph learning methods perform better with more nodes on the graph as it better approximates the underlying data manifold. \tcb{We use the same testing batch sizes (full batch on FGSM and IFGSM, 1000 for CW) for the MLP and WNLL models, and the same number of base points for WNLL}. We report results for FGSM and IFGSM at varying levels of $\epsilon$, fixing $\alpha = 0.05$ for IFGSM and letting the number of iterations be $5*(\epsilon/\alpha)$, following a similar heuristic as in \cite{kurakin2018adversarial} to allow the adversarial example sufficient ability to reach the boundary of the $\epsilon$-ball. For CW attacks, we let $c$ vary and run 100 iterations of the Adam optimizer with a learning rate of 0.005. Results for each attack, comparing both robustly and naturally trained models with MLP and GLL classifiers, are shown in Figure \ref{fig:mnist_adv}.

\begin{figure}[ht]
\centering
\begin{subfigure}{.5\textwidth}
    \centering
    \includegraphics[width=\textwidth]{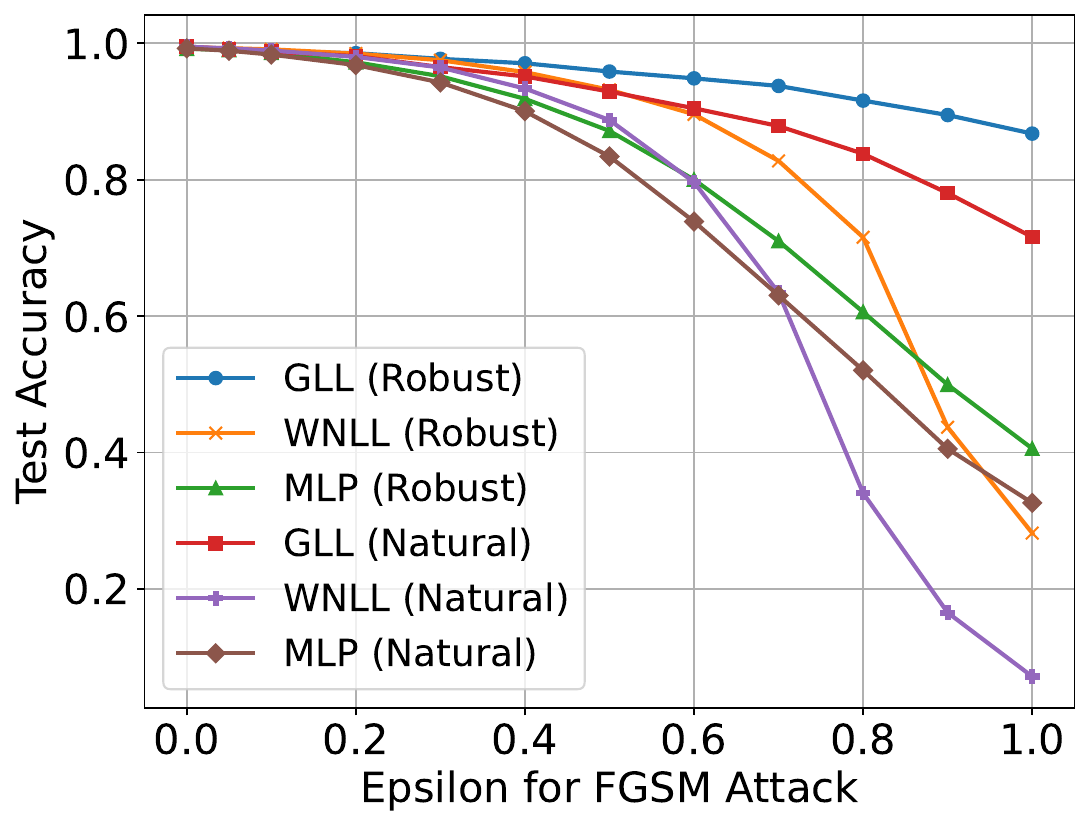}
    \caption{FGSM}
\end{subfigure}%
\begin{subfigure}{.5\textwidth}
    \centering
    \includegraphics[width=\textwidth]{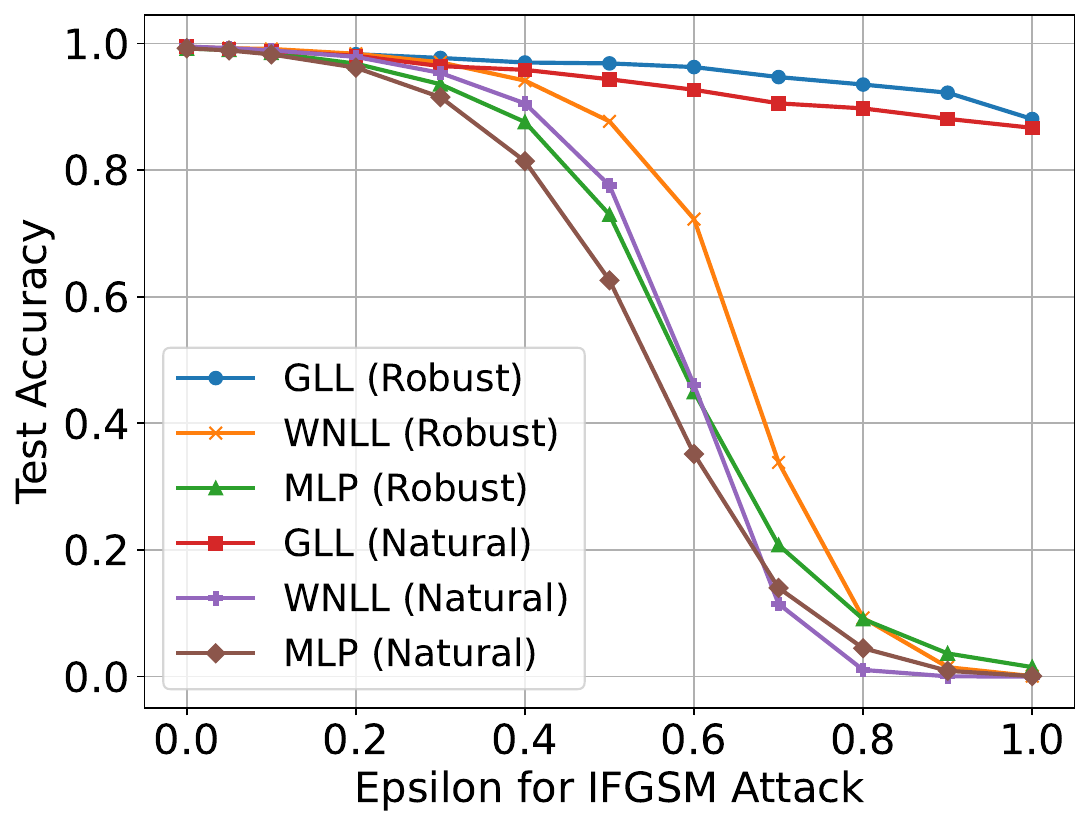}
    \caption{IFGSM}
\end{subfigure}
\begin{subfigure}{.5\textwidth}
    \centering
    \includegraphics[width=\textwidth]{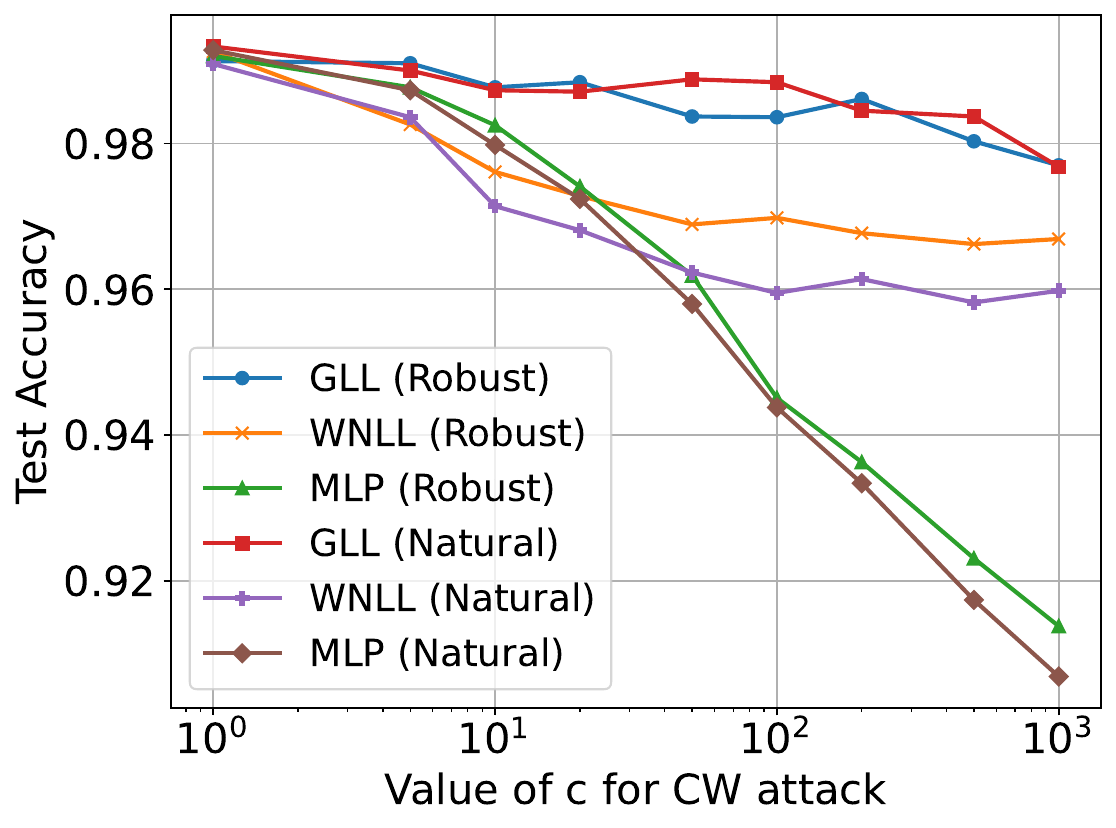}
    \caption{CW}
\end{subfigure}%
\caption{Adversarial robustness to FGSM, IFGSM, and CW attacks on the MNIST dataset. The backbone network is the same in all cases; the only difference is the classifier (either MLP with softmax, \tcb{WNLL}, or our novel GLL). We report results for both robustly and naturally trained models.}
\label{fig:mnist_adv}
\end{figure}

For FashionMNIST, we use ResNet-18 (\cite{he2016deep}), where again we replace the final linear layer and softmax with GLL. We again use the Adam optimizer for 100 epochs, with an intial learning rate of $0.01$ which decays by a factor of $0.5$ every 10 epochs. For the robustly trained models, we run PGD training with $\epsilon = 0.05$ and $\alpha = 0.01$ for 5 iterations. In training, we use a batch size of 2000, and for GLL \tcb{and WNLL} we use 200 labeled ``base" points per batch. We attack in batches of 250 test points, with 500 training points for GLL \tcb{and WNLL}; this was chosen as it was the largest batch size we could use given our computational resources. The hyperparameters for the attacks are identical to those for MNIST. We report results in Figure \ref{fig:fashionmnist_adv}.

\begin{figure}[ht]
\centering
\begin{subfigure}{.5\textwidth}
    \centering
    \includegraphics[width=\textwidth]{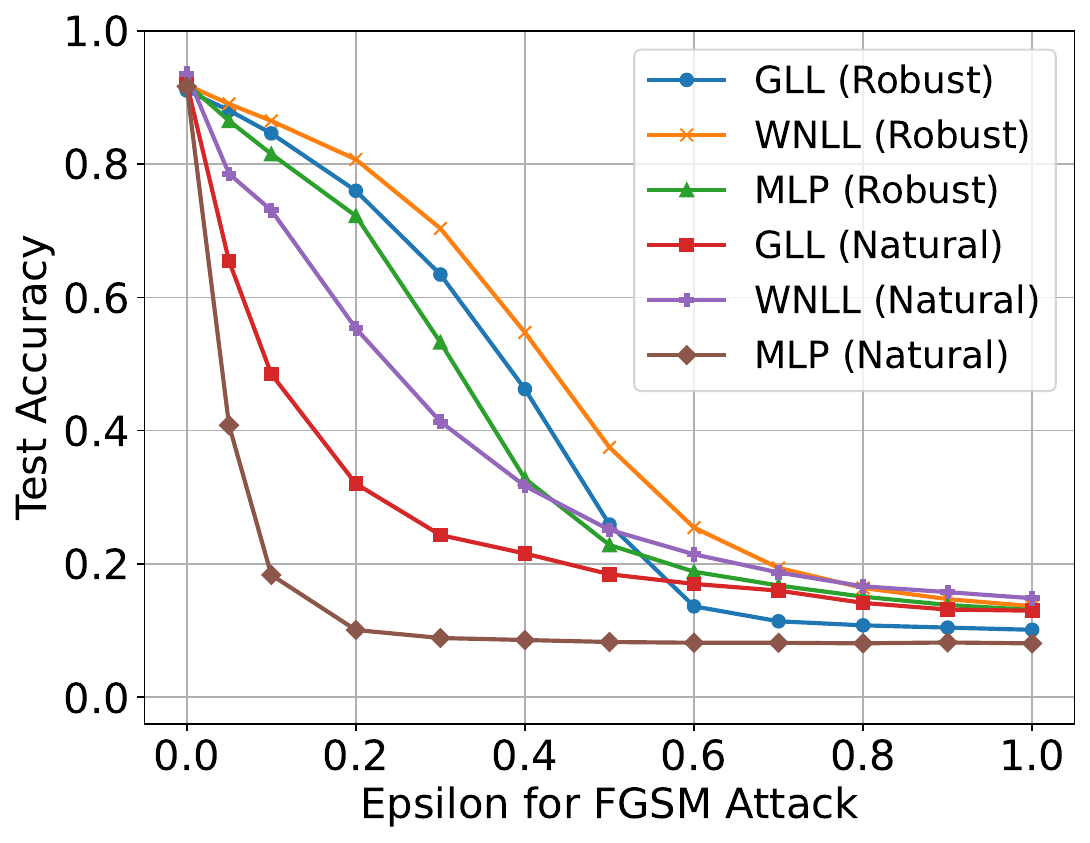}
    \caption{FGSM}
\end{subfigure}%
\begin{subfigure}{.5\textwidth}
    \centering
    \includegraphics[width=\textwidth]{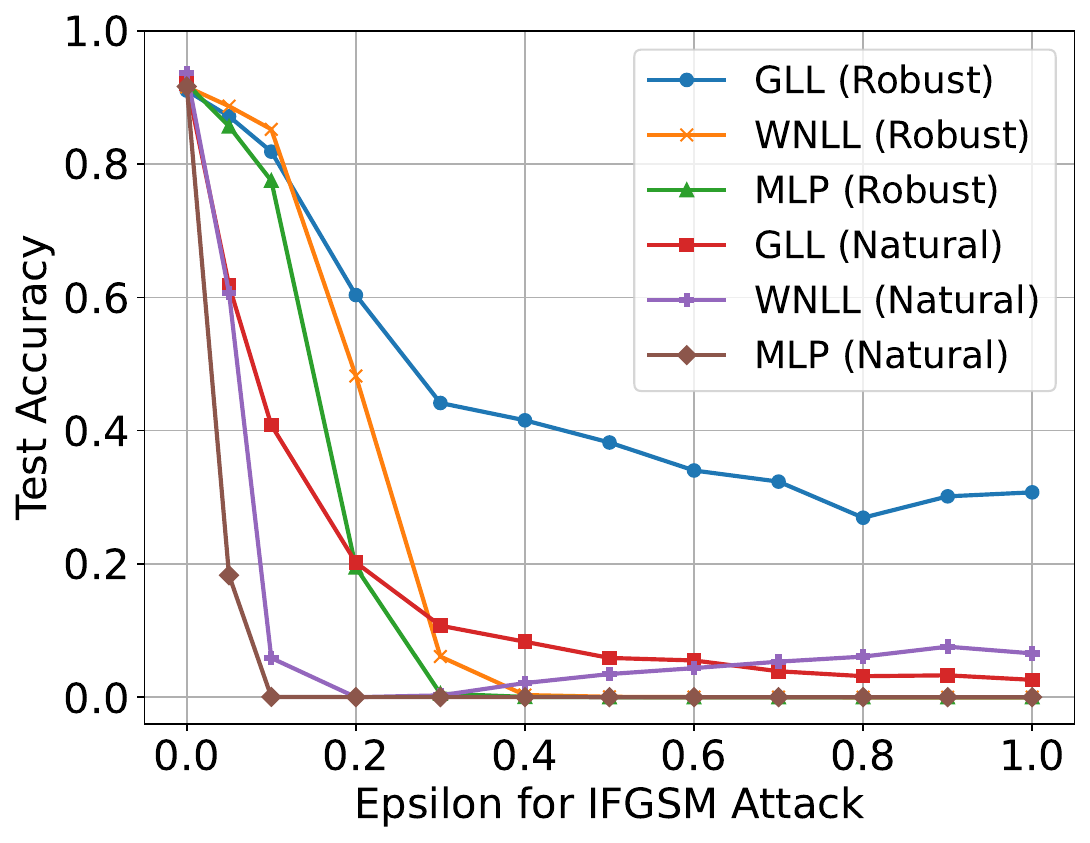}
    \caption{IFGSM}
\end{subfigure}
\begin{subfigure}{.5\textwidth}
    \centering
    \includegraphics[width=\textwidth]{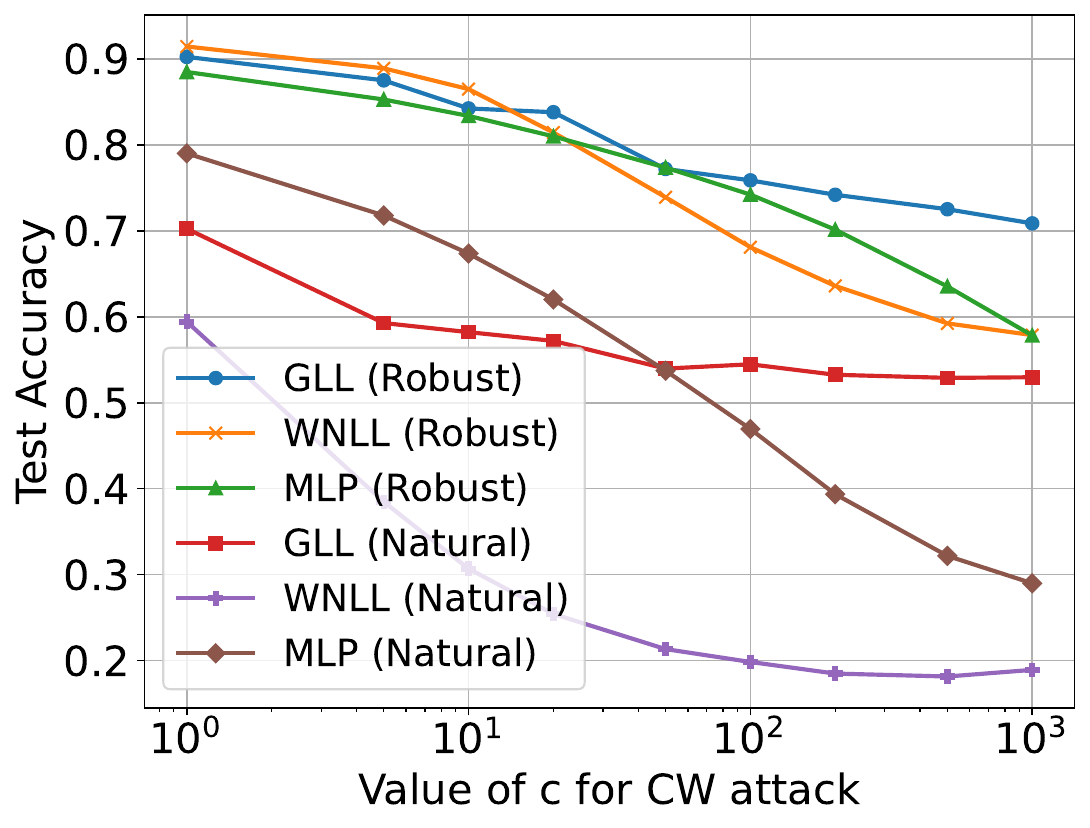}
    \caption{CW}
\end{subfigure}%
\caption{Adversarial robustness to FGSM, IFGSM, and CW attacks on the FashionMNIST dataset. The backbone network is the same in all cases (ResNet-18); the only difference is the classifier (either MLP with softmax,  \tcb{WNLL}, or our novel GLL). We report results for both robustly and naturally trained models.}
\label{fig:fashionmnist_adv}
\end{figure}

Finally, for CIFAR-10, we use PreActResNet-18 (\cite{he2016identity}) as our architecture backbone. We use stochastic gradient descent with momentum set to $0.9$, a weight decay of $0.0005$, and an intial learning rate of $0.1$. We train for 150 epochs and use cosine annealing to adjust the learning rate (\cite{loshchilov2016sgdr}). For the robustly trained models, we run PGD training with $\epsilon = 0.05$ and $\alpha = 0.01$ for 5 iterations. In training, we use a batch size of 200, and for GLL \tcb{and WNLL} we use 100 labeled ``base" points per batch. We attack in batches of 200 test points, with 500 training points for GLL \tcb{and WNLL}; this was chosen as it was the largest batch size we could use given our computational resources. The hyperparameters for the attacks are identical to those for MNIST, except we use 50 iterations of optimization for the CW attacks. We report results in Figure \ref{fig:cifar_adv}.

\begin{figure}[ht]
\centering
\begin{subfigure}{.5\textwidth}
    \centering
    \includegraphics[width=\textwidth]{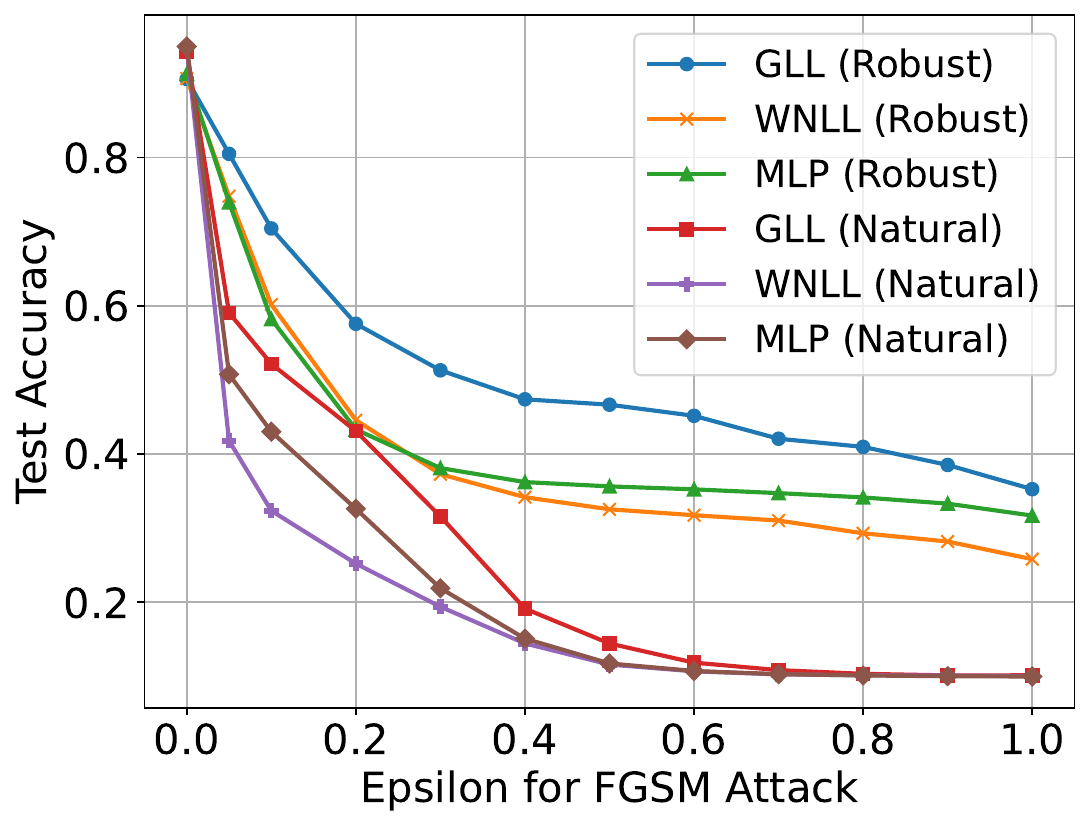}
    \caption{FGSM}
\end{subfigure}%
\begin{subfigure}{.5\textwidth}
    \centering
    \includegraphics[width=\textwidth]{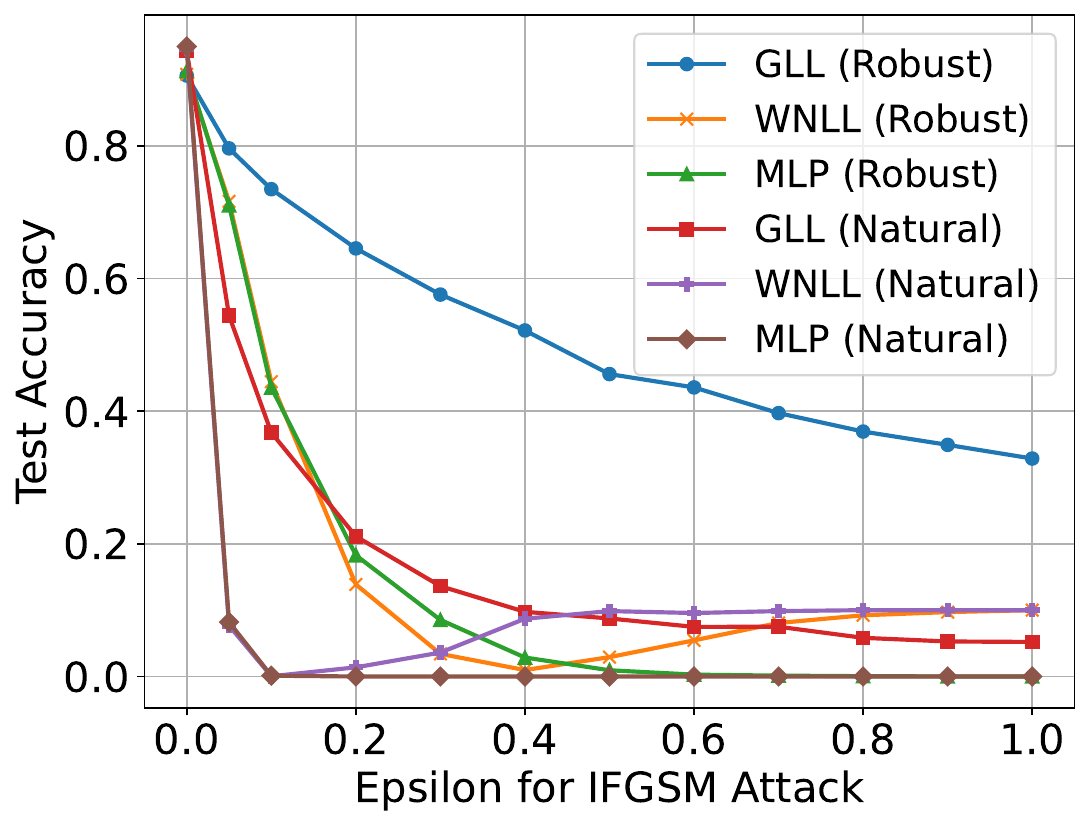}
    \caption{IFGSM}
\end{subfigure}
\begin{subfigure}{.5\textwidth}
    \centering
    \includegraphics[width=\textwidth]{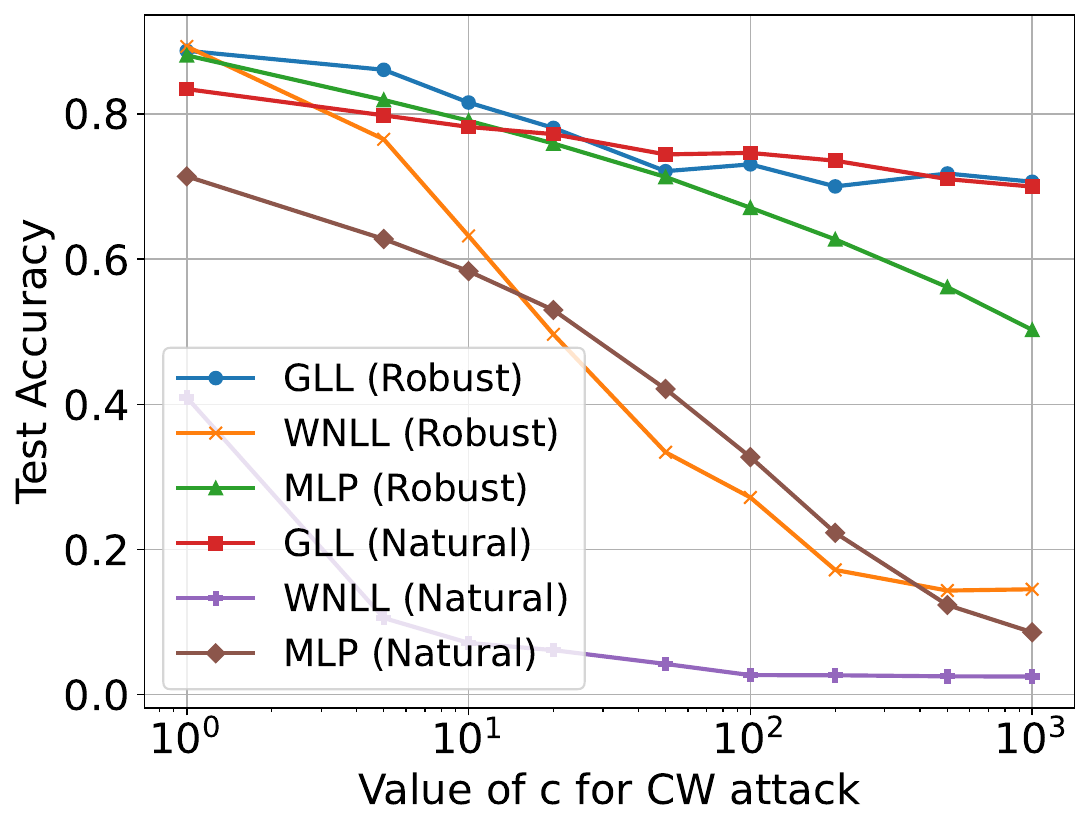}
    \caption{CW}
\end{subfigure}%
\caption{Adversarial robustness to FGSM, IFGSM, and CW attacks on the CIFAR-10 dataset. The backbone network is the same in all cases (PreActResNet-18); the only difference is the classifier (either MLP with softmax, \tcb{WNLL}, or our novel GLL). We report results for both robustly and naturally trained models.}
\label{fig:cifar_adv}
\end{figure}

Taken together, our results (Table \ref{table:adv_results} and Figures \ref{fig:mnist_adv}, \ref{fig:fashionmnist_adv}, and \ref{fig:cifar_adv}) demonstrate that across datasets, training methods (both natural and robust), severity of attacks, and neural network architectures, models trained with GLL are significantly more robust to adversarial attacks, without sacrificing natural accuracy. Laplace learning can be viewed as repeatedly computing a weighted average over a given node's neighbors, and this relational information may be preventing attacks from successfully perturbing the input images to fool the network \tcb{compared to the MLP classifier models}. \tcb{Moreover, our results suggest that tracking the gradients exactly (GLL), rather than approximating them (WNLL), offers a significant increase in robustness. 
GLL's advantage is even more apparent for stronger attacks, whereas MLP and WNLL-based models decay much more quickly toward random guessing or even 0\% accuracy.}
Moreover, performance of our GLL could likely be even further improved with more computational resources, as graph learning techniques tend to improve as the number of nodes (i.e. the batch size) increases.

\begin{figure}[!t]
\centering
\begin{subfigure}{.5\textwidth}
    \centering
    \includegraphics[width=\textwidth]{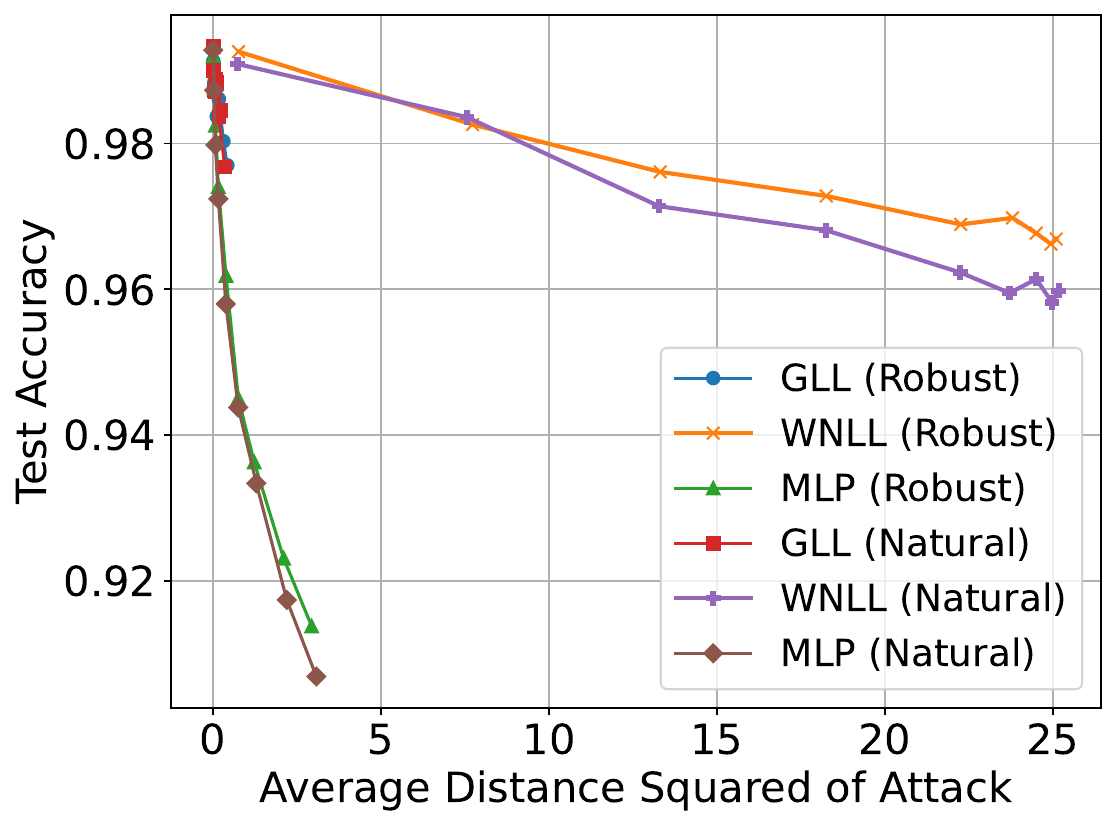}
    \caption{MNIST}
\end{subfigure}%
\begin{subfigure}{.5\textwidth}
    \centering
    \includegraphics[width=\textwidth]{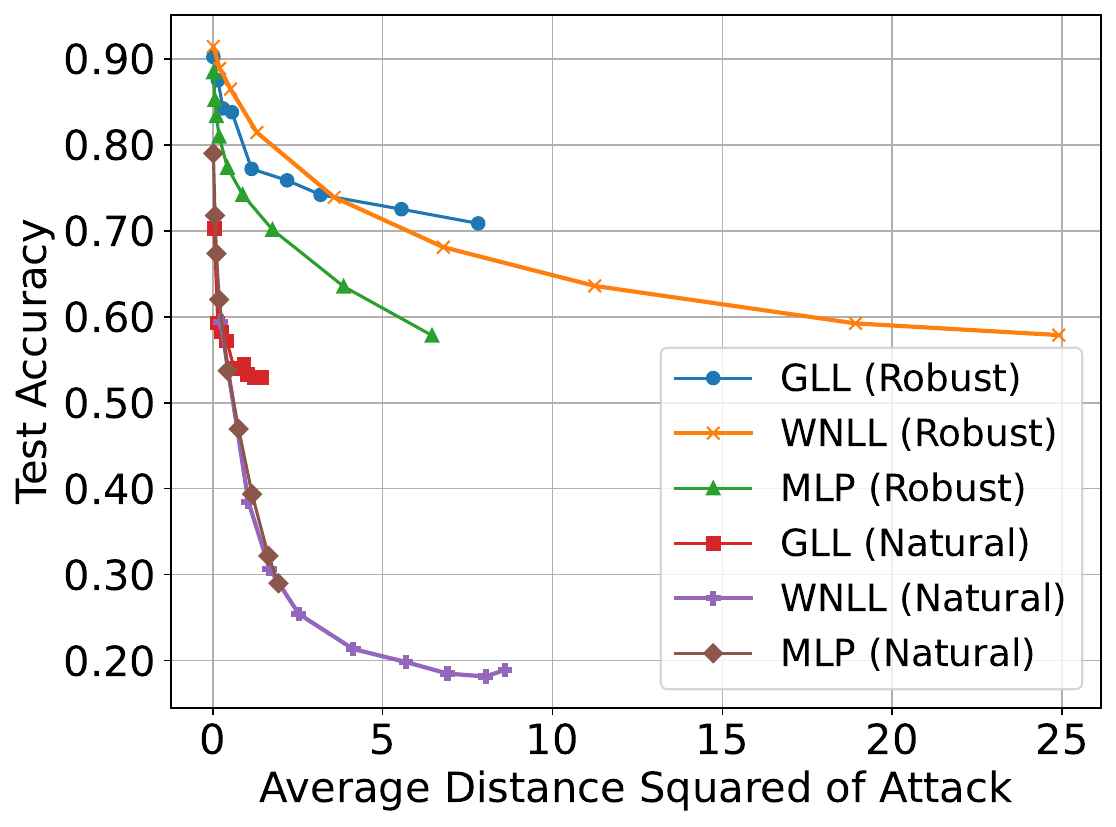}
    \caption{FashionMNIST}
\end{subfigure}
\begin{subfigure}{.5\textwidth}
    \centering
    \includegraphics[width=\textwidth]{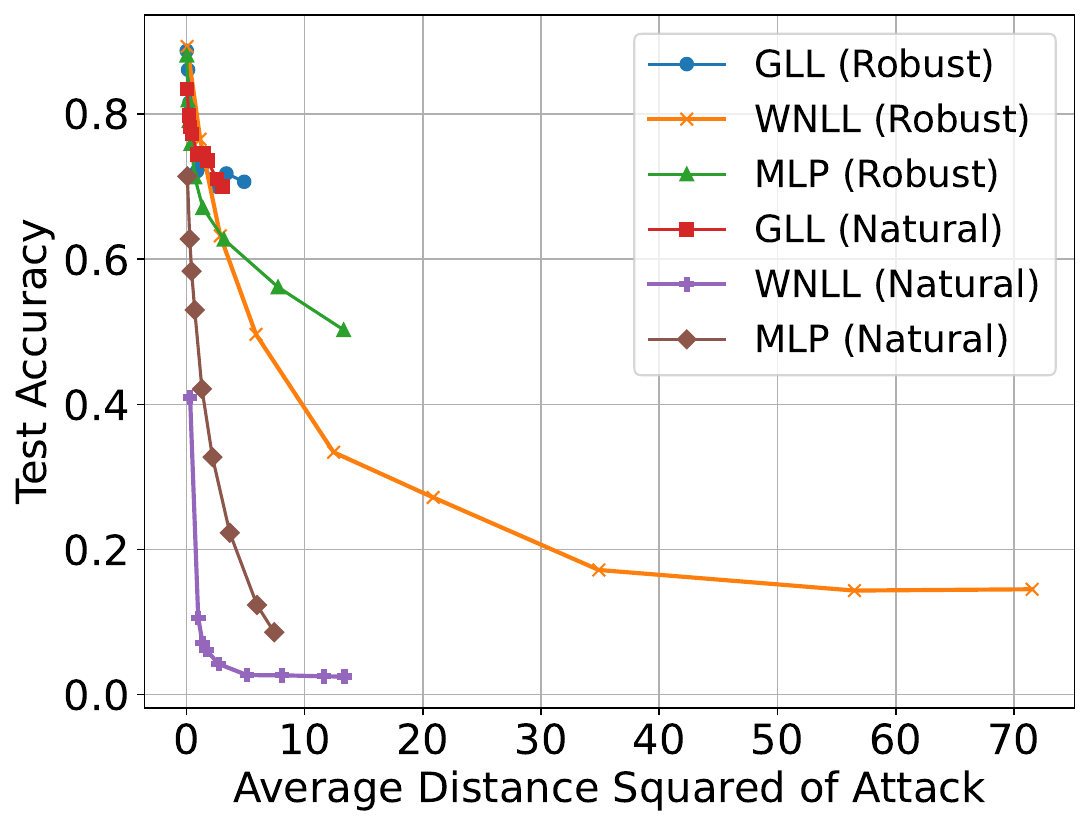}
    \caption{CIFAR-10}
\end{subfigure}%
\caption{Comparison of accuracy vs. average distance of attack for CW attack on all three datasets. Each point represents a different value of $c$. We see that GLL resists the adversary from moving the target images significantly more than the standard MLP or \tcb{WNLL classifier, suggesting that graph-based classification \textit{and} precise tracking of gradients is key for adversarial robustness.}}
\label{fig:cw_dist}
\end{figure}

We further investigate the improved robustness of GLL against CW attacks in Figure \ref{fig:cw_dist} (see also Appendix \ref{app:more_cw}). This figure plots the accuracy versus the average distance (squared) that the adversary moves an image in the test set (instead of the value of $c$). This reveals a striking difference: models trained with the GLL classifier \textit{resist the CW adversary from moving the images} significantly more than the MLP \tcb{or WNLL}-based models, resulting in more robustness to these attacks. \tcb{Interestingly, WNLL is the least resistant to these perturbations. 
Since WNLL uses linear layer gradients and GLL uses precise gradients, these results suggest that the exact backpropagation in GLL aids robustness to these attacks. One hypothesis is that the relational information in GLL's gradients ``trap" data points in neighborhoods with points of the same class. Theoretically guaranteeing GLL's adversarial robustness is an interesting line of future work.}




\section{Conclusion}

Graph-based learning is a powerful class of machine learning techniques that have been developed throughout the 21st century. As research in deep learning has exploded throughout the same time period, there has been significant interest in integrating deep learning with similarity matrix-based graph learning.  This paper gives a precise mathematical framework to do exactly that. We derived the necessary gradient backpropagation equations for general nonlinear elliptic graph Laplace equations, allowing us to fully integrate graph-based classifiers into the training of neural networks. Using this novel framework, we introduced the Graph Learning Layer, or GLL, which replaces the standard projection head and softmax classifier in classification tasks. Experimentally, we demonstrated how the GLL improves upon the standard classifier and previous graph-based layers (WNLL) in several ways. \tcb{Two illustrative examples demonstrated that GLL-based networks can learn the geometry of the underlying data and improve training dynamics in overparameterized settings, respectively. In a suite of large scale experiments, we showed that networks trained with a GLL classifier have improved generalization on benchmark datasets, especially when training labels are limited. Although the improvement in natural accuracy was occasionally modest, later experiments demonstrated the GLL's significant and consistent performance gains for networks in the adversarial setting. This points toward a practical advantage of GLL in safety- and security-critical settings.
Moreover, by comparing to WNLL-based networks, our results also empirically validated the advantage of tracking gradients through graph learning \textit{precisely}, rather than approximately}.

There are many exciting avenues of future work in this area. Given the strong empirical results, theoretically guaranteeing the adversarial robustness of the GLL is an important extension of this work. Graphs are a common setting for active learning techniques (\cite{settles2009active, miller2024model}); the full integration of graph learning into a neural network provides a natural way to do active learning in a deep learning context. Our theoretical work also provides a way to integrate other graph learning algorithms - such as $p$-Laplace learning and Poisson learning - into a deep learning framework, and numerical experiments could yield similarly strong results as reported in this paper. Many performant \tcb{semi-supervised} deep learning techniques utilize psuedolabels generated by graph learning methods (\cite{iscen2019label, sellars2021laplacenet}); we hope the methods presented here will lead to further improvements and insights.


\acks{
HHM and JB were partially supported by NSF research training grant DGE-1829071.
HHM was supported by the National Science Foundation Graduate Research Fellowship Program under Grant No. DGE-2034835. 
JB and ALB were also supported by Simons Math + X award 510776.
ALB and BC were supported by NSF grants DMS-2027277 and DMS-2318817.
BC was supported by the UC-National Lab In- Residence Graduate Fellowship Grant L21GF3606.
JC was supported by NSF-CCF:2212318, the Alfred P. Sloan Foundation, and an Albert and Dorothy Marden Professorship. 
Any opinions, findings, and conclusions or recommendations expressed in this material are those of the author and do not necessarily reflect the views of the National Science Foundation. The authors declare no competing interests.}


\newpage

\appendix

\section{Training Details}\label{app:training_details}

This section provides a detailed discussion of various aspects of the training process. These details are implemented in our experiments (Section~\ref{sec:experiments}), especially in the fully-supervised training for comparison between the MLP and GLL classifiers (Section~\ref{sec:comparison_mlp_gll}).

{\color{blue}
\subsection{Sampling from Base and Unlabeled Datasets}\label{sec:base_dataset_sampling}
In the GLL-based training process, our approach to constructing mini-batches differs from classical fully supervised training. As described in Section~\ref{sec:sampling_and_training_on_batches}, each mini-batch \(\mathcal{B}\) contains a base dataset \(\mathcal{L}_b \subset \mathcal{L}\) together with additional labeled and unlabeled samples \(\mathcal{B}_l\) and \(\mathcal{B}_u\). For each mini-batch, in addition to the images whose labels are used to compute the loss (the samples in \(\mathcal{B}_l\)), we also require base images and their associated base labels (the samples in \(\mathcal{L}_b\)), which act as labeled nodes for graph-based label propagation. While this base dataset can be kept fixed throughout training, we also consider two update strategies for \(\mathcal{L}_b\) that can further enhance performance.

First, in the \emph{random resampling} strategy, we update the base dataset at the beginning of each epoch by sampling a random subset \(\mathcal{L}_b \subset \mathcal{L}\) with \(|\mathcal{L}_b| = N_b\). To provide stable supervision for label propagation, \(\mathcal{L}_b\) is chosen so that the class proportions in \(\mathcal{L}_b\) match the class proportions in the full labeled set \(\mathcal{L}\), for example by stratified sampling.

Second, in an \emph{uncertainty-based} strategy, we explicitly score labeled samples by how informative they would be if selected into \(\mathcal{L}_b\). In the graph Laplace learning framework, the base dataset \(\mathcal{L}_b\) represents the labeled nodes and can be viewed as the sources in a label propagation process. An ideal base dataset should be relatively dispersed in the graph and lie near decision boundaries between clusters, so that it provides informative supervision to the unlabeled nodes.

Let \(\mathcal{X} = \mathcal{L} \cup \mathcal{U}\) denote the full training set, where \(\mathcal{L}\) is the labeled subset and \(\mathcal{U}\) is the unlabeled subset, and let \(N_b\) be the desired size of the base dataset. Assume that at a given epoch, the GLL prediction for a labeled sample \(\mathbf{x}_j \in \mathcal{L} \setminus \mathcal{L}_b\) is
\[
\bfu_j = \bigl(u_j^1, u_j^2, \ldots, u_j^C\bigr),
\]
where \(C\) is the number of classes and \(u_j^c\) is the predicted confidence for class \(c\). We can define two uncertainty-based scores for \(\bfu_j\):
\begin{align}
\text{Entropy score} &= -\sum_{c=1}^C u_j^c \log(u_j^c), \label{eq:entropy_score}\\
\text{L2 score} &= 1 - \|\bfu_j\|_2. \label{eq:l2_score}
\end{align}
Both scores are larger when the prediction \(\bfu_j\) is more uncertain. In this second strategy, we select the new base dataset \(\mathcal{L}_b\) as the \(N_b\) labeled samples in \(\mathcal{L}\) with the highest scores, according to either \eqref{eq:entropy_score} or \eqref{eq:l2_score}.

However, there is a risk that the highest-scoring samples may cluster in a small region of the graph, providing redundant information. A more effective variant is to employ the LocalMax batch active learning method (\cite{chapman2023novel,chen2024batch}). In this approach, nodes in the graph are first assigned uncertainty scores, and then a subset is selected that both has high scores and satisfies a local maximum condition in the graph. This discourages the selection of nodes that are too close to each other and leads to a more diverse and informative base dataset \(\mathcal{L}_b\) for GLL.

\subsection{Contrastive Learning and Warm Up}\label{sec:appendix_simclr}
Prior to GLL-based fully supervised training for the CIFAR-10 dataset, we initialize the network's feature encoder using pre-trained weights from SimCLR (\cite{chen2020simple, chen2020big}). SimCLR is a framework for self-supervised contrastive learning of visual representations. For a neural network $f$ and a minibatch of $m$ samples $\{\x_1,\x_2,\ldots,\x_m\}$, each $\x_k$ is augmented into pairs $\tilde{\x}_{2k-1}, \tilde{\x}_{2k}$. These are processed through the network: $\z_{2k-1} = f(\tilde{\x}_{2k-1}), \z_{2k} = f(\tilde{\x}_{2k})$. The SimCLR loss is defined as:
\begin{equation}\label{eq: SimCLR_loss}
\begin{split}
    &\mathcal{L}_\mathrm{sim} = \frac{1}{2m}\sum_{k=1}^{m}[\ell(2k,2k-1)+\ell(2k-1,2k)],\\
    &\ell(i, j) = -\log \frac{\exp(g(\z_i,\z_j)/\tau)}{\sum_{k=1, k\neq i}^{2m} \exp(g(\z_i,\z_k)/\tau)},
\end{split}
\end{equation}
where $\tau$ is a temperature parameter, and $g(\z_i,\z_j) = \z_i^\top \z_j / (\|\z_i\|\|\z_j\|)$ is the cosine similarity. Essentially, SimCLR teaches the network to increase the similarity between embedded pairs and reduce the similarity between the other embedded pairs. It is worth noting that the cosine similarity used in the SimCLR loss \eqref{eq: SimCLR_loss} closely resembles the similarity measure we employ when constructing graphs in graph Laplace learning. This similarity suggests that the feature vectors generated by a network trained with SimCLR already possess a favorable clustering structure. Consequently, utilizing SimCLR to pretrain the feature encoder in our network can significantly reduce the difficulty of subsequent training stages.

In the supervised setting, SupCon (\cite{khosla2020supervised}) can be used instead of SimCLR. With full access to the underlying labels, the goal is now to maximize the similarity between all embedded pairs within the same class and minimize the similarity of samples across classes. The loss function is 

\begin{equation}\label{eq: SupCon_loss}
\mathcal{L}_\mathrm{sup}=\sum_{i \in I} \frac{-1}{|P(i)|} \sum_{p \in P(i)} \log \frac{\exp \left(z_i \cdot z_p / \tau\right)}{\sum_{a \in A(i)} \exp \left(z_i \cdot z_a / \tau\right)}
\end{equation}
where \(P(i) \equiv\left\{p \in A(i): \tilde{\boldsymbol{y}}_p=\tilde{\boldsymbol{y}}_i\right\}\) is the set of indices for all samples that belong to the same class as sample \(i\) (excluding itself) and \(A(i)\) is the set containing all indices except \(i\). 

We employed augmentation techniques in both the SimCLR pre-training process and the subsequent fully supervised training. For data augmentation in both stages, we utilized the strong augmentation method defined in the SupContrast GitHub Repository\footnote{\url{https://github.com/HobbitLong/SupContrast/tree/master}}. The use of data augmentation is helpful for two main reasons. Firstly, in contrastive learning, augmentation creates diverse views of the same instance, which is fundamental to learning robust and invariant representations. This process helps the model to focus on essential features while disregarding irrelevant variations. Secondly, for general training tasks, augmentation enhances the model's ability to generalize by exposing it to a wider variety of data representations, thereby reducing overfitting and improving performance on unseen data.

For the purposes of warming up the GLL, either SimCLR or SupCon can be used depending on the framework. In our work, we have even seen success in balancing a combination of the two methods to utilize the labels for a strong embedding while not overfitting too strongly. The pretraining loss is taken to be the same mixed objective as in \eqref{eq:pretrain_loss},
\[
    \mathcal{L}_\mathrm{pretrain} = \gamma \mathcal{L}_\mathrm{sim} + (1-\gamma)\mathcal{L}_\mathrm{sup}
\]
for some \(\gamma \in [0,1]\).

\subsection{Data Augmentation Details}\label{sec:appendix_augmentation}
We describe here the data augmentation policies used during both contrastive pretraining and supervised training.

\paragraph{Augmentation pool for color images.}
For color image datasets, we adopt a policy similarly to the RandAugment \cite{cubuk2019randaugment}. Let $\mathcal{T}$ denote a pool of primitive transformations including
\begin{itemize}
    \item Photometric operations: autocontrast, histogram equalization, brightness, contrast, color, sharpness, posterization, solarization;
    \item Geometric operations: rotations, horizontal shears and vertical shears, horizontal translations and vertical translations;
    \item The identity transform.
\end{itemize}
For each image in a mini-batch, we randomly sample a fixed number of transformations from $\mathcal{T}$ with replacement and apply them sequentially. Each selected transform is assigned a magnitude drawn uniformly from a transform-specific range. For example, rotation angles up to a few tens of degrees, shear factors bounded by a small absolute value, or contrast/brightness factors within a bounded interval. After these operations, we apply a cutout transform that removes a randomly located square patch of the image, with the side length sampled as a fraction of the image size. All sampling is done independently for each image and for each view, so that different augmented views of the same sample are stochastically decorrelated.

\paragraph{Augmentation pool for grayscale images.}
For grayscale datasets, we use a variant of the above policy tailored to single-channel images. In this case the transformation pool contains only operations that are meaningful on grayscale inputs:
\begin{itemize}
    \item Geometric transforms: rotations, shears, and translations in the horizontal and vertical directions;
    \item Photometric transforms: inversion, histogram equalization, solarization, brightness, contrast, and sharpness adjustments;
    \item Cutout with a square mask whose side length is sampled up to a fixed proportion, filled with a constant gray or black intensity.
\end{itemize}
As in the color case, we randomly select a small number of transforms for each image, sample their magnitudes from predefined ranges, and apply them sequentially. The parameters of the grayscale policy such as the number of operations, the global magnitude index, and optional magnitude jitter are shared across the dataset, while the specific operations and their sampled magnitudes vary independently from image to image.

\paragraph{Usage in pretraining and supervised training.}
During contrastive pretraining, the above policies are used to create multiple stochastic views of each input example. All samples in a pretraining mini-batch participate in the SimCLR loss via their augmented views, while only the labeled samples in the batch participate in the SupCon loss. In the subsequent supervised training stage with GLL, we continue to apply the same augmentation pipeline to every training sample before feeding it through the encoder and the classifier. In this way, both the contrastive pretraining and the supervised training phases benefit from the same set of strong yet label-preserving data transformations.
}

\section{Supplementary Adversarial Results}

\subsection{Visualization of Attacks}

Adversarial attacks have to balance a trade-off between effectiveness (measured by decay in accuracy of the model) and detectability. Detectability is often quantified by a distance metric, but can be evaluated qualitatively - attacks that are successful but are not noticeable to the human eye are desireable, as this is the type of attack that poses the biggest threat in real-world applications. In Figure \ref{fig:adv_viz}, we visualize natural and attacked images (along with their corresponding attacks) for FGSM, IFGSM, and CW attacks on MNIST of the naturally trained GLL model. Comparing the left and right-most columns, we see that FGSM and IFGSM models are easily detectable by the human eye, whereas the CW attacks are much more focused, and much less noticeable to the human eye. CW attacks are known to be much ``smarter" than fast gradient methods (at the cost of more compute); GLL's overall robustness to all three attacks is a strong result for our proposed model.

\begin{figure}[ht]
\centering
\begin{subfigure}{.75\textwidth}
    \centering
    \includegraphics[trim={0 3.75cm 0 3.75cm}, clip,width=\textwidth]{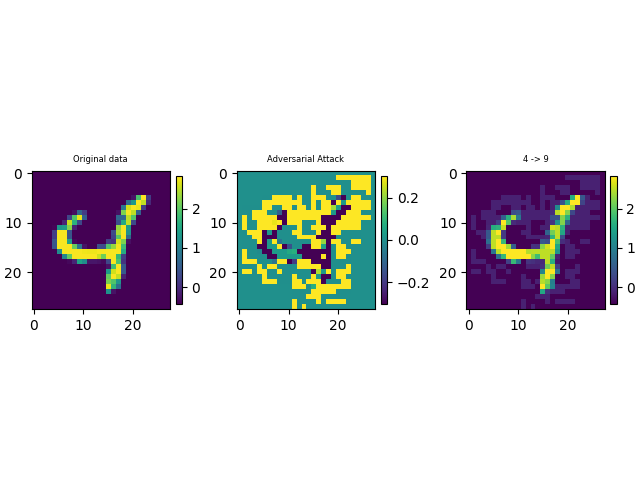}
    \caption{FGSM}
\end{subfigure}%
\hspace{1cm}
\begin{subfigure}{.75\textwidth}
    \centering
    \includegraphics[trim={0 3.75cm 0 3.75cm}, clip,width=\textwidth]{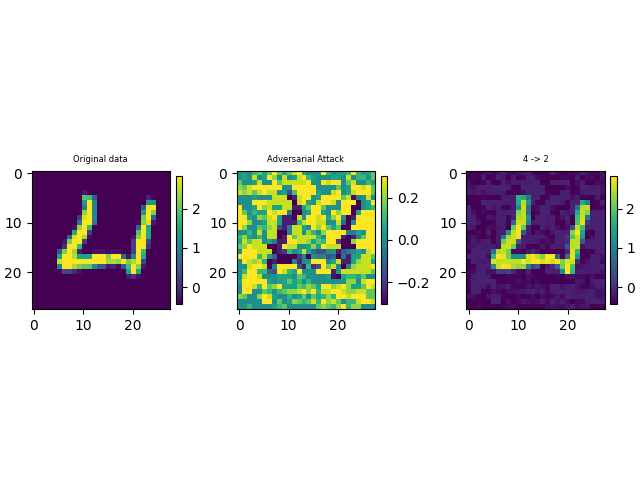}
    \caption{IFGSM}
\end{subfigure}
\hspace{1cm}
\begin{subfigure}{.75\textwidth}
    \centering
    \includegraphics[trim={0 3.75cm 0 3.75cm}, clip,width=\textwidth]{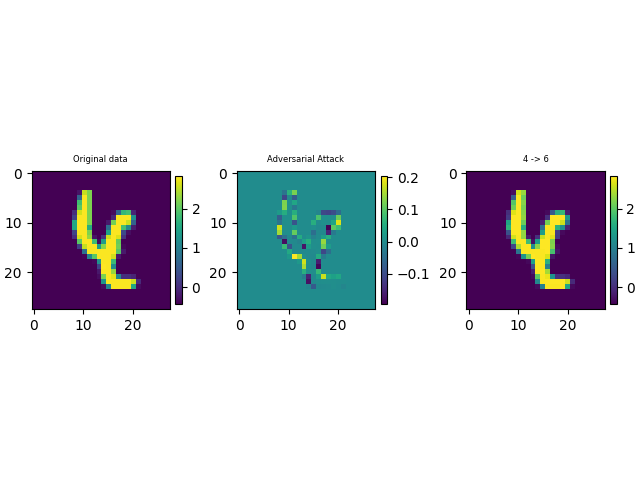}
    \caption{CW}
\end{subfigure}%
\caption{Visualization of FGSM, IFGSM, and CW attacks. The left, middle, and right columns show the original image, the attack, and the adversarial image, respectively. The title in the rightmost column indicates the original classification and the classification after the attack. These images were chosen to demonstrate attacks where the model was fooled by the advesary.}
\label{fig:adv_viz}
\end{figure}

\subsection{Replacing an MLP Classifier with GLL in Testing}

Recall the MNIST setting from Section \ref{adv_results}, where we used a ``small CNN". To further demonstrate the adversarial robustness of GLL, we replaced the final linear layer and softmax classifier used in the "small CNN" with GLL \textit{during the attacks and testing} and \textit{only during testing}. That is, we trained in a normal fashion, before replacing the softmax classifier with GLL for either the attacking (that is, the attacks use the gradient information from GLL) and inference phases, or just for the inference phase. The results are presented in Figure \ref{fig:glhead}. We see that - surprisingly - when we use the gradients from GLL for the attacks, the model is much more robust to FGSM. This supports the idea that GLL is more difficult to attack than the usual linear layer and softmax classifier.

\begin{figure}
    \centering
    \includegraphics[width=.5\textwidth]{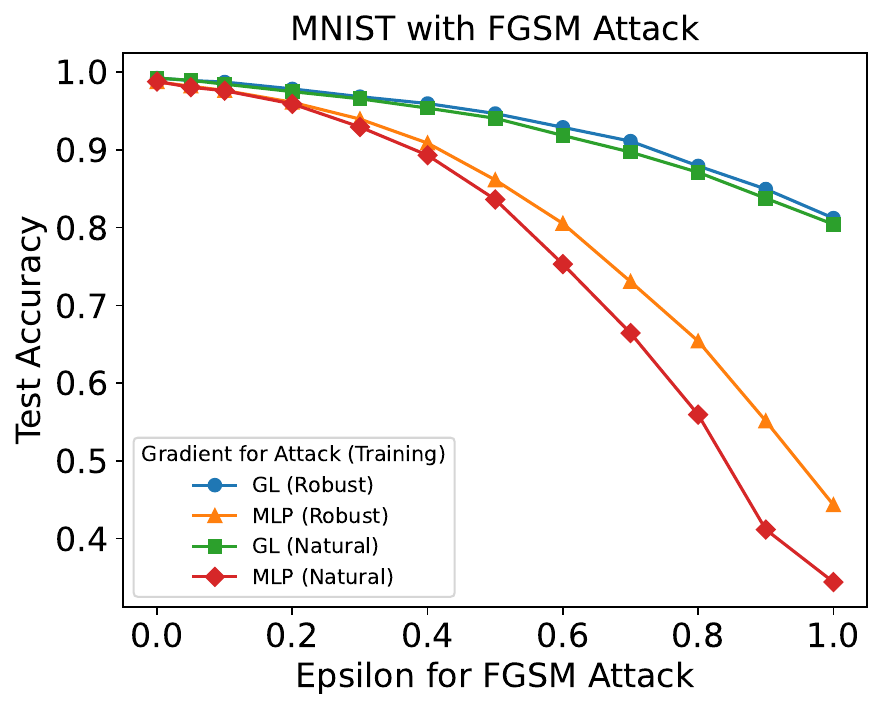}
    \caption{FGSM results on MNIST. The backbone network was trained in the standard way (with a softmax classifier). In the legend, MLP indicates that the FGSM attacks used the gradients from the MLP classifier (ie a linear layer and a softmax), whereas GLL used our graph learning layer. In all cases, the final classifier on the attacked images was GLL. Hence, the only difference between the results is which gradient was used for the attack. We see that the models that utilized the GLL gradients were significantly more robust to the FGSM attacks than using the standard MLP gradients.}
    \label{fig:glhead}
\end{figure}

\subsection{Further CW Experiments}\label{app:more_cw}

In Figure \ref{fig:cw_dist}, we showed that for a fixed $c$, GLL resists the adversary from moving the images compared to the softmax classifier. However, we also wanted to demonstrate that across \textit{distances} in the CW attacks, GLL is also superior. To do so, we ran further experiments where we increased the value of $c$ so that the average distance increased further to the right. The results are shown in Figure \ref{fig:cw_dist_bonus}. For MNIST, we ran additional experiments with $c = 10^4, 10^5, ..., 10^9$. For CIFAR-10, we ran additional experiments with $c = 2000, 5000, 10^4, 10^5, 10^6$.

\begin{figure}[ht]
\centering
\begin{subfigure}{.5\textwidth}
    \centering
    \includegraphics[width=\textwidth]{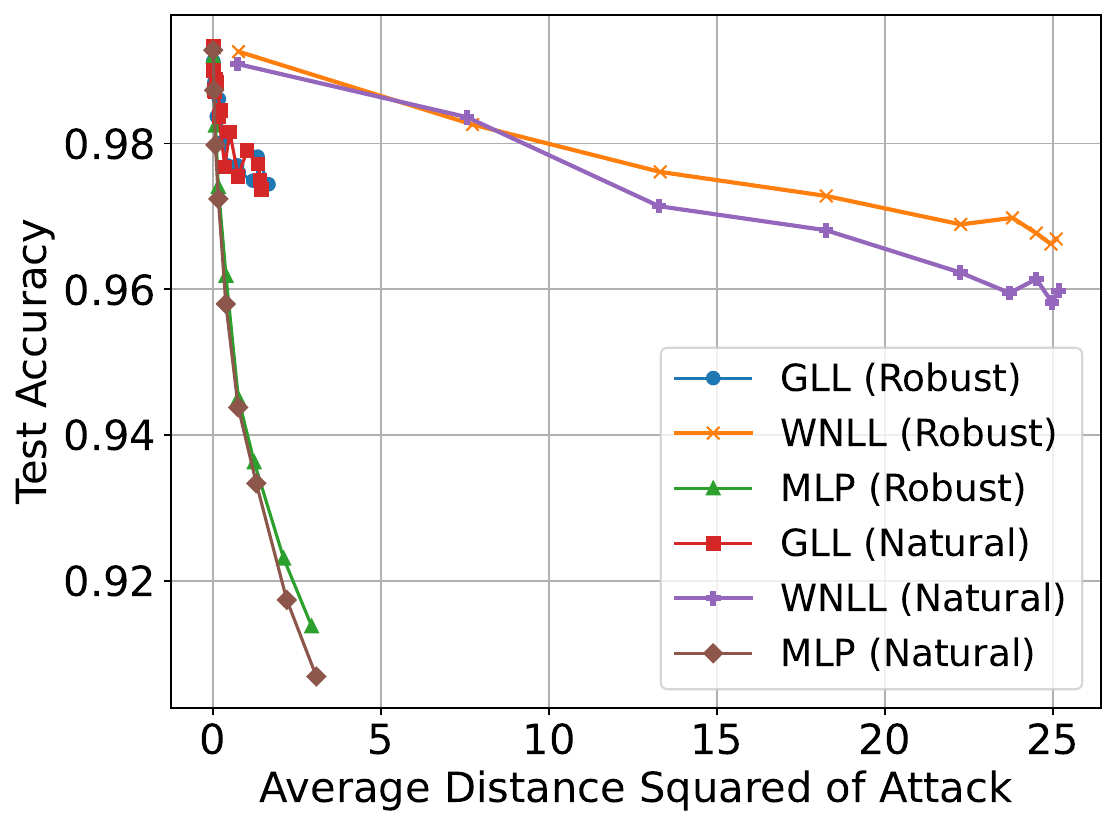}
    \caption{MNIST}
\end{subfigure}%
\begin{subfigure}{.5\textwidth}
    \centering
    \includegraphics[width=\textwidth]{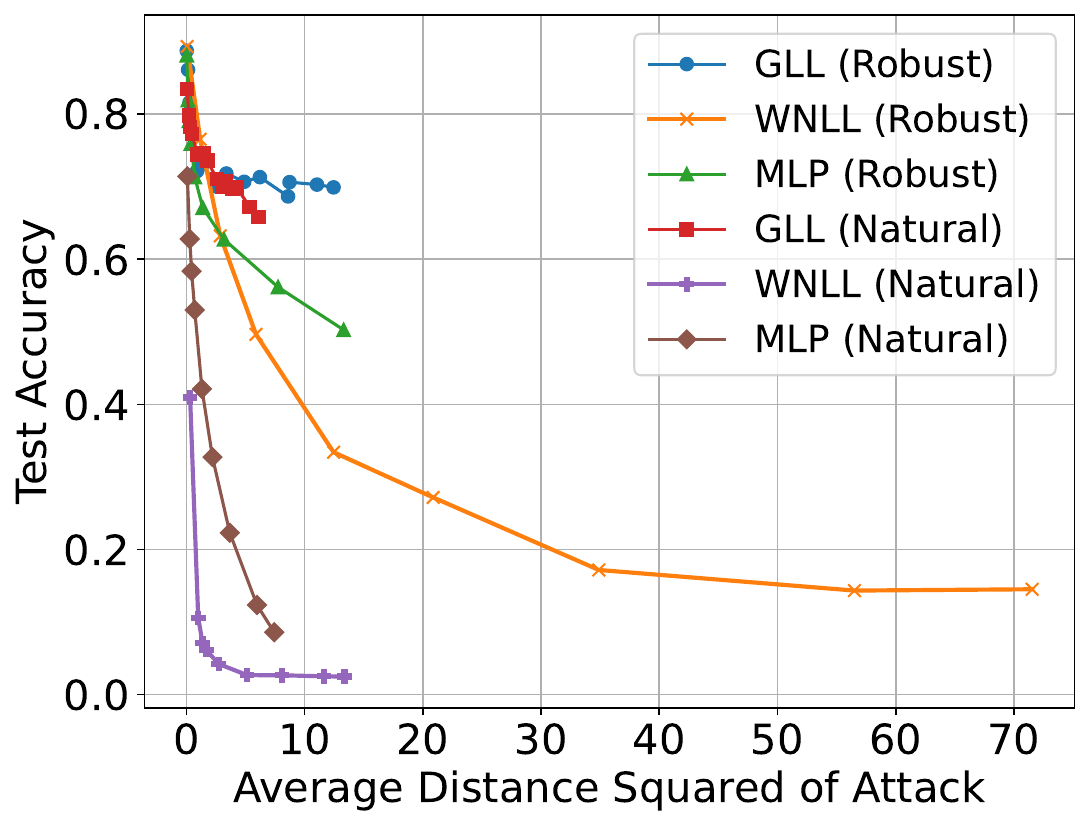}
    \caption{CIFAR-10}
\end{subfigure}
\caption{Comparison of accuracy vs. average distance of attack for CW attack on both datasets. Each point represents a different value of $c$, where we plot additional values of $c$ for the GLL-based models compared to Figure \ref{fig:cw_dist}. We see that GLL is still superior \tcb{at resisting perturbations by CW attack, even when $c$ is orders of magnitude larger than for WNLL or MLP, and even when controlling for distance on CIFAR-10.}}
\label{fig:cw_dist_bonus}
\end{figure}


\vskip 0.2in
\bibliography{references,references2}

\end{document}